\def\eqref#1{(\ref{#1})}
\def\1{\bm{1}}
\def\rvm{{\mathbf{m}}}
\def\rvx{{\mathbf{x}}}
\def\rvy{{\mathbf{y}}}
\def\rvz{{\mathbf{z}}}
\DeclareMathAlphabet{\mathsfit}{\encodingdefault}{\sfdefault}{m}{sl}
\SetMathAlphabet{\mathsfit}{bold}{\encodingdefault}{\sfdefault}{bx}{n}
\def\gA{{\mathcal{A}}}
\def\gF{{\mathcal{F}}}
\def\gG{{\mathcal{G}}}
\def\gL{{\mathcal{L}}}
\def\gO{{\mathcal{O}}}
\def\gS{{\mathcal{S}}}
\def\gV{{\mathcal{V}}}
\newcommand{\E}{\mathbb{E}}
\newcommand{\cat}{\mathrm{Cat}}
\newcommand{\alphats}{\frac{\alpha_t}{\alpha_s}}
\newtheorem{theorem}{Theorem}[section]
\newtheorem{proposition}[theorem]{Proposition}
\newtheorem{assumption}[theorem]{Assumption}
\newtheorem{remark}[theorem]{Remark}
\title{Anchored Diffusion Language Model}
\author{
Litu Rout\quad
Constantine Caramanis\quad
Sanjay Shakkottai \\
\vspace{-1.5ex} \\
The University of Texas at Austin\\
\vspace{-2ex} \\
{\tt\small\{litu.rout,constantine,sanjay.shakkottai\}@utexas.edu}
}
\begin{document}

\maketitle

\vspace{-4ex}    
\begin{abstract}
\vspace{-2ex}    
Diffusion Language Models (DLMs) promise parallel generation and bidirectional context, yet they underperform autoregressive (AR) models in both \textit{likelihood modeling} and \textit{generated text quality}. We identify that this performance gap arises when important tokens (e.g., key words or low-frequency words that anchor a sentence) are masked early in the forward process, limiting contextual information for accurate reconstruction. To address this, we introduce the \textit{Anchored Diffusion Language Model (ADLM)}, a novel two-stage framework that first predicts distributions over important tokens via an anchor network, and then predicts the likelihoods of missing tokens conditioned on the anchored predictions. ADLM significantly improves test perplexity on LM1B and OpenWebText, achieving up to 25.4\% gains over prior DLMs, and narrows the gap with strong AR baselines. It also achieves state-of-the-art performance in zero-shot generalization across seven benchmarks and surpasses AR models in MAUVE score, which marks the first time a DLM generates better human-like text than an AR model. Theoretically, we derive an Anchored Negative Evidence Lower Bound (ANELBO) objective and show that anchoring improves sample complexity and likelihood modeling. Beyond diffusion, anchoring boosts performance in AR models and enhances reasoning in math and logic tasks, outperforming existing chain-of-thought approaches.
\end{abstract}

\vspace{-4ex}    
\section{Introduction}
\label{sec:intro}
\vspace{-1ex}    
Large autoregressive language models (LLMs) have achieved remarkable success in next-token prediction, powering high quality text generation and emergent reasoning capabilities in AI systems~\citep{Brown2020}. By generating tokens sequentially, autoregressive (AR) models like GPT-3~\citep{gpt3}, Gemini~\citep{gemini}, LLaMA~\citep{llama}, and Claude~\citep{claude} condition on a growing prefix and excel at fitting the distribution of the next token. However, their sequential generation process makes it challenging to solve complex reasoning tasks, since the model does not see the entire sequence all at once.

An alternative paradigm has recently emerged in the form of \emph{Diffusion Language Models} (DLMs), which perform masked-token prediction via iterative refinement. Inspired by diffusion models for continuous data~\citep{sohl2015deep}, these approaches corrupt text (e.g., by masking~\citep{sedd,mdlm,ou2025your,bd3lm} or random flipping~\citep{d3pm,sedd,ddpd}) and train a model to denoise or reconstruct the original sequence over multiple steps~\citep{d3pm, Li2022, sedd, mdlm,ou2025your}. {\color{blue}DLMs} generate the entire sequence in parallel, allowing bidirectional attention for better context and potential gains in controllable generation, complex reasoning, and fast sampling.
Despite their promise, masked diffusion models still lag behind AR models in \textit{modeling the likelihood of missing tokens} and \textit{generated text quality}. Even with modern training improvements, DLMs often achieve worse (higher) perplexity than AR transformers on standard benchmarks~\citep{sedd,mdlm,md4,bd3lm}. 

We identify a key limitation in existing DLMs: when important tokens (e.g., low-frequency or semantically important words) are masked early in the forward process, the model lacks sufficient context to accurately reconstruct the original sequence. Drawing on information-theoretic insights and improved sample complexity via anchoring in directed graphical models (DAGs), we propose the \textbf{Anchored Diffusion Language Model (ADLM)} (\S\ref{sec:adlm}). ADLM introduces \emph{anchor tokens} that are important to guide the denoising process. It comprises two components: (1) an \textbf{anchor network} that predicts the likelihoods of important tokens from a partially masked sequence, and (2) a \textbf{denoising network} that predicts missing likelihoods conditioned on these anchored predictions: see Figure~\ref{fig:adlm-overview}. We derive an \emph{Anchored Negative Evidence Lower Bound} (ANELBO) objective to jointly train both networks with minimal overhead, improving likelihood modeling by better contextualization (\S\ref{sec:theory}).

\begin{figure}[t]
  \vspace{-2ex}
  \centering
  \begin{minipage}{0.49\linewidth}
    \caption{
      \textbf{Anchored Diffusion Language Model (ADLM).}
      ADLM introduces an \textit{anchor network} that predicts important (e.g., { 349 (`cat')} and { 329 (`dog')}) token mixture of a sequence. These anchored predictions guide a \textit{denoiser network} to better estimate the likelihoods of {\color{gray} masked (50257)} tokens. 
      Here, we illustrate the pathways for tokens: {1760 (`playing')} and {64 (`a')}.      
      ADLM anchors through important tokens that help narrow the performance gap with autoregressive models.
    }
    \label{fig:adlm-overview}
  \end{minipage}
  \hfill
  \begin{minipage}{0.49\linewidth}
    \centering
    \includegraphics[width=\linewidth]{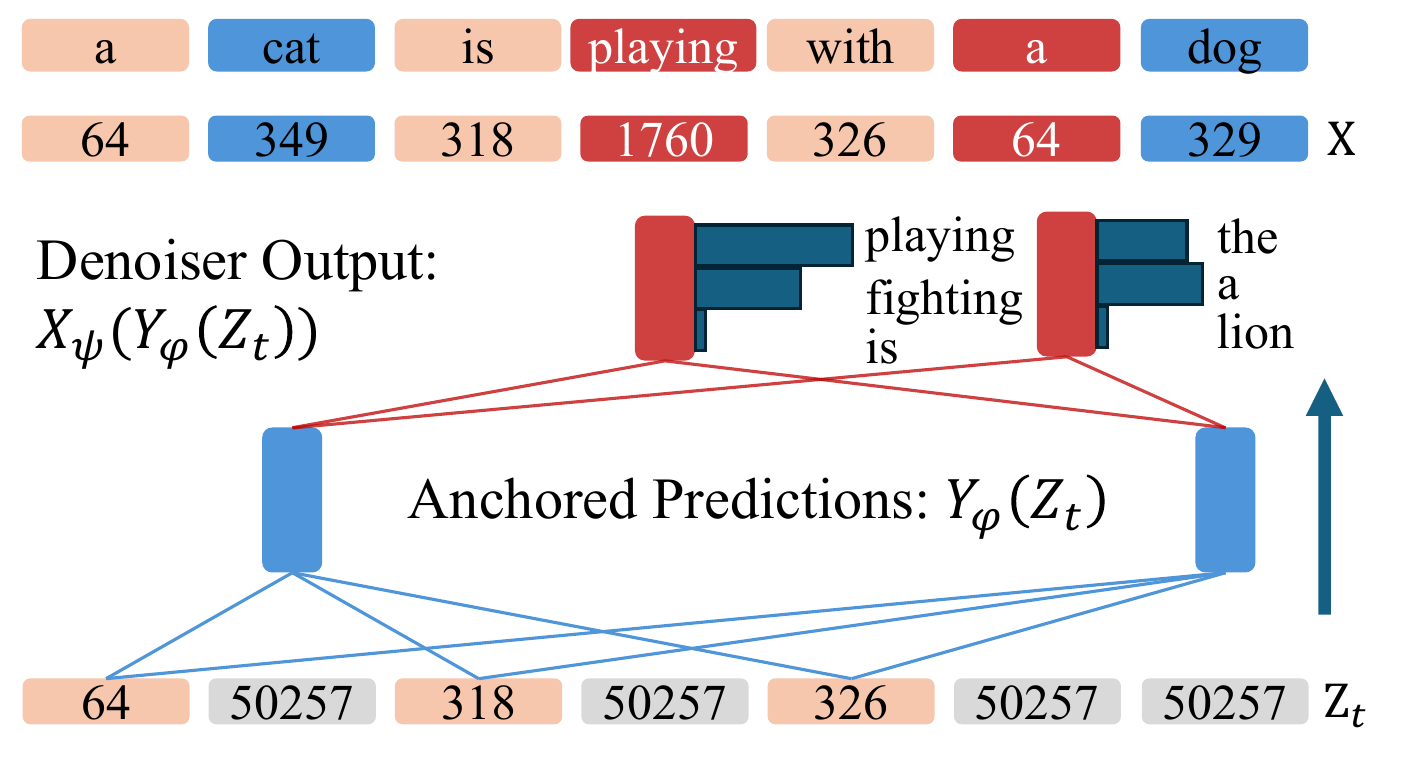}
  \end{minipage}
  \vspace{-2ex}
\end{figure}

Anchoring significantly improves both in-distribution and out-of-distribution (OOD) performance in generative modeling (\S\ref{sec:exp-dlm}), and also enhances the reasoning capabilities of AR models (\S\ref{sec:exp-arm}). On the LM1B~\citep{lm1b} benchmark, ADLM achieves a test perplexity improvement of 9.54\% over MDLM~\citep{mdlm} and 25.4\% over SEDD~\citep{sedd}. On OpenWebText (OWT)~\citep{owt}, ADLM reaches a perplexity of 20.14 with 524B tokens, outperforming MDLM by 12.3\%, matching the hybrid (AR+Diffusion) baseline BD3LM ($L'=4$)~\citep{bd3lm}, which achieves 20.73. In terms of generation quality, ADLM achieves a GPT-2 Large perplexity of 26.8—surpassing MDLM by 39\% and SEDD by 48\%—and is thus the first diffusion language model to exceed AR in MAUVE score (measures human-like text quality)~\citep{mauve} using the remasking sampler. Furthermore, ADLM achieves state-of-the-art zero-shot perplexities on 6 out of 7 language modeling benchmarks and outperforms AR baselines on long-context and domain-specific datasets such as Lambada, PubMed, and ArXiv, demonstrating its strong language understanding and generalization capabilities. We also show that when anchoring is integrated into an AR model, it shows stronger logical consistency and planning capabilities in text generation and complex reasoning at GPT-2 scale. 
An overview of ADLM inference pipeline is shown in Figure~\ref{fig:adlm-overview}.

\textbf{Contributions.} This work makes the following contributions:
\setlist{nolistsep}
\begin{itemize}[leftmargin=*,itemsep=0pt]
    \item We propose ADLM, a novel two-stage diffusion language model that improves the prediction of masked tokens through anchor-guided denoising (\S\ref{sec:adlm}).
    We derive an anchored evidence lower bound to train ADLM in an end-to-end fashion, proving improved sample complexity and better likelihood modeling in a DAG model (\S\ref{sec:theory}).
    \item ADLM achieves lower test perplexities than prior DLMs on LM1B and OWT, narrowing the gap with AR models (\S\ref{sec:exps}). Anchoring generalizes better in zero-shot evaluation, improving perplexity on OOD tasks such as PubMed and ArXiv, outperforming both MDLM and AR baselines (\S\ref{sec:exp-dlm}).    
    \item We demonstrate the benefits of anchoring using two different samplers: (a) locked-in~\citep{mdlm} and (b) remasking~\citep{remdm} samplers. With remasking sampler, ADLM outperforms AR models in human-like text generation measured by MAUVE score (\S\ref{sec:exp-dlm}).    
    \item Beyond diffusion, we integrate our anchoring mechanism into AR models, which leads to a novel reasoner that supplements conventional chain-of-thought. Our results show improvements in next-token prediction and supervised fine-tuning on Math (GSM8K~\citep{gsm8k}) and logical reasoning (ProntoQA~\citep{prontoqa} and ProsQA~\citep{coconut} (\S\ref{sec:exp-arm})) tasks.
\end{itemize}

\vspace{-2ex}    
\section{Background}
\label{sec:background}
\vspace{-1ex}    
Consider a discrete state space $\gS = \gV^L$, where $\gV=\{1,\cdots, K-1, K\}$ denotes the set of discrete alphabets or tokens augmented with an extra $(K)$-th letter representing a dummy token called `mask'. Further, $L$ is the dimension of each sequence $x = (x^1, x^2, \cdots, x^L)  \in \gS$ for $x^l \in \gV, l \in [L]$.
We represent each sequence as a collection  of one-hot encodings as: $\rvx = (\rvx^1, \rvx^2, \cdots, \rvx^L)$, where $\sum_{j=1}^{K}\rvx^l(j) = 1$, $\rvx^l[j] \geq 0$ and $\rvx^l[x^l]=1$. In case of a mask, we denote the corresponding one-hot encoded vector as $\rvm = (0, 0, \cdots,0,1)^T$.
Let $X$ denote a random variable taking values in $\gS$.
Given a finite set of samples from an unknown data distribution $q(\cdot)$ supported on $\gS$, the objective in generative modeling is to generate new samples from this distribution. 

\vspace{-2ex}    
\subsection{Auto-Regressive Models}
\label{sec:ar}
\vspace{-1ex}    
Autoregressive models encompass widely used approaches in discrete generative modeling. These methods typically train a neural network to approximate the distribution of the next token conditioned on all previous tokens. This corresponds to the \emph{causal factorization} of the joint data distribution $q(\cdot)$ by modeling the causal relationships in $X\sim q$ as follows~\citep{jelinek1980interpolated, bengio2003neural}:
$q(\rvx) = q(\rvx^1)\prod_{l=2}^{L} q(\rvx^{l} | \rvx^{1:l-1})$, where $\rvx^{1:l-1} \coloneqq \rvx^1, \rvx^2, \cdots, \rvx^{l-1}.$
A neural network parameterized by $p_\theta$ is trained to approximate these factors.
The training objective for the neural network is designed to maximize the likelihood of a given finite set of sequences, which is equivalent to minimizing the negative log-likelihood:
$\gL_{\text{AR}}(\theta) = -\E_{X\sim q}\left[ \log p_\theta(X) \right] = -\E_{X\sim q}\left[ \sum_{l=2}^L \log p_\theta(X^{l}| X^{1:l-1}) \right].$
This objective encourages the model to learn the conditional distributions, enabling autoregressive sampling from the learned distribution $p_\theta(\cdot)$.

\vspace{-2ex}    
\subsection{Diffusion Language Models}
\label{sec:dlm}
\vspace{-1ex}    



Let $T$ represent the finite number of time steps used in a diffusion model. 
We denote by $t(i) = \frac{i}{T}$ and $s(i) = \frac{i-1}{T}$, where $i \in \{1,2,\cdots,T\}$.
For brevity, we drop the index $i$ from $t$ and $s$. 
In D3PM~\citep{d3pm}, the conditional of the forward process at time $t$ is given by
\vspace{-1ex}
\begin{align}
\label{eq:fwd}
    q(\rvz_t|\rvx) = \prod_{l=1}^{L} q(\rvz_t^l|\rvx), \quad   q(\rvz_t^l|\rvx) = \cat\left(\rvz_t^l; \alpha_t \rvx^l + (1-\alpha_t) \rvm \right),\quad l \in \{1,2,\cdots, L\},
\vspace{-1ex}    
\end{align}
which has a transition probability $q(\rvz_t^l|\rvz_s^l) = \cat(\rvz_t^l; \alphats \rvz_s^l + (1-\alphats) \rvm)$ (see Appendix~\ref{sec:app-dlm-trans-ker} for details). 
The masking schedule $\alpha_t \in [0,1]$ is predefined as a monotonically decreasing function of $t$ with $\alpha_0=1$ and $\alpha_1=0$.
The corresponding reverse posterior becomes: 
\vspace{-1ex}
\begin{align}
    \label{eq:inf-posterior-known}         
     q(\rvz_s^l | \rvz_t^l, \rvx^l) 
     =
    \begin{cases}
    \mathrm{Cat}(\rvz_s^l; \rvz_t^l), &  \rvz_t^l \neq \rvm \\
    \mathrm{Cat}\left(\rvz_s^l; \frac{\alpha_s - \alpha_t}{1 - \alpha_t} \rvx^l + \frac{1 - \alpha_s}{1 - \alpha_t} \rvm \right),&\rvz_t^l = \rvm.
    \end{cases}
    \vspace{-1ex}
\end{align}
This reverse posterior is useful because as we see in \eqref{eq:mdlm-rev}, it helps parameterize the generative model to have a similar form.
The reverse process of D3PM~\citep{d3pm} defines a $\theta-$parameterized joint distribution over sequences given by $p_\theta(\rvx, \rvz_{0:1})$.
It follows a Markovian structure with transition probability $p_\theta(\rvz_s |\rvz_t) = \prod_{l=1}^{L} p_\theta(\rvz_s^l | \rvz_t)$.
Intuitively, given a noisy latent $\rvz_t$, the model predicts a clean token and then re-noises it forward according to the forward dynamics defined in~\eqref{eq:fwd}.

Recall that $\rvx$ denotes a sequence of $K$-dimensional one-hot encoded tokens, i.e.,  $\rvx = (\rvx^l)_{l=1}^L$. 
We slightly overload notation and use $\rvx_{\theta} = (\rvx_{\theta}^l)_{l=1}^L$ to represent a sequence of $\theta$-parameterized vectors on the $K$-simplex.
Each $\rvx_{\theta}^l$ defines a distribution over the vocabulary, where one-hot vectors $\rvx^l$ correspond to a token and lie at the corners of the simplex.
Thus, we  can interpret $\rvx_{\theta}$ as a mixture distribution over tokens, henceforth referred to as the \emph{predicted mixture token}. 
With this notation, the probability of generating $\rvx^l$ given $\rvz_t$ can be compactly expressed as $p_\theta(\rvx^l | \rvz_t) = \langle \rvx^l_\theta(\rvz_t), \rvx^l \rangle$.

The (general) discrete-diffusion setting in D3PM~\citep{d3pm} has subsequently been specialized to masking-based diffusion in MDLM~\citep{mdlm}. Their specialization has two properties: (i) {\em zero-masking}, where the predicted mixture token has no support on the `mask' letter, i.e., $\langle \rvx_{\theta}(\rvz_t)^l, \rvm \rangle = 0$, and (ii) {\em carry-over unmasking}, where for an already unmasked token (i.e., $\rvz_t^l \neq \rvm$), it continues to remain the same, meaning $\langle \rvx_{\theta}^l(\rvz_t), \rvz_t^l \rangle= 1$.
Thus, for each token $l \in [L]$, this parameterization of the learned transition kernel leads to the following representation:
\vspace{-1ex}
\begin{align}
    \label{eq:mdlm-rev}    
     p_\theta(\rvz_s^l|\rvz_t)
     =
     q(\rvz_s^l | \rvz_t^l, \rvx^l_\theta(\rvz_t)) 
     =
    \begin{cases}
    \mathrm{Cat}(\rvz_s^l; \rvz_t^l), &  \rvz_t^l \neq \rvm \\
    \mathrm{Cat}\left(\rvz_s^l; \frac{\alpha_s - \alpha_t}{1 - \alpha_t} \rvx^l_\theta(\rvz_t) + \frac{1 - \alpha_s}{1 - \alpha_t} \rvm \right),&\rvz_t^l = \rvm.
    \end{cases}
    \vspace{-1ex}
\end{align}
The denoising network is trained using Negative ELBO (NELBO)~\citep{sohl2015deep,d3pm} $\gL_{\mathrm{NELBO}}(\rvx; \theta)\coloneq$
\vspace{-1ex}
\begin{align}
\label{eq:nelbo}    
    \E_{Z_0 \sim q(\cdot | \rvx)}\Big[-\log p_\theta(\rvx |Z_0) \Big] 
    +
    \sum_{i=1}^{T}\E_{Z_{t(i)} \sim q(\cdot|\rvx)}\Bigg[\frac{\alpha_{t(i)} - \alpha_{s(i)}}{1-\alpha_{t(i)}} \sum_{l=1}^{L} \log\langle\rvx^l_\theta(Z_{t(i)}), \rvx^l \rangle \Bigg].
    \vspace{-1ex}
\end{align}

\vspace{-2ex}    
\section{Anchored Diffusion Language Models}
\label{sec:adlm}
\vspace{-1ex}    
Our key idea is to anchor the denoising process using {important} tokens we call the {\em anchor tokens} (or \texttt{[ANT]} in short). These are tokens that, if revealed, make it much easier to generate the remaining tokens. As an example, if the underlying data distribution could be represented as a 2-depth tree (with each node on the tree being a token), knowledge of the value of the root node (anchor token) would lead to easier decoding of the leaves. As another example, in a sentence, knowledge of the verb or noun (anchor token) is likely more useful than the articles (e.g., `a', `an', `the') or conjunction words.

Anchoring addresses the critical challenge posed by random masking in DLMs~\citep{d3pm,mdlm,sedd,md4,remdm}, where important tokens in a sequence $\rvx$ may be masked in $\rvz_t$, making it difficult to estimate the missing likelihoods. To overcome this challenge, we split the denoising process into two steps. First, we use an {\em anchor network} to predict the probability mixture over important tokens for each position $l \in [L]$. Next, we employ a {\em denoising network} to aggregate these anchor predictions and compute likelihoods for the masked tokens. We call this approach \textbf{Anchored Diffusion Language Model (ADLM)}.

\textbf{ADLM Parameterization.}
The forward process in ADLM follows the standard absorbing discrete diffusion formulation~\eqref{eq:fwd}, with the inference posterior given in~\eqref{eq:inf-posterior-known}.
To improve denoising, we introduce a new anchored parameterization of the reverse process.
We propose to break the one-step denoising process, widely used in practice~\citep{d3pm,mdlm,remdm,ou2025your,smdm,llada}, into a two-stage anchored denoising framework.
This allows latent reasoning over important tokens during pretraining.

Since the reverse process is Markovian, the joint probability distribution factorizes as:
$p_\theta(\rvx,\rvz_{0:1}) 
=
p_\theta(\rvz_1) p_\theta(\rvx|\rvz_0)\prod_{i=1}^{T} p_\theta(\rvz_{s(i)} | \rvz_{t(i)}).$
We represent each learned transition $p_{\theta}(\rvz_{s(i)}|\rvz_{t(i)})$ by the composite of two functions, and this learned function (that maps $(\rvz_{s(i)}, \rvz_{t(i)}) \to [0, 1]$) is reparameterized through the pair $(\psi, \varphi)$ as:
$p_\theta(\rvz_{s(i)} | \rvz_{t(i)}) 
\coloneq 
q(\rvz_{s(i)}|\rvz_{t(i)}, \rvx_\psi(\rvy_\varphi(\rvz_{t(i)}))).$
Here, $\rvy_\varphi$ denotes the \textit{anchor network}, which predicts a mixture distribution over important tokens from the masked input $\rvz_t$, and $\rvx_\psi(\rvy_\varphi(\rvz_t))$ denotes the \textit{anchor-guided denoising network}, which predicts likelihoods of missing tokens conditioned on the important token mixture.
We analyze the benefits in \S\ref{sec:theory} theoretically, showing that anchoring reduces the training difficulty in DLMs by focusing optimization on important tokens.
There are two key components in ADLM:

\textbf{(1) Anchor Transition.} 
Let $\gA(\cdot)$ be an operator that takes a sequence $\rvx = (\rvx^l)_{l=1}^L$ as input and outputs an important token mixture $\rvy = (\rvy^l)_{l=1}^L = \gA(\rvx)$.
We define the anchor transition as:
\vspace{-1ex}
\begin{tcolorbox}[colback=gray!10, colframe=gray!10, boxrule=0pt, arc=2pt, left=0pt, right=0pt, top=0pt, bottom=0pt, breakable]
\vspace{-2ex}
\begin{align}
\label{eq:anchor-cat-dist}
r(\rvy_s^l|\rvz_t^l, \rvy_\varphi(\rvz_t)) = 
\begin{cases}
\mathrm{Cat}(\rvz_s^l; (1-\sigma_t) \rvy^l + \sigma_t \rvm), & \rvz_t^l \neq \rvm, \\
\mathrm{Cat}(\rvz_s^l; \frac{\alpha_s - (1-\sigma_t)\alpha_t}{1-\alpha_t}\rvy^l_\varphi(\rvz_t) + \frac{1-\alpha_s-\alpha_t\sigma_t}{1-\alpha_t}\rvm), &\rvz_t^l = \rvm,
\end{cases}
\end{align}
\end{tcolorbox}
\vspace{-2ex}
where $\rvy^l = \gA(\rvx^l=\rvz_t^l)$.
In other words, when a token $\rvz_t$ is already unmasked, the model preserves it as an important anchor token (\texttt{[ANT]}) with probability $(1-\sigma_t)$, but can also re-mask it with probability $\sigma_t$ (typically small). Conversely, when $\rvz_t$ is masked, the anchoring network $\rvy_\varphi(\cdot)$ predicts an important token mixture with probability $\frac{\alpha_s - (1-\sigma_t)\alpha_t}{1-\alpha_t}$, and keeps it masked with probability $\frac{1-\alpha_s-\alpha_t\sigma_t}{1-\alpha_t}$. This aims to reconstruct the important tokens earlier during sampling.

\textbf{(2) Inference Posterior.} 
 Anchoring introduces an implicit reasoning mechanism into DLM pretraining. Once the model is trained, we modify the standard inference posterior to incorporate anchor-guided denoising.
Since ADLM is trained to reason through anchor tokens internally, we do not explicitly decode the anchor tokens during inference.
Thus, the inference posterior is given by:
\vspace{-2ex}
\begin{tcolorbox}[colback=gray!10, colframe=gray!10, boxrule=0pt, arc=2pt, left=0pt, right=0pt, top=0pt, bottom=0pt, breakable]
\vspace{-2ex}
\begin{align}
\label{eq:inf-post-adlm}
q(\rvz_s^l|\rvz_t^l, \rvx^l_\psi(\rvy_\varphi(\rvz_t))) = 
\begin{cases}
\mathrm{Cat}(\rvz_s^l; (1-\sigma_t) \rvx^l + \sigma_t \rvm), & \rvz_t^l \neq \rvm, \\
\mathrm{Cat}(\rvz_s^l; \frac{\alpha_s - (1-\sigma_t)\alpha_t}{1-\alpha_t}\rvx^l_\psi(\rvy_\varphi(\rvz_t)) + \frac{1-\alpha_s-\alpha_t\sigma_t}{1-\alpha_t}\rvm), &\rvz_t^l = \rvm,
\end{cases}
\end{align}
\end{tcolorbox}
\vspace{-2ex}
where $\sigma_t$ controls the remasking probability at each timestep~\citep{remdm}.
If a token $\rvz_t^l$ is already unmasked, the denoiser network $\rvx_\psi(\rvy_\varphi(\rvz_t))$ carries it over to the next time step with probability $(1-\sigma_t)$, while still allowing a small probability $\sigma_t$ of masking for correction. 
When $\rvz_t^l$ is masked, the inference posterior interpolates between predicting the missing token via the anchored logits and remasking it, with weights determined by $\sigma_t$ and the forward process parameters $(\alpha_t, \alpha_s)$. 

Important tokens, once masked, lead to information loss that affects reconstruction by standard denoising network. By predicting these important tokens early via the anchor network, ADLM:
(1) introduces intermediate latent reasoning,
(2) maintains stronger context throughout the denoising trajectory, and
(3) enables high quality sequence generation.
In practice, even a lightweight denoising network (e.g., using half the number of layers compared to the anchor network) significantly improves overall likelihood modeling when guided by anchored predictions.

\textbf{Training Objective.}
Given a sequence $\rvx$ and important token mixture $\rvy$, we optimize the parameters ($\psi$ and $\varphi$) of ADLM using \textit{Anchored Negative Evidence Lower Bound (ANELBO)} (see \textbf{Theorem~\ref{thm:anelbo}}):
\vspace{-4ex}
\begin{tcolorbox}[colback=gray!10, colframe=gray!10, boxrule=0pt, arc=2pt, left=0pt, right=0pt, top=0pt, bottom=0pt, breakable]
\vspace{-1ex}
\begin{align}
\vspace{-1ex}
\label{eq:anelbo}
&\gL_{\mathrm{ANELBO}}(\rvx, \rvy; \varphi, \psi) 
= \E_{Z_0 \sim q(\cdot|\rvx)}\left[-\log p_\psi(\rvx | \rvy_\varphi(Z_0))\right] + \\
\nonumber
&\sum_{i=1}^{T}\E_{Z_{t(i)} \sim q(\cdot|\rvx)} \left[\frac{(1-\sigma_{t(i)})\alpha_{t(i)} - \alpha_{s(i)}}{1-\alpha_{t(i)}} \sum_{l=1}^L \log\langle \rvx^l_\psi(\rvy_\varphi(Z_{t(i)})),\rvx^l\rangle + \gamma \log\langle \rvy^l_\varphi(Z_{t(i)}), \rvy^l\rangle\right],
\end{align}
\end{tcolorbox}
\vspace{-1ex}
where $\gamma$ controls anchor strength. For $\sigma_{t(i)}=0=\gamma$, we recover the standard MDLM~\eqref{eq:nelbo}. 

\textbf{Anchor Token Selection.}
We identify important tokens using a simple frequency-based criterion. For each token $\rvx^l$ in a sequence $\rvx$, we compute its relative frequency as $\mu(\rvx^l) = \frac{1}{L} \sum_{j=1}^L \bf{1}_{\{\rvx^j = \rvx^l\}}$. Tokens with $\mu(\rvx^l) \leq \tau$ are considered important and contribute to the anchoring loss.
During inference, the anchor network predicts a full distribution over the vocabulary at each position, since the positions of important (anchor) tokens are not known a priori.
While we adopt a frequency-based approach in this work, alternative criteria—such as syntactic importance~\citep{tenney2019bert}, attention-based salience~\citep{clark2019does}, or attribution methods~\citep{li2016visualizing,sundararajan2017axiomatic}—offer promising directions for future research.

\vspace{-2ex}    
\section{Theoretical Results}
\label{sec:theory}
\vspace{-1ex}    
\subsection{Anchored Negative Evidence Lower Bound}
\label{sec:theory-anelbo}
\vspace{-1ex}    
Recall that each latent variable $Z_t$ is a corrupted version of the original sequence $\rvx$, and $\rvy = \gA(\rvx)$ is a mixture of important tokens. We define the anchored transition function as:
\vspace{-1ex}
\begin{align}
\label{eq:anchor-gt}
r(\rvy_s^l|\rvz_t^l, \rvy^l) \coloneq 
\begin{cases}
\mathrm{Cat}(\rvz_s^l; (1-\sigma_t) \rvy^l + \sigma_t \rvm), & \rvz_t^l \neq \rvm, \\
\mathrm{Cat}(\rvz_s^l; \frac{\alpha_s - (1-\sigma_t)\alpha_t}{1-\alpha_t}\rvy^l + \frac{1-\alpha_s-\alpha_t\sigma_t}{1-\alpha_t}\rvm), &\rvz_t^l = \rvm,
\end{cases}
\vspace{-1ex}
\end{align}
where $\rvz^l_1=\rvy^l_1=\rvm$ and $\alpha_t, \sigma_t$ are time-dependent coefficients derived from the corruption schedule.
This motivates our choice of anchored transition \eqref{eq:anchor-cat-dist} in ADLM parameterization (\S\ref{sec:adlm}).
To align the model's anchored predictions with this target transition, we define the \textit{anchor loss}:
\vspace{-1ex}
\begin{align}
\label{eq:loss-anchor}
\gL_{\mathrm{Anchor}}(\rvx;\varphi)
\coloneq
\E_{q(Z_{0:1}|\rvx)}\Big[
\sum_{i=0}^{T} \mathrm{D}_{\mathrm{KL}}(r(Y_{s(i)}|Z_{t(i)},\rvy) \;\| \; r_\varphi(Y_{s(i)}|Z_{t(i)})),
\Big],\quad \rvy = \gA(\rvx),    
\vspace{-1ex}
\end{align}
where $r_\varphi$ is a learned parametric anchor transition function. 
We now derive the ANELBO objective, which integrates the anchor network within a denoising model $\rvx_\psi(\rvy_\varphi(\cdot))$. The resulting bound regularizes the denoising process using structured guidance from the anchor predictions.
\vspace{-1ex}
\begin{tcolorbox}[colback=gray!10, colframe=gray!10, boxrule=0pt, arc=2pt, left=0pt, right=0pt, top=0pt, bottom=0pt, breakable]
\begin{theorem}[Anchored Negative Evidence Lower Bound]
\label{thm:anelbo}
Suppose the inference posterior is parameterized as in~\eqref{eq:inf-post-adlm}. 
Denote by $\theta$ the collection of parameters of the anchor and denoiser networks, i.e., $\theta = [\psi, \varphi]$.
Given a sequence $\rvx = (\rvx^l)_{l=1}^L$, let the important token mixture $\rvy = (\rvy^l)_{l=1}^L = \gA(\rvx)$ is obtained through the operator $\gA(\cdot)$.
Then, the anchored negative log-likelihood is bounded by:
$-\log p_\theta(\rvx) + \gamma \gL_{\mathrm{Anchor}}(\rvx;\varphi) \;\leq\; \gL_{\mathrm{ANELBO}}(\rvx; \psi, \varphi),$
where
\begin{align*}
&\gL_{\mathrm{ANELBO}}(\rvx; \psi, \varphi) 
\coloneq 
\E_{Z_0 \sim q(\cdot|\rvx)}\left[-\log p_\psi(\rvx | \rvy_\varphi(Z_0))\right] \\
&\hspace{6em}+ 
\sum_{i=1}^{T}\E_{Z_{t(i)} \sim q(\cdot|\rvx)} 
\left[
\lambda_{t(i)} 
\sum_{l=1}^L \left(
\log\langle \rvx^l_\psi(\rvy_\varphi(Z_{t(i)})), \rvx^l\rangle + \gamma \log\langle \rvy^l_\varphi(Z_{t(i)}), \rvy^l\rangle
\right)
\right],
\end{align*}
with weight $\lambda_{t(i)} = \frac{(1-\sigma_{t(i)})\alpha_{t(i)} - \alpha_{s(i)}}{1-\alpha_{t(i)}}$ and $\gamma > 0$.
\end{theorem}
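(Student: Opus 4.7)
The plan is to proceed via the standard discrete diffusion ELBO argument, modified to absorb the anchor KL regularizer into the bound. First I would apply Jensen's inequality to $-\log p_\theta(\rvx)$ with the forward process $q(\rvz_{0:1}|\rvx)$ as the variational distribution, giving $-\log p_\theta(\rvx) \leq \E_{q(Z_{0:1}|\rvx)}\!\left[\log q(Z_{0:1}|\rvx) - \log p_\theta(\rvx,Z_{0:1})\right]$. Using the Markov factorizations of both the forward posterior $q(\rvz_{0:1}|\rvx) = q(\rvz_1|\rvx)\prod_i q(\rvz_{s(i)}|\rvz_{t(i)},\rvx)$ and the reverse chain $p_\theta(\rvx,\rvz_{0:1}) = p_\theta(\rvz_1)\,p_\theta(\rvx|\rvz_0)\prod_i p_\theta(\rvz_{s(i)}|\rvz_{t(i)})$, I would rewrite the right-hand side as a reconstruction term $-\E_{Z_0 \sim q(\cdot|\rvx)}[\log p_\psi(\rvx|\rvy_\varphi(Z_0))]$, a sum of per-step divergences $\sum_i \E_q\!\left[\mathrm{KL}\!\left(q(\rvz_{s(i)}|\rvz_{t(i)},\rvx)\,\|\,p_\theta(\rvz_{s(i)}|\rvz_{t(i)})\right)\right]$, and a prior-matching term that vanishes under the absorbing boundary $\rvz_1 = \rvm$.

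Next, I would substitute the anchored parameterization $p_\theta(\rvz_s|\rvz_t) = q(\rvz_s|\rvz_t,\rvx_\psi(\rvy_\varphi(\rvz_t)))$ from~\eqref{eq:inf-post-adlm}. Because the data-conditional target posterior and the learned kernel share the same mixing coefficients $\frac{\alpha_s-(1-\sigma_t)\alpha_t}{1-\alpha_t}$ and $\frac{1-\alpha_s-\alpha_t\sigma_t}{1-\alpha_t}$, and since both branches reduce to a point mass on $\rvz_t^l$ when $\rvz_t^l \neq \rvm$, the per-coordinate KL collapses to $-\lambda_{t(i)} \log\langle \rvx^l_\psi(\rvy_\varphi(Z_{t(i)})),\rvx^l\rangle$ up to $\theta$-independent constants, mirroring the MDLM derivation but with $\alpha_t$ replaced by $(1-\sigma_t)\alpha_t$ throughout. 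Summing over $i$ and $l$ gives the denoiser log-inner-product terms that appear in the claimed $\gL_{\mathrm{ANELBO}}$.

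To incorporate the regularizer, I would observe that $r(Y_s|Z_t,\rvy)$ in~\eqref{eq:anchor-gt} and $r_\varphi(Y_s|Z_t) = r(Y_s|Z_t,\rvy_\varphi(Z_t))$ possess exactly the same categorical form as the denoising kernels above, only with the anchor mixture in place of the clean token. The identical KL computation therefore identifies each summand of $\gL_{\mathrm{Anchor}}(\rvx;\varphi)$ from~\eqref{eq:loss-anchor} with $-\lambda_{t(i)} \log\langle \rvy^l_\varphi(Z_{t(i)}),\rvy^l\rangle$ up to $\varphi$-independent constants. Multiplying by $\gamma$, adding to the ELBO bound, and moving $\gamma\,\gL_{\mathrm{Anchor}}(\rvx;\varphi)$ to the left-hand side produces the stated inequality. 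The main obstacle is the categorical-KL simplification in the masked branch: one must verify that after the probability mass transferred from the mask symbol to the token simplex, the cross-entropy between $\rvx^l$ (resp.\ $\rvy^l$) and its learned counterpart contributes exactly a single log inner product per coordinate, and that the residual terms (the entropy of the target, the mask-to-mask contributions, and the partition normalizers) are either $\theta$-independent or cancel across the unmasked-carryover and mask-to-token branches. Once this MDLM-style algebra is carried out with the modified schedule $(1-\sigma_t)\alpha_t$, the two log-inner-product families from the anchor and denoiser networks combine cleanly into $\gL_{\mathrm{ANELBO}}$, completing the proof.
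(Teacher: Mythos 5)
Your plan follows the paper's own proof essentially step for step: apply Jensen's inequality to obtain the NELBO decomposition into a reconstruction term, a prior term that vanishes at the absorbing boundary, and per-step KL terms; substitute the anchored parameterization $p_\theta(\rvz_s|\rvz_t) = q(\rvz_s|\rvz_t, \rvx_\psi(\rvy_\varphi(\rvz_t)))$; case-split on $\rvz_t^l = \rvm$ so that the unmasked branch contributes zero KL while the masked branch collapses to a $\lambda_{t(i)}$-weighted log inner product; then run the identical computation for the anchor kernels $r$ and $r_\varphi$ to absorb $\gamma\,\gL_{\mathrm{Anchor}}$ into the bound and move it to the left-hand side. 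One small sign slip worth fixing: with the theorem's $\lambda_{t(i)} = \frac{(1-\sigma_{t(i)})\alpha_{t(i)}-\alpha_{s(i)}}{1-\alpha_{t(i)}}$, which is \emph{negative} since $\alpha_{s(i)} > \alpha_{t(i)}$, the masked-branch KL evaluates to $\lambda_{t(i)}\log\langle\rvx^l_\psi(\rvy_\varphi(Z_{t(i)})),\rvx^l\rangle \ge 0$ (and analogously for the anchor term), not $-\lambda_{t(i)}\log\langle\cdot\rangle$, which would be nonpositive and inconsistent with both a KL divergence and the stated $\gL_{\mathrm{ANELBO}}$.
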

\end{tcolorbox}

\begin{remark}
    We choose a constant $\gamma$ to simplify the notation. Our derivation also applies to a time dependent $\gamma_t$. This only changes the contribution of the anchor loss in $\gL_{\mathrm{ANELBO}}(\rvx; \psi, \varphi)$.
\end{remark}
\noindent\textbf{Implications.} 
The ANELBO objective highlights two important aspects induced by anchoring:
\setlist{nolistsep}
\begin{itemize}[leftmargin=*,itemsep=0pt]
    \item The first term $\log \langle \rvx^l_\psi(\rvy_\varphi(Z_{t(i)})), \rvx^l \rangle$ encourages the denoising network to model the likelihoods of missing tokens, conditioned on the output of the anchor network.
    \item The second term $\log \langle \rvy^l_\varphi(Z_{t(i)}), \rvy^l \rangle$ directly supervises the anchor network, encouraging it to predict important tokens early during sampling.
\end{itemize}
To summarize, anchoring improves likelihood because the denoiser does not waste capacity modeling high-entropy distributions over missing key words, having already resolved them via anchors.

\vspace{-2ex}    
\subsection{Anchored Graphical Model Analysis}
\label{sec:anchored-graph}
\vspace{-1ex}    
The core training objective in both AR and DLMs is maximum likelihood estimation (MLE).
MLE has a rich foundation in graphical models~\citep{koller2009probabilistic}, providing a principled way to understand expressiveness, tractability, and sample complexity. 
We reinterpret AR and DLM training as learning in directed graphical models (DAGs) and formally analyze our anchoring mechanism. 
While rooted in classical theory, we demonstrate that anchoring yields practical benefits in both large-scale pretraining (\S\ref{sec:exp-dlm}) and supervised fine-tuning (\S\ref{sec:exp-arm}) tasks. 

\begin{assumption}
\label{assm:anchoring-sample-complexity}
Suppose the following properties hold: 
(i)  Each conditional distribution $p(\rvx^l | \cdot)$ is modeled as a categorical distribution.
(ii) The model is parameterized by Conditional Probability Tables (CPTs); that is, a distinct parameter is assigned to each configuration of the conditioning set.
(iii) Anchor sets $\pi_l \subset \{1, \ldots, L\} \setminus \{l\}$ are fixed and of bounded size $|\pi_l| \leq d$, with $d \ll L$.
\end{assumption}

\begin{proposition}[Reduced Sample Complexity via Anchoring]
\label{prop:anchoring-sample-complexity}
Suppose \textbf{Assumption~\ref{assm:anchoring-sample-complexity}} holds.
The sample complexity of MLE is given as follows: (i) \textbf{Standard AR:} Each token $\rvx^l$ is conditioned on all previous tokens $\rvx^{1:l-1}$. The total number of parameters is $\gO(K^L)$, resulting in a sample complexity of $\mathcal{O}(K^L)$.
(ii) \textbf{Standard DLM:} Each masked token $\rvx^l$ is predicted conditioning on all other tokens $\rvx \setminus \rvx^l$. The per-token parameter count is $\mathcal{O}(K^{L})$, leading to a total sample complexity of $\mathcal{O}(L K^{L})$.
(iii) \textbf{Anchored AR:} Each token $\rvx^l$ is conditioned only on a fixed-size anchor set $\rvx^{\pi_l}$. The number of parameters per conditional is $\mathcal{O}(K^{d+1})$, giving a total sample complexity $\mathcal{O}(L  K^{d+1})$.
(iv) \textbf{Anchored DLM:} Each masked token $\rvx^l$ is predicted using only anchor tokens $\rvx^{\pi_l} \setminus \{\rvx^l\}$. The per-token parameter count becomes $\mathcal{O}(K^{d+1})$, resulting in a total sample complexity of $\mathcal{O}(L  K^{d+1})$.
\end{proposition}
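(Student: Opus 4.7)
The plan is to reduce the proposition to a parameter-counting exercise combined with the classical sample-complexity bound for maximum-likelihood estimation of categorical conditional probability tables. Under Assumption~\ref{assm:anchoring-sample-complexity}, each conditional $p(\rvx^l \mid \cdot)$ is fully parameterized by an independent categorical distribution for every configuration of its conditioning set, so the full joint model becomes a product of independent categoricals. I will invoke the standard result that estimating a discrete distribution with $m$ atoms to total-variation error $\varepsilon$ with confidence $1-\delta$ requires $\Theta\!\left((m+\log(1/\delta))/\varepsilon^{2}\right)$ i.i.d.\ samples; applied jointly across all tokens and conditioning contexts, this yields MLE sample complexity $\tilde{\mathcal{O}}(M_{\mathrm{total}})$, where $M_{\mathrm{total}}$ denotes the total number of free CPT parameters in the model. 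The proposition then reduces to computing $M_{\mathrm{total}}$ in each of the four cases.

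For the counts, I will proceed case by case. In \emph{Standard AR}, the $l$-th conditional $p(\rvx^l \mid \rvx^{1:l-1})$ has $K^{l-1}$ contexts and $K-1$ free parameters per context, giving $M_{\mathrm{total}}=\sum_{l=1}^{L}K^{l-1}(K-1)=K^{L}-1=\Theta(K^{L})$, dominated by the final conditional. In \emph{Standard DLM}, each of the $L$ tokens has a conditional over the remaining $L-1$ positions, yielding $L\cdot K^{L-1}(K-1)=\Theta(LK^{L})$. In the \emph{Anchored AR} and \emph{Anchored DLM} settings, conditioning is restricted to $\rvx^{\pi_l}$ (respectively $\rvx^{\pi_l}\setminus\{\rvx^l\}$) of bounded size $|\pi_l|\le d$, which collapses each token's table to at most $K^{d}(K-1)$ parameters. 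Summing over $l\in[L]$ gives $\Theta(LK^{d+1})$ in both anchored cases, matching the claimed bounds once combined with the sample-complexity reduction above.

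The main obstacle I anticipate is the precise translation from parameter count to sample complexity, since CPT entries can only be learned from those samples whose conditioning pattern actually occurs in the data. In the worst case this introduces a coupon-collector-type factor — one needs each context to appear enough times to estimate the corresponding row — which is absorbed into the $\tilde{\mathcal{O}}$ notation but should be made explicit. I would handle this either by assuming the data distribution places non-negligible mass on every context (the standard assumption in classical graphical-model sample-complexity results), or by interpreting the bound information-theoretically as the number of free parameters that must be identified before identifiability is achieved. A secondary subtlety is that for \emph{Standard AR} the per-conditional parameter counts grow geometrically in $l$, so one must verify that the geometric sum is dominated by its last term rather than summed naively; this is a one-line observation. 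Once the reduction is in place, the four claimed complexities follow directly from the parameter counts.
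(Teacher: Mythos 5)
Your proposal is correct and follows essentially the same parameter-counting argument as the paper: enumerate the CPT entries under each conditioning structure and invoke the standard linear-in-parameters sample complexity for MLE on tabular categorical models. You are somewhat more explicit than the paper about which classical estimation bound you invoke and about the coupon-collector subtlety of requiring every conditioning context to appear in the data, but the core reasoning and all four case-by-case counts match the paper's.
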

\vspace{-2ex}
\textbf{Implications.} Assuming the existence of important tokens (anchors) in a sequence, anchoring achieves exponential reductions in sample complexity: $\mathcal{O}(K^L)$ to $\mathcal{O}(L  K^{d+1})$. 
We defer further discussion to \S\ref{sec:app-agm-sample-complexity} and \S\ref{sec:app-reduced-sample-complexity}.
We provide additional theoretical results and discussion in \S\ref{sec:addn-theory}.
\vspace{-2ex}    
\section{Experiments}
\label{sec:exps}
\vspace{-1ex}    
Our experiments are designed to evaluate two main aspects of language modeling: (1) likelihood modeling and (2) generated text quality. Prior work has shown that improved perplexity during pretraining often correlates with better downstream performance~\citep{d3pm,sedd,mdlm,remdm,bd3lm}. Since our focus is on pretraining, we primarily evaluate models in terms of their test/validation perplexities, as well as zero-shot generalization. Additionally, we measure generation quality using GPT-2 Large perplexity, entropy, and MAUVE~\citep{mauve} scores.
While perplexity (PPL) captures the likelihood, MAUVE score measures divergence between neural text and human text.

\vspace{-2ex}    
\subsection{Diffusion Language Models}
\label{sec:exp-dlm}
\vspace{-1ex}    
\textbf{Setup.} We evaluate ADLM on two benchmarks: One Billion Words (LM1B)~\citep{lm1b} and OpenWebText (OWT)~\citep{owt}. For LM1B, we use a context length of 128 with the BERT-base-uncased tokenizer and evaluate on the standard test split. For OWT, we use the GPT-2 tokenizer~\citep{gpt2}. Our anchor network adopts the transformer architecture from SEDD~\citep{sedd}, based on the Diffusion Transformer (DiT)~\citep{dit} with rotary positional embeddings~\citep{roformer}. The denoiser network uses the same base architecture but with half the number of transformer layers.

We experiment with two diffusion samplers: (a) the \textit{locked-in sampler} from MDLM~\citep{mdlm}, which fixes previously unmasked tokens by setting $\sigma_t = 0$, and (b) the \textit{remasking sampler} from ReMDM~\citep{remdm}, which allows re-masking with a small $\sigma_t \neq 0$. For fair comparison, we adopt the exact sampler configurations used in the respective baseline implementations.

\textbf{Baselines.} We compare against the following baselines:
(1) the Autoregressive (AR) architecture from \citep{mdlm} trained with next-token prediction;
(2) SEDD~\citep{sedd}: A score entropy discrete DLM;
(3) {MDLM}~\citep{mdlm}: A masked diffusion language model;
(4) {BD3LM}~\citep{bd3lm}: A hybrid approach combining AR and diffusion components;
(5) {ReMDM}~\citep{remdm}: MDLM with re-masking sampler;
(6) {GIDD}~\citep{gidd}: A DLM that interpolates between masking and uniform noising;
and flow matching methods, such as (7) {DFM}~\citep{gat2024discrete} 
and (8) {Forward-Backward (FB)}~\citep{campbell2022continuous} samplers.
We follow the implementation of these baselines from MDLM~\citep{mdlm} and ReMDM~\citep{remdm} repository. 
We describe each baseline and provide links to the source code in \S\ref{sec:addn-exps}.

\vspace{-2ex}    
\subsubsection{Improved Likelihood Modeling and Generated Text Quality}
\label{sec:task-gen}
\vspace{-1ex}    
We first evaluate ADLM on likelihood modeling and generated text quality using the \textit{locked-in} sampler from MDLM~\citep{mdlm}.
Based on our ablation study (deferred to \S\ref{sec:addn-exps}), we adopt $\gamma = 3\text{e-}3$ and $\tau = 5$ as our default configuration for anchoring (\S\ref{sec:adlm}).

\begin{table*}[t]  
\vspace{-3ex}
  \centering
  \caption{Test perplexities (PPL$\downarrow$) on LM1B and OWT. $^\dagger$Reported in \citep{mdlm}. \textbf{Bold}: Best diffusion method. We retrain AR and MDLM to match performance reported in original papers. Our method outperforms previous diffusion language models using the same number of training tokens.}
\vspace{-1ex}
  \label{tab:lm1b-owt-results}
  \begin{minipage}[t]{0.49\textwidth}
    \vspace{0pt}
    \centering
    \resizebox{\textwidth}{!}{%
    \begin{tabular}{lrr}
      \toprule
      \multicolumn{3}{c}{\textbf{(a) LM1B}} \\
      Model & PPL ($\downarrow$) & Tokens \\
      \midrule
      \multicolumn{3}{l}{\textit{Autoregressive}} \\
      Transformer-X Base~\citep{dai2019transformer} & 23.5 & - \\
      $\text{OmniNet}_T$~\citep{tay2021omninet} & 21.5 & - \\
      Transformer$^\dagger$~\citep{mdlm} & 22.32 & 33B \\
      Transformer (retrained) & 21.55 & 65B \\~\\
      \midrule
      \multicolumn{3}{l}{\textit{Diffusion}} \\
      BERT-Mouth~\citep{wang2019bert} & 142.89 & - \\
      D3PM (absorb)$^\dagger$~\citep{d3pm} & 76.90 & - \\
      Diffusion-LM~\citep{li2022diffusion} & 118.62 & - \\
      DiffusionBert$^\dagger$~\citep{DiffusionBERT} & 63.78 & - \\
      SEDD~\citep{sedd} & 32.79 & 33B \\
      MDLM~\citep{mdlm} & 27.04 & 33B \\
      MDLM (retrained) & 27.07 & 33B \\
      \rowcolor{orange!25}
      {ADLM (ours)} & \textbf{26.40} & 33B \\
      \midrule
      MDLM (retrained) & 25.49 & 65B \\
      \rowcolor{orange!25}
      {ADLM (ours)} & \textbf{24.46} & 65B \\
      \bottomrule
    \end{tabular}
    }
    \vfill
  \end{minipage}
  \hfill
  \begin{minipage}[t]{0.49\textwidth}
    \vspace{0pt}
    \centering
    \resizebox{\textwidth}{!}{%
    \begin{tabular}{lrr}
      \toprule
      \multicolumn{3}{c}{\textbf{(b) OWT}} \\
      Model & PPL ($\downarrow$) & Tokens \\
      \midrule
      \multicolumn{3}{l}{\textit{Autoregressive}} \\
      AR (retrained) & 17.94 & 110B \\
      AR$^\dagger$~\citep{mdlm} & 17.54 & 262B \\
      AR (retrained) & 17.26 & 524B \\
      \midrule
      \multicolumn{3}{l}{\textit{Diffusion}} \\
      MDLM (retrained) & 24.04 & 110B \\
      \rowcolor{orange!25}
      ADLM (ours) & \textbf{21.66} & 110B \\
      \midrule
      SEDD$^\dagger$~\citep{sedd} & 24.10 & 262B \\
      MDLM$^\dagger$~\citep{mdlm} & 23.21 & 262B \\
      MDLM (retrained) & 23.17 & 262B \\
      GIDD~\citep{gidd} & 22.29 & 262B \\
      ADLM$^*$(ours) & \textbf{21.79} & 262B \\
      \rowcolor{orange!25}
      ADLM (ours) & \textbf{20.62} & 262B \\
      \midrule
      MDLM~\citep{mdlm} & 22.98 & 524B \\
      MD4~\citep{md4} & 21.80 & 524B \\
      \rowcolor{gray!10}
      BD3LM ($L'=4$)~\citep{bd3lm} & 20.73 & 524B \\
      \rowcolor{orange!25}
      ADLM (ours) & \textbf{20.14} & 524B \\
      \bottomrule
    \end{tabular}
    }
  \end{minipage}
  \vspace{-3ex}  
\end{table*}

\noindent\textbf{Results on LM1B.}
Table~\ref{tab:lm1b-owt-results} (a) shows that ADLM outperforms previous diffusion models such as SEDD, MDLM, and DiffusionBERT. At 33B tokens, ADLM achieves a test PPL of 26.40, improving over MDLM (27.04). Scaling to 65B tokens further reduces PPL to 24.46, approaching AR models like our retrained Transformer (21.55).

\noindent\textbf{Results on OWT.}
Table~\ref{tab:lm1b-owt-results} (b) presents the test PPL of our proposed method, ADLM, across three training regimes: 110B, 262B, and 524B tokens. At each scale, ADLM consistently outperforms diffusion-based baselines such as MDLM and GIDD, as well as the hybrid (AR+Diffusion) BD3LM. 
Notably, at 262B tokens, ADLM achieves a PPL of 20.62, narrowing the gap with the AR models, which reach a PPL of 17.54. 
ADLM$^*$ uses our multi-stage design (anchor and denoiser with $\gamma=0$) to train MDLM that improves its PPL from 23.17 to 21.79.
It demonstrates that anchoring is not just adding extra capacity, but truly guiding efficient learning.
At 524B tokens, ADLM further improves to 20.14, approaching the AR performance (17.26). 
These results verify the effectiveness of ADLM in bridging the gap between DLMs and AR approaches, without relying on AR components.

\begin{wraptable}{r}{0.39\textwidth}
\small
  \vspace{-2em}
  \caption{GPT2-Large perplexities (PPL; $\downarrow$) on OWT (524B tokens).   
  We use the {\color{black}remasking} sampler with 1000 steps. }
  \label{tab:owt-gen-ppl}
  \centering
  \begin{tabular}{ll}
    \toprule
    & PPL ($\downarrow$) \\
    \midrule
    AR~\citep{mdlm} & 14.10 \\
    \midrule
    \rowcolor{gray!10}
    \multicolumn{2}{l}{BD3LM~\citep{bd3lm}} \\
    \rowcolor{gray!10}
    \hspace{4.1em} $L'=16$ & 33.4 \\
    \rowcolor{gray!10}
    \hspace{4.1em} $L'=8$ & 30.4 \\
    \rowcolor{gray!10}
    \hspace{4.1em} $L'=4$ & 25.7 \\
    \midrule
    SEDD~\citep{sedd} & 52.0 \\
    MDLM~\citep{mdlm} & 44.2 \\
    \rowcolor{orange!25}
    ADLM (ours) (262B) & 32.5 \\
    \rowcolor{orange!25}
    ADLM (ours) & \textbf{26.8} \\
    \bottomrule
  \end{tabular}
  \vspace{-1em}
\end{wraptable}
\noindent\textbf{Generated Text Quality:}
While test PPL evaluates the ability to predict missing tokens, it does not necessarily reflect the quality of generated text. 
Following common practice, we assess generated text quality using GPT-2 Large. Table~\ref{tab:owt-gen-ppl} shows GPT-2 Large PPL scores on OWT for models trained on 524B tokens. 
Our method, ADLM, achieves the lowest PPL among DLMs, outperforming prior approaches such as MDLM and SEDD. 
ADLM surpasses the hybrid BD3LM at both $L'=8$ and $L'=16$ block lengths. Since smaller block lengths make BD3LM behave more like an AR model ($L'=1$ is equivalent to pure AR), larger block lengths represent the diffusion regime more accurately. Thus, higher $L'$ provides a fair evaluation against other DLMs. 

\noindent\textbf{Zero-shot Perplexity Evaluation.}
We train ADLM on OWT and evaluate its zero-shot generalization across seven diverse benchmarks using validation perplexity. As shown in Table~\ref{tab:zero-shot-ppl}, ADLM consistently outperforms prior diffusion-based models such as SEDD and MDLM on 6 of 7 tasks, and matches performance on the remaining WikiText benchmark. It also surpasses the hybrid BD3LM (with block length $L'=4$) on five benchmarks. These results suggest that ADLM learns robust representations because \textit{the notion of importance captured by the anchor network generalizes even when the distribution shifts.}

Notably, ADLM outperforms the AR baseline on three challenging datasets: (1) Lambada, which tests long-range contextual understanding, and (2) PubMed and (3) ArXiv, which evaluate scientific language modeling. These results indicate that ADLM not only narrows the performance gap with AR models but can exceed them on tasks requiring long-context reasoning and specialized knowledge.
Importantly, ADLM achieves these gains with the same number of neural function evaluations (NFEs) as AR models, demonstrating both efficiency and strong out-of-distribution generalization.

\noindent\textbf{Remasking Sampler.}  
Now, we evaluate ADLM using the remasking sampler~\citep{remdm}, with results shown in Table~\ref{tab:remdm-exp-owt}. 
This flexibility enables more expressive and diverse sampling. 
Our pre-training method, ADLM, when combined with remasking sampler, achieves state-of-the-art performance across multiple metrics.
Importantly, it becomes the first DLM to outperform the AR model in MAUVE score, particularly at $T=2048$ and $T=4096$. 
This highlights that the effectiveness of anchoring is not tied to a specific generation strategy, further validating its robustness.

In addition to MAUVE, we report GPT-2 Large perplexity and entropy to assess generation quality and diversity with increasing number of sampling steps. While test PPL can be artificially lowered by repeating high-likelihood phrases, such models typically exhibit low entropy and fail to capture the richness of natural language. Our method maintains high entropy—closely matching the data distribution—while achieving low PPL and high MAUVE scores. This balance indicates that ADLM not only generates high quality human-like text but also preserves the diversity.

\begin{table*}[t]
\vspace{-4ex}
\centering
\caption{Evaluation of sample quality using the ADLM with remasking sampler~\citep{remdm} on OWT. 
ADLM$^{\dagger}$ outperforms state-of-the-art masked diffusion and flow-matching methods.
For $T=2048$ and $T=4096$, ADLM surpasses AR in MAUVE score (measures human-like text).}
\label{tab:remdm-exp-owt}
\resizebox{\textwidth}{!}{%
\begin{tabular}{lccccccccc}
\toprule
Method & \multicolumn{3}{c}{MAUVE ($\uparrow$)} & \multicolumn{3}{c}{Gen PPL. ($\downarrow$)} & \multicolumn{3}{c}{Entropy ($\uparrow$)} \\
\midrule
Data & \multicolumn{3}{c}{1.00} & \multicolumn{3}{c}{14.8} & \multicolumn{3}{c}{5.44} \\
\midrule
AR \textit{(T=1024)}$^{\dagger}$ & \multicolumn{3}{c}{0.760} & \multicolumn{3}{c}{12.1} & \multicolumn{3}{c}{5.22} \\
\midrule
& \textit{T=1024} & \textit{T=2048} & \textit{T=4096} & \textit{T=1024} & \textit{T=2048} & \textit{T=4096} & \textit{T=1024} & \textit{T=2048} & \textit{T=4096} \\
\midrule
SEDD (absorb) & 0.008 & 0.008 & 0.009 & 104.7 & 103.2 & 102.5 & 5.62 & 5.61 & 5.61 \\
MDLM & 0.042 & 0.037 & 0.035 & 51.3 & 51.3 & 50.9 & 5.46 & 5.46 & 5.45 \\
MDLM+FB & 0.133 & 0.197 & 0.243 & 33.8 & 28.6 & 22.8 & 5.35 & 5.28 & 5.18 \\
MDLM+DFM & 0.254 & 0.294 & 0.269 & 21.7 & 21.0 & 20.7 & 5.20 & 5.19 & 5.17 \\
ReMDM & 0.403 & 0.610 & 0.656 & 28.6 & 22.8 & 17.6 & 5.38 & 5.30 & 5.20 \\
\rowcolor{orange!25}
ADLM (ours) & \textbf{0.699} & \textbf{0.788} & \textbf{0.791} & 25.4 & 20.3 & 15.9 & 5.35 & 5.28 & 5.19 \\
\midrule
& \textit{T=128} & \textit{T=256} & \textit{T=512} & \textit{T=128} & \textit{T=256} & \textit{T=512} & \textit{T=128} & \textit{T=256} & \textit{T=512} \\
\midrule
SEDD (absorb) & 0.007 & 0.007 & 0.008 & 119.2 & 110.1 & 107.2 & 5.65 & 5.63 & 5.62 \\
MDLM & 0.015 & 0.023 & 0.031 & 61.5 & 55.8 & 53.0 & 5.52 & 5.49 & 5.48 \\
MDLM+FB & 0.064 & 0.084 & 0.100 & 42.8 & 39.6 & 37.1 & 5.44 & 5.41 & 5.38 \\
MDLM+DFM & 0.041 & 0.144 & 0.211 & 37.9 & 26.5 & 23.3 & 5.31 & 5.26 & 5.23 \\
ReMDM & 0.057 & 0.216 & 0.350 & 42.5 & 30.5 & 21.1 & 5.43 & 5.34 & 5.21 \\
\rowcolor{orange!25}
ADLM (ours) & \textbf{0.140} & \textbf{0.349} & \textbf{0.573} & 52.5 & 39.85 & 31.6 & 5.52 & 5.46 & 5.40 \\
\bottomrule
\end{tabular}
}
\vspace{-1.5ex}
\end{table*}
\begin{table}[t]
\vspace{-0.5ex}
\small
\centering
\caption{Zero-shot validation perplexities ($\downarrow$) of models trained on 524B tokens from OWT.
ADLM achieves a new state-of-the-art among diffusion language models and outperforms autoregressive (AR) models on three benchmarks: Lambada, PubMed, and ArXiv. All models use 1024 NFEs.}
\vspace{-1ex}
\label{tab:zero-shot-ppl}
\begin{tabular}{lccccccc}
\toprule
 & Lambada & PTB & Wikitext & LM1B & AG News & PubMed & ArXiv \\
\midrule
AR & 51.28 & 82.05 & 25.75 & 51.25 & 52.09 & 49.01 & 41.73 \\
\midrule
\textit{AR+Diffusion} \\
\rowcolor{gray!10}
BD3-LM ($L'=4$) & 50.03 & 96.81 & 31.31 & 60.88 & 61.67 & 42.52 & 39.20 \\
\midrule
\textit{Diffusion} \\
SEDD & 49.86 & 100.09 & 34.28 & 68.20 & 62.09 & 44.53 & 38.38 \\
MDLM & 47.52 & 95.26 & 32.83 & 67.01 & 61.15 & 41.89 & 37.37 \\
\rowcolor{orange!25}
ADLM (ours) (262B) & 44.93 & 98.16 & 32.45 & 65.59 & 57.10 & 38.29 & 35.08 \\
\rowcolor{orange!25}
ADLM (ours) (524B) & \textbf{44.32} & \textbf{95.37} & \textbf{31.94} & \textbf{64.43} & \textbf{55.72} & \textbf{37.56} & \textbf{33.69} \\
\bottomrule
\end{tabular}
\vspace{-4ex}
\end{table}

\vspace{-2ex}
\subsection{Auto-Regressive Models}
\label{sec:exp-arm}
\vspace{-1ex}

While DLMs are the primary focus of this paper, we observe that the benefits of anchoring extend beyond the diffusion setting. 
We find that inserting \texttt{[ANT]} after questions and before the start of (reason, answer) tokens enhances reasoning capabilities of AR models in supervised finetuning (SFT).

\textbf{Setup.} We use a pretrained GPT-2~\citep{gpt2} model as the base architecture. We finetune the base model on math and logical reasoning tasks using standard SFT on reasoning traces and answers.
We evaluate on three benchmarks: (1) GSM8K~\citep{gsm8k}--grade-school math problems with arithmetic reasoning, (2) ProntoQA~\citep{prontoqa}--rule-based logical reasoning, and (3) ProsQA~\citep{coconut}-- planning with structured reasoning over graph-based inference traces.
Our experimental setup follows the fine-tuning protocols outlined in prior work~\citep{coconut}, enabling direct comparison with established baselines.

\noindent\textbf{Baselines.} 
We compare against a range of latent reasoning and chain-of-thought (CoT) methods. These include standard CoT finetuning~\citep{cot}, improved variants such as iCoT~\citep{icot}, and multi-stage fine-tuning approaches like \textsc{Coconut}~\citep{coconut}. 
We compared with two additional baselines: \textbf{No-CoT}, which trains models directly on question-answer pairs without intermediate reasoning traces, and \textbf{Pause Token}~\citep{pause}, which inserts special pause tokens between the question and answer to encourage thinking.
These methods are finetuned using the same base model: GPT-2 (openai-community/gpt2) with identical parameter count.
We also include a recent work BoLT~\citep{bolt} that reasons to learn from latent thoughts.
\vspace{-2ex}
\subsubsection{Improved Reasoning using Anchored Chain-of-Thought}
\label{sec:exp-acot}
\vspace{-1ex}
Inspired by recent work on chain-of-thought prompting~\citep{cot, coconut, pause}, we investigate whether anchoring improves the reasoning ability of AR models in both math and logic domains. 
To operationalize our anchoring mechanism in AR models, we insert \texttt{[ANT]} after question and before (reason, answer) tokens, and then use standard SFT with important tokens from reasoning traces as lables for these \texttt{[ANT]} tokens.
We defer implementation details to  \S\ref{sec:addn-exps}.
We refer to this variant as \textbf{Anchored Chain-of-Thought (ACoT)} and show the results in Table~\ref{tab:acot-results}.

\begin{wraptable}{r}{0.5\textwidth}
\centering
\vspace{-4ex}
\caption{
\textbf{Accuracy (\%) on Math and Logical Reasoning.}
ACoT improves the performance of prior (continuous) latent reasoning methods despite using the same multi-stage training setup as \textsc{Coconut}.
$\dagger$ reported in \textsc{Coconut}.
}
\label{tab:acot-results}
\small
\centering
\begin{tabular}{lccc}
\toprule
\textbf{Method} & \textbf{GSM8K} & \textbf{ProntoQA} & \textbf{ProsQA} \\
\midrule
No-CoT$^\dagger$ & 16.5 & 93.8 & 76.7 \\
Pause Token$^\dagger$ & 16.4 & 77.7 & 75.9 \\
CoT$^\dagger$ & 42.9 & 98.8 & 77.5 \\
iCoT & 30.0 & 99.8 & \textbf{98.2} \\
\textsc{Coconut}$^\dagger$ & 34.1 & 99.8 & 97.0 \\
\hspace{2ex}\textit{- Pause}$^\dagger$ & 24.1 & \textbf{100} & 96.6 \\
BoLT & 33.6 & -- & -- \\
\midrule
\rowcolor{orange!25}
\textbf{ACoT (ours)} & \textbf{45.2} & \textbf{100} & 97.3 \\
\bottomrule
\end{tabular}
\vspace{-3ex}
\end{wraptable}
\textbf{Results on Math.}
On GSM8K, ACoT achieves an accuracy of 45.2\%, outperforming compared baselines, including standard CoT (42.9\%) and multi-stage finetuning approaches like \textsc{Coconut} (34.1\%). Anchoring improves decoding by treating an ordered subset of reasoning trace tokens, after filtering out punctuation and arithmetic operators ($+,-,\times,\div$), as anchoring tokens. This guides the ACoT model through important tokens before generating reasoning traces and the final answer.

\textbf{Results on Logic.}
We anchor using valid nodes from the reasoning traces after pruning conjunctions, articles or adjectives, such as `every', is', and `a'.
As recommended in \textsc{Coconut}, we progressively increase the number of \texttt{[ANT]} tokens.
For ProsQA, we gradually remove the reasoning steps while inserting \texttt{[ANT]} tokens, which helps enhance logical reasoning~\citep{coconut}. 
On a relatively easier benchmark ProntoQA, ACoT achieves 100\% accuracy, matching or exceeding prior approaches. 
On the more challenging ProsQA benchmark, ACoT reaches 97.3\%, improving over \textsc{Coconut} (97.0\%) and surpassing CoT variants except iCoT. 
We provide additional results and discussion in \S\ref{sec:addn-exps}.

\vspace{-2ex}
\section{Conclusion}
\label{sec:conc}
\vspace{-1ex}
We introduced the Anchored Diffusion Language Model (ADLM), a two-stage generative framework that improves diffusion language modeling by leveraging anchor tokens (e.g., low-frequency or important key words).
We provide theoretical justification along with strong empirical evidence supporting our results. 
Our method bridges the gap between diffusion and AR models in likelihood modeling and generated text quality. 
ADLM significantly reduces test PPL on LM1B and OWT, outperforming previous DLMs in 6 out of 7 zero-shot benchmarks, and, for the first time, enables a diffusion model to outperform AR models in MAUVE score that measures human-like text generation quality.
Beyond diffusion, we demonstrate that anchoring is broadly applicable and improves reasoning in AR models. 
Our Anchored Chain-of-Thought (ACoT) method improves performance on math and logic benchmarks, outperforming existing approaches.
These results highlight the impact of anchoring as a general-purpose framework for language modeling and complex reasoning.\\
\noindent\textbf{Limitation.}
While anchoring yields consistent gain, the definition of token importance is task-specific.
We use low frequency tokens or key words as proxies for importance, which may not generalize. Future work may explore adaptive or LLM-guided anchoring for efficient planning and reasoning.

\section*{Acknowledgments}
This research has been supported by NSF Grants 2019844 and 2112471, the UT Austin Machine Learning Lab, and computing support on the Vista GPU Cluster through the Center for Generative AI (CGAI) and the Texas Advanced Computing Center (TACC) at UT Austin.

\bibliography{neurips_2025}
\bibliographystyle{neurips_2025}

\clearpage
\newpage
\appendix
\section{Additional Theoretical Results and Proofs}
\label{sec:addn-theory}

This appendix provides complete theoretical results that were either stated without proof or omitted from the main text due to space constraints. For completeness, we restate key theorems and provide detailed proofs, along with additional theoretical insights.
In \S\ref{sec:appendix-anelbo-full}, we present the full proof of \textbf{Theorem~\ref{thm:anelbo}}. In \S\ref{sec:app-dlm-trans-ker}, we derive the transition kernel for masked DLM for completeness. Finally, in \S\ref{sec:app-anchored-graph}, we provide a detailed discussion on the statistical benefits of anchoring in both diffusion and autoregressive models, with an emphasis on sample complexity and likelihood modeling.

\subsection{Proof of Theorem~\ref{thm:anelbo}}
\label{sec:appendix-anelbo-full}
We follow the standard discrete diffusion analysis for the NELBO objective~\citep{sohl2015deep},
but incorporate our anchored parameterization into the divergence computation.

\begin{theorem}[Anchored Negative Evidence Lower Bound]
\label{thm:app-anelbo}
Suppose the forward process follows \eqref{eq:fwd}, and 
the inference posterior is parameterized by anchored denoising as in~\eqref{eq:inf-post-adlm}. 
Denote by $\theta$ the collection of parameters of the anchor and denoiser networks, i.e., $\theta = [\psi, \varphi]$.
Let $\gA(\cdot)$ be an operator that takes a sequence $\rvx = (\rvx^l)_{l=1}^L$ as input and returns an important token mixture $\rvy = (\rvy^l)_{l=1}^L = \gA(\rvx)$ as output.
Then, the anchored negative log-likelihood is bounded by:
\begin{align*}
-\log p_\theta(\rvx) + \gamma \gL_{\mathrm{Anchor}}(\rvx;\varphi) 
\;\leq\;
\gL_{\mathrm{ANELBO}}(\rvx; \psi, \varphi), \quad \text{where}
\end{align*}
\begin{align*}
\gL_{\mathrm{ANELBO}}(\rvx; \psi, \varphi) 
&\coloneq 
\E_{Z_0 \sim q(\cdot|\rvx)}\left[-\log p_\psi(\rvx | \rvy_\varphi(Z_0))\right] \\
&+ 
\sum_{i=1}^{T}\E_{Z_{t(i)} \sim q(\cdot|\rvx)} 
\left[
\lambda_{t(i)} 
\sum_{l=1}^L \left(
\log\langle \rvx^l_\psi(\rvy_\varphi(Z_{t(i)})), \rvx^l\rangle + \gamma \log\langle \rvy^l_\varphi(Z_{t(i)}), \rvy^l\rangle
\right)
\right],
\end{align*}
with weight $\lambda_{t(i)} = \frac{(1-\sigma_{t(i)})\alpha_{t(i)} - \alpha_{s(i)}}{1-\alpha_{t(i)}}$ and $\gamma > 0$.
\end{theorem}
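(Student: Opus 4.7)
The plan is to follow the classical discrete-diffusion variational argument (as in D3PM/MDLM leading to \eqref{eq:nelbo}), but carried out with the anchored reverse kernel \eqref{eq:inf-post-adlm} and an extra KL block that will absorb the anchor loss $\gL_{\mathrm{Anchor}}$. First, I would add $\gamma\,\gL_{\mathrm{Anchor}}(\rvx;\varphi)$ to $-\log p_\theta(\rvx)$ and apply Jensen's inequality to a joint variational distribution that couples $Z_{0:1}$ (under the forward process $q(\cdot|\rvx)$ in \eqref{eq:fwd}) with an auxiliary anchor trajectory $Y_{0:1}$ distributed according to the ground-truth anchored kernel $r(\cdot|Z_t,\rvy)$ from \eqref{eq:anchor-gt}, with $\rvy=\gA(\rvx)$. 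Using the Markov factorizations of both the forward process and the generative process $p_\theta(\rvx,\rvz_{0:1}) = p_\theta(\rvz_1)\,p_\theta(\rvx|\rvz_0)\prod_i p_\theta(\rvz_{s(i)}|\rvz_{t(i)})$, this yields the familiar decomposition into (i) a reconstruction term $\E_{Z_0}[-\log p_\psi(\rvx|\rvy_\varphi(Z_0))]$, (ii) a prior KL (vanishing since $\alpha_1=0$ forces both forward and reverse marginals to the all-mask state), (iii) a sum over $i$ of $\E_{Z_{t(i)}}[D_{\mathrm{KL}}(q(Z_{s(i)}|Z_{t(i)},\rvx)\,\|\,p_\theta(Z_{s(i)}|Z_{t(i)}))]$, and (iv) a parallel sum $\gamma\sum_i \E_{Z_{t(i)}}[D_{\mathrm{KL}}(r(Y_{s(i)}|Z_{t(i)},\rvy)\,\|\,r_\varphi(Y_{s(i)}|Z_{t(i)}))]$ which, summed over $i$, is exactly $\gamma\,\gL_{\mathrm{Anchor}}(\rvx;\varphi)$ on the right-hand side.

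Next I would evaluate the two families of KL divergences in closed form, exploiting the fact that both $q$ and $p_\theta$ factor over the $L$ positions and share an identical mask-coordinate probability in \eqref{eq:inf-post-adlm}. Case split on each coordinate $l$. When $\rvz_t^l\neq\rvm$, carry-over unmasking forces $\rvx_\psi^l(\rvy_\varphi(\rvz_t))=\rvz_t^l=\rvx^l$, so the two categoricals $\mathrm{Cat}(\cdot;(1-\sigma_t)\rvx^l+\sigma_t\rvm)$ coincide and the KL vanishes; the same is true for the anchor KL since $\rvy^l=\gA(\rvx^l)=\gA(\rvz_t^l)$ is determined. When $\rvz_t^l=\rvm$, the mask-coordinate contributions cancel, and because $\rvx^l$ is one-hot at some index $j$, the KL collapses to
\begin{equation*}
\tfrac{\alpha_s-(1-\sigma_t)\alpha_t}{1-\alpha_t}\,\log\tfrac{1}{\langle\rvx_\psi^l(\rvy_\varphi(\rvz_t)),\rvx^l\rangle}
\;=\;-\lambda_t\,\log\langle\rvx_\psi^l(\rvy_\varphi(\rvz_t)),\rvx^l\rangle,
\end{equation*}
with $\lambda_t=\tfrac{(1-\sigma_t)\alpha_t-\alpha_s}{1-\alpha_t}$. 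An identical computation for the anchor KL (with $\rvy^l$ one-hot on the designated important token) yields $-\lambda_t\,\log\langle\rvy_\varphi^l(\rvz_t),\rvy^l\rangle$.

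Finally I would reassemble: summing the coordinate-wise KLs, moving the minus signs into the $-\log p_\theta(\rvx)$ side, and absorbing the unmasked coordinates (which contribute $0$) into the unconditional sum over $l$ gives precisely $\lambda_{t(i)}\sum_l\bigl(\log\langle\rvx_\psi^l,\rvx^l\rangle+\gamma\log\langle\rvy_\varphi^l,\rvy^l\rangle\bigr)$ inside the $\E_{Z_{t(i)}}$, matching $\gL_{\mathrm{ANELBO}}$. Combined with the reconstruction term and the non-negativity of the discarded prior KL, this establishes the claimed inequality.

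The main obstacle will be bookkeeping in the case split: specifically, justifying that the unmasked-coordinate KLs are truly zero under the carry-over/zero-masking parameterization (and the analogous property for the anchor operator $\gA$ acting position-wise), and tracking signs so that the positive-KL computation matches the negative weight $\lambda_{t(i)}$ applied to a non-positive log-inner-product. Once that is pinned down, the remainder is a routine categorical KL calculation analogous to the MDLM derivation of \eqref{eq:nelbo}, with $(\alpha_t-\alpha_s)$ replaced by $(1-\sigma_t)\alpha_t-\alpha_s$ and an extra factor of $\gamma$ on the anchor channel.
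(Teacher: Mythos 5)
Your proposal follows essentially the same route as the paper's proof: derive the standard NELBO via Jensen on the Markov forward/reverse chains, add $\gamma\gL_{\mathrm{Anchor}}$ (itself a sum of KL divergences over the anchor transitions), and evaluate each per-step KL by case-splitting on whether $\rvz_t^l$ is masked — the unmasked case vanishing because both categoricals share the $(1-\sigma_t)$-remask form, and the masked case collapsing to a weighted log-inner-product because the mask-mass terms cancel and the non-mask probabilities are identical up to a factor of $\langle\rvx_\psi^l,\rvx^l\rangle$ (resp.\ $\langle\rvy_\varphi^l,\rvy^l\rangle$). Two points worth tightening.

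First, the framing ``apply Jensen to a joint variational over $(Z_{0:1},Y_{0:1})$'' doesn't quite match the statement: the left-hand side retains $\gamma\gL_{\mathrm{Anchor}}$, so you are not bounding $-\log p_\theta(\rvx)$ by a coupled ELBO — the paper simply applies ordinary Jensen to get $-\log p_\theta(\rvx)\le\gL_{\mathrm{NELBO}}$ and then adds $\gamma\gL_{\mathrm{Anchor}}$ to both sides, with $\gL_{\mathrm{ANELBO}}\coloneq\gL_{\mathrm{NELBO}}+\gamma\gL_{\mathrm{Anchor}}$ by definition; no joint generative model over an anchor trajectory $Y$ is ever introduced or needed.

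Second, your masked-coordinate KL has a sign slip. You write
\begin{equation*}
\tfrac{\alpha_s-(1-\sigma_t)\alpha_t}{1-\alpha_t}\,\log\tfrac{1}{\langle\rvx_\psi^l(\rvy_\varphi(\rvz_t)),\rvx^l\rangle}
\;=\;-\lambda_t\,\log\langle\rvx_\psi^l(\rvy_\varphi(\rvz_t)),\rvx^l\rangle,
\end{equation*}
but since $\lambda_t=\tfrac{(1-\sigma_t)\alpha_t-\alpha_s}{1-\alpha_t}=-\tfrac{\alpha_s-(1-\sigma_t)\alpha_t}{1-\alpha_t}$, the left side is $(-\lambda_t)\cdot\bigl(-\log\langle\rvx_\psi^l,\rvx^l\rangle\bigr)=\lambda_t\log\langle\rvx_\psi^l,\rvx^l\rangle$, not $-\lambda_t\log\langle\cdot\rangle$. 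As written, your expression is non-positive (the coefficient $-\lambda_t>0$ times a non-positive log), which contradicts the non-negativity of KL. With the corrected sign the masked-coordinate KL is already the non-negative quantity $\lambda_t\log\langle\cdot\rangle$ (a product of two non-positive factors), and the ``moving minus signs to the $-\log p_\theta(\rvx)$ side'' step you invoke is unnecessary — the KL terms sit directly in the upper bound and do not change sides.
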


\begin{proof}
We first derive the bound for sequence length $L=1$; the extension to $L>1$ is straightforward following the standard analysis~\citep{sohl2015deep}.

We start from the standard negative log-likelihood:
\begin{align*}
    -\log p_\theta(\rvx) 
    &= -\log \int p_\theta(\rvx, Z_0, \dots, Z_1) \, d(Z_0, \dots, Z_1) \\
    &= -\log \int \frac{p_\theta(\rvx, Z_{0:1})}{q(Z_{0:1}|\rvx)} q(Z_{0:1}|\rvx) \, d(Z_{0:1}).
\end{align*}
Applying Jensen's inequality yields:
\begin{align*}
    -\log p_\theta(\rvx) 
    \leq 
    \E_{q(Z_{0:1}|\rvx)}\left[
        -\log p_\theta(\rvx|Z_0) + \log \frac{q(Z_1|\rvx)}{p_\theta(Z_1)} + \sum_{i=1}^T \log \frac{q(Z_{s(i)}|Z_{t(i)}, \rvx)}{p_\theta(Z_{s(i)}|Z_{t(i)})}
    \right] \coloneq \gL_{\mathrm{NELBO}}(\rvx;\theta)
\end{align*}
Thus, the NELBO decomposes into:
\begin{align*}
    \gL_{\mathrm{NELBO}}(\rvx;\theta)
    & =     
    \E_{q(Z_{0:1}|\rvx)}\left[
        -\log p_\theta(\rvx|Z_0)
        + \mathrm{D}_{\mathrm{KL}}\big(q(Z_1|\rvx) \,\|\, p_\theta(Z_1)\big)
        + \sum_{i=1}^{T} \mathrm{D}_{\mathrm{KL}}\big(q(Z_{s(i)}|Z_{t(i)},\rvx) \,\|\, p_\theta(Z_{s(i)}|Z_{t(i)})\big)
    \right]    
\end{align*}

Combining the NELBO decomposition with our anchored loss \eqref{eq:loss-anchor} gives: 
\begin{align*}
    & \gL_{\mathrm{ANELBO}}(\rvx;\psi, \varphi) 
     =
    \gL_{\mathrm{NELBO}}(\rvx;\psi, \varphi) + \gamma \gL_{\mathrm{Anchor}}(\rvx;\varphi)\\
    & 
    = 
    \E_{q(Z_{0:1}|\rvx)}\left[
        -\log p_\theta(\rvx|Z_0)
        + \mathrm{D}_{\mathrm{KL}}(q(Z_1|\rvx) \| p_\theta(Z_1))
        + \sum_{i=1}^{T} \mathrm{D}_{\mathrm{KL}}(q(Z_{s(i)}|Z_{t(i)},\rvx) \| p_\theta(Z_{s(i)}|Z_{t(i)}))
    \right] \\
    &\hspace{2ex} + \gamma
    \E_{q(Z_{0:1}|\rvx)}\left[
    \sum_{i=1}^{T} \mathrm{D}_{\mathrm{KL}}(r(Y_{s(i)}|Z_{t(i)},\rvy) \| r_\varphi(Y_{s(i)}|Z_{t(i)}))
    \right]\\
    & 
    = 
    \E_{q(Z_{0:1}|\rvx)}\Bigg[-\log p_\theta(\rvx|Z_0)
    \Bigg]
    + 
    \E_{q(Z_{0:1}|\rvx)}\Bigg[
    \mathrm{D}_{\mathrm{KL}}(q(Z_1|\rvx) \| p_\theta(Z_1))
    \Bigg]
    \\
    & \hspace{2ex} +
    \E_{q(Z_{0:1}|\rvx)}\Bigg[
    \sum_{i=1}^{T} \mathrm{D}_{\mathrm{KL}}(q(Z_{s(i)}|Z_{t(i)},\rvx) \| p_\theta(Z_{s(i)}|Z_{t(i)}))
    + \gamma
    \mathrm{D}_{\mathrm{KL}}(r(Y_{s(i)}|Z_{t(i)},\rvy) \| r_\varphi(Y_{s(i)}|Z_{t(i)}))
    \Bigg] 
\end{align*}

The three terms in the above expression have natural interpretations:
\begin{itemize}
    \item The first term captures the reconstruction loss at the final step of the reverse process.
    \item The second term measures the error due to mismatch between the stationary distribution of the forward process and the initial distribution of the reverse process. This vanishes when the reverse process is initialized with a sequence of all masks.
    \item The third term aggregates the KL divergences across diffusion steps, and encodes the difficulty of denoising masked tokens. Our anchor network aims to reduce this difficulty by enabling early decoding of important tokens. 
\end{itemize}

We now focus on analyzing the third term defined as:
\begin{align*}
    \gL_{\text{diffusion}}(\rvx; \psi, \varphi) 
    \coloneqq 
    \sum_{i=1}^{T} 
    \mathrm{D}_{\mathrm{KL}}(q(Z_{s(i)}|Z_{t(i)},\rvx) \| p_\theta(Z_{s(i)}|Z_{t(i)}))
    + \gamma
    \mathrm{D}_{\mathrm{KL}}(r(Y_{s(i)}|Z_{t(i)},\rvy) \| r_\varphi(Y_{s(i)}|Z_{t(i)}))
\end{align*}
Since $\rvy = \gA(\rvx)$, each KL divergence can be computed by splitting into two cases, depending on whether the token $Z_{t(i)}$ is already unmasked.

\noindent\textbf{Case 1: $Z_{t(i)} \neq \rvm$ (unmasked).} In this case, the diffusion loss for the $i^{th}$ KL-Divergence term:
\begin{align*}
    &
    \gL^i_{\text{diffusion}}\left(\rvx;\psi, \varphi\right) 
    =    
    \E_{q(Z_{s(i)}|Z_{t(i)},\rvx)}
    \Bigg[\log \Bigg(\frac{q(Z_{s(i)}|Z_{t(i)},\rvx)}{p_\theta(Z_{s(i)}|Z_{t(i)})}\Bigg) \Bigg]
    + \gamma
    \E_{r(Y_{s(i)}|Z_{t(i)},\rvy)}
    \Bigg[\log \Bigg(\frac{r(Y_{s(i)}|Z_{t(i)},\rvy)}{r_\varphi(Y_{s(i)}|Z_{t(i)})}\Bigg) \Bigg]
    \\
    & 
    = 
    q(Z_{s(i)}=\rvm|Z_{t(i)}\neq\rvm,\rvx) \log \Bigg(\frac{q(Z_{s(i)}=\rvm|Z_{t(i)}\neq\rvm,\rvx)}{p_\theta(Z_{s(i)}=\rvm|Z_{t(i)}\neq\rvm)}\Bigg)\\    
    &    
    \hspace{2ex}+
    q(Z_{s(i)}=\rvx|Z_{t(i)}\neq\rvm,\rvx) \log \Bigg(\frac{q(Z_{s(i)}=\rvx|Z_{t(i)}\neq\rvm,\rvx)}{p_\theta(Z_{s(i)}=\rvx|Z_{t(i)}\neq\rvm)}\Bigg)\\
    &
    \hspace{2ex}+ \gamma
    r(Y_{s(i)}=\rvm|Z_{t(i)}\neq\rvm,\rvy) \log \Bigg(\frac{r(Y_{s(i)}=\rvm|Z_{t(i)}\neq\rvm,\rvy)}{r_\varphi(Y_{s(i)}=\rvm|Z_{t(i)}\neq\rvm)}\Bigg)\\    
    &    
    \hspace{2ex}+ \gamma
    r(Y_{s(i)}=\rvy|Z_{t(i)}\neq\rvm,\rvy) \log \Bigg(\frac{r(Y_{s(i)}=\rvy|Z_{t(i)}\neq\rvm,\rvy)}{r_\varphi(Y_{s(i)}=\rvy|Z_{t(i)}\neq\rvm)}\Bigg)\\
    &
    =
    q(Z_{s(i)}=\rvm|Z_{t(i)}\neq\rvm,\rvx) \log \Bigg(\frac{q(Z_{s(i)}=\rvm|Z_{t(i)}\neq \rvm,\rvx)}{q(Z_{s(i)}=\rvm|Z_{t(i)}\neq\rvm, \rvx_\psi(\rvy_\varphi(Z_{t(i)}))}\Bigg)
    \\
    &    
    \hspace{2ex}+
    q(Z_{s(i)}=\rvx|Z_{t(i)}\neq\rvm,\rvx) \log \Bigg(\frac{q(Z_{s(i)}=\rvx|Z_{t(i)}\neq\rvm,\rvx)}{q(Z_{s(i)}=\rvx|Z_{t(i)}\neq\rvm, \rvx_\psi(\rvy_\varphi(Z_{t(i)}))}\Bigg)
    \\
    & \hspace{2ex}+\gamma
    r(Y_{s(i)}=\rvm|Z_{t(i)}\neq\rvm,\rvy) \log \Bigg(\frac{r(Y_{s(i)}=\rvm|Z_{t(i)}\neq \rvm,\rvy)}{r(Y_{s(i)}=\rvm|Y_{t(i)}\neq\rvm, \rvy_\varphi(Z_{t(i)}))}\Bigg)
    \\
    &    
    \hspace{2ex}+\gamma
    r(Y_{s(i)}=\rvy|Z_{t(i)}\neq\rvm,\rvy) \log \Bigg(\frac{r(Y_{s(i)}=\rvy|Z_{t(i)}\neq\rvm,\rvy)}{r(Y_{s(i)}=\rvy|Z_{t(i)}\neq\rvm, \rvy_\varphi(Z_{t(i)}))}\Bigg)
    \\
    &
    =
    (1+\gamma)
    \sigma_{t(i)} \log \Bigg(\frac{\sigma_{t(i)}}{\sigma_{t(i)}}\Bigg)
    +
    (1+\gamma)
    (1-\sigma_{t(i)}) \log \Bigg(\frac{(1-\sigma_{t(i)})}{(1-\sigma_{t(i)})}\Bigg)
    =    
    0.
\end{align*}
\noindent\textbf{Implications.} 
This result demonstrates that when the generative model’s reverse transition aligns with the inference posterior for unmasked tokens, the diffusion loss becomes zero. 
This validates the effectiveness of our two-stage parameterization.
The key implications are:

\begin{itemize}
    \item \textbf{Unbiased Learning:} Anchoring introduces no additional bias when its distribution matches the inference posterior \eqref{eq:inf-post-adlm}.
    \item \textbf{Tight Variational Bound:} The ANELBO objective~\eqref{eq:anelbo} remains a tight bound on the data likelihood, ensuring the theoretical soundness of our formulation.
\end{itemize}

\noindent\textbf{Case 2: $Z_t = \rvm$ (masked).}
Following the anchored denoising formulation, we obtain:
\begin{align*}
    &
    \gL^i_{\text{diffusion}}\left(\rvx;\psi, \varphi\right)
    =
    \E_{q(Z_{s(i)}|Z_{t(i)},\rvx)}
    \Bigg[\log \Bigg(\frac{q(Z_{s(i)}|Z_{t(i)},\rvx)}{p_\theta(Z_{s(i)}|Z_{t(i)})}\Bigg) \Bigg]
    + \gamma
    \E_{r(Y_{s(i)}|Z_{t(i)},\rvy)}
    \Bigg[\log \Bigg(\frac{r(Y_{s(i)}|Z_{t(i)},\rvy)}{r_\varphi(Y_{s(i)}|Z_{t(i)})}\Bigg) \Bigg]\\
    & 
    = 
    q(Z_{s(i)}=\rvm|Z_{t(i)}=\rvm,\rvx) \log \Bigg(\frac{q(Z_{s(i)}=\rvm|Z_{t(i)}=\rvm,\rvx)}{p_\theta(Z_{s(i)}=\rvm|Z_{t(i)}=\rvm)}\Bigg)\\    
    &    
    \hspace{2ex}+
    q(Z_{s(i)}=\rvx|Z_{t(i)}=\rvm,\rvx) \log \Bigg(\frac{q(Z_{s(i)}=\rvx|Z_{t(i)}=\rvm,\rvx)}{p_\theta(Z_{s(i)}=\rvx|Z_{t(i)}=\rvm)}\Bigg)\\
    & 
    \hspace{2ex}+ \gamma
    r(Y_{s(i)}=\rvm|Z_{t(i)}=\rvm,\rvy) \log \Bigg(\frac{r(Y_{s(i)}=\rvm|Z_{t(i)}=\rvm,\rvy)}{r_\varphi(Y_{s(i)}=\rvm|Z_{t(i)}=\rvm)}\Bigg)\\
    &
    \hspace{2ex}+ \gamma
    r(Y_{s(i)}=\rvy|Z_{t(i)}=\rvm,\rvy) \log \Bigg(\frac{r(Y_{s(i)}=\rvy|Z_{t(i)}=\rvm,\rvy)}{r_\varphi(Y_{s(i)}=\rvy|Z_{t(i)}=\rvm)}\Bigg)\\
    &
    =
    q(Z_{s(i)}=\rvm|Z_{t(i)}=\rvm,\rvx) \log \Bigg(\frac{q(Z_{s(i)}=\rvm|Z_{t(i)}= \rvm,\rvx)}{q(Z_{s(i)}=\rvm|Z_{t(i)}=\rvm, \rvx_\psi(\rvy_\varphi(Z_{t(i)})))}\Bigg)
    \\
    &    
    \hspace{2ex}+
    q(Z_{s(i)}=\rvx|Z_{t(i)}=\rvm,\rvx) \log \Bigg(\frac{q(Z_{s(i)}=\rvx|Z_{t(i)}=\rvm,\rvx)}{q(Z_{s(i)}=\rvx|Z_{t(i)}=\rvm, \rvx_\psi(\rvy_\varphi(Z_{t(i)})))}\Bigg)\\
    &    
    \hspace{2ex}+ \gamma
    r(Y_{s(i)}=\rvm|Z_{t(i)}=\rvm,\rvy) \log \Bigg(\frac{r(Y_{s(i)}=\rvm|Z_{t(i)}= \rvm,\rvy)}{r(Y_{s(i)}=\rvm|Z_{t(i)}=\rvm, \rvy_\varphi(Z_{t(i)}))}\Bigg)
    \\
    &    
    \hspace{2ex}+ \gamma
    r(Y_{s(i)}=\rvy|Z_{t(i)}=\rvm,\rvy) \log \Bigg(\frac{r(Y_{s(i)}=\rvy|Z_{t(i)}=\rvm,\rvy)}{r(Y_{s(i)}=\rvy|Z_{t(i)}=\rvm, \rvy_\varphi(Z_{t(i)}))}\Bigg)\\
    & 
    =
    \Big(\frac{1-\alpha_s - \sigma_t \alpha_t}{1-\alpha_t} \Big)
    \log \Bigg[\frac{\Big(\frac{1-\alpha_s - \sigma_t \alpha_t}{1-\alpha_t}\Big)}{\Big(\frac{1-\alpha_s - \sigma_t \alpha_t}{1-\alpha_t}\Big)} \Bigg]
    +
    \Big(\frac{\alpha_s - \alpha_t +\alpha_t \sigma_t}{1-\alpha_t}\Big)
    \log \Bigg[
    \frac{\Big(\frac{\alpha_s - \alpha_t +\alpha_t \sigma_t}{1-\alpha_t}\Big)}{\Big(\frac{\alpha_s - \alpha_t +\alpha_t \sigma_t}{1-\alpha_t}\Big) ~\langle\rvx_\psi(\rvy_\varphi(Z_{t}))),\rvx\rangle}
    \Bigg]\\
    & \hspace{2ex}
    +
    \gamma 
    \Big(\frac{1-\alpha_s - \sigma_t \alpha_t}{1-\alpha_t} \Big)
    \log \Bigg[\frac{\Big(\frac{1-\alpha_s - \sigma_t \alpha_t}{1-\alpha_t}\Big)}{\Big(\frac{1-\alpha_s - \sigma_t \alpha_t}{1-\alpha_t}\Big)} \Bigg]    
    +
    \gamma
    \Big(\frac{\alpha_s - \alpha_t +\alpha_t \sigma_t}{1-\alpha_t}\Big)
    \log \Bigg[
    \frac{\Big(\frac{\alpha_s - \alpha_t +\alpha_t \sigma_t}{1-\alpha_t}\Big)}{\Big(\frac{\alpha_s - \alpha_t +\alpha_t \sigma_t}{1-\alpha_t}\Big) ~\langle\rvy_\varphi(Z_{t}))), \rvy\rangle}
    \Bigg]    
    \\
    & =     
    \Big(\frac{(1-\sigma_{t(i)})\alpha_{t(i)} - \alpha_{s(i)} }{1-\alpha_{t(i)}}\Big)
    \Bigg[
    \log \langle \rvx_\psi(\rvy_\varphi(Z_{t(i)})),\rvx \rangle
    + \gamma \log \langle\rvy_\varphi(Z_{t(i)}), \rvy \rangle
    \Bigg]
\end{align*}

Combining Case 1 and Case 2, we conclude the proof.
\end{proof}
\noindent

\noindent\textbf{Summary.} 
The complete derivation of the ANELBO \eqref{eq:anelbo} formally establishes the theoretical soundness of our two-stage ADLM parameterization. 
It confirms that anchoring introduces no additional bias and remains a tight bound on the data manifold.
When the anchor and denoising networks are properly aligned with the inference posterior, the KL terms decompose nicely under our parameterization.
This leads to a variational bound that reflects both token-level reconstruction and anchor-level guidance,
enabling effective learning in large-scale diffusion language models.

\subsection{Derivation of Absorbing Transition Kernel}
\label{sec:app-dlm-trans-ker}

This derivation is a special case of the D3PM framework~\citep{d3pm} applied to masked diffusion language modeling. We include a simplified proof here for completeness.

Recall from \S\ref{sec:dlm} that the forward noising process is defined as:
\begin{align}
    q(\rvz_t|\rvx) \coloneqq \mathrm{Cat}\left(\rvz_t; \alpha_t \rvx + (1-\alpha_t)\rvm\right), \quad t \in \left\{ \frac{1}{T}, \frac{2}{T}, \dots, 1 \right\},
    \label{eq:app-fwd}
\end{align}
where $\rvm$ denotes the mask token distribution and $\alpha_t \in [0,1]$ controls the corruption level at time $t$.

We aim to derive the transition kernel:
\begin{align}
    q(\rvz_t | \rvz_s) = \mathrm{Cat}\left(\rvz_t; \alpha_{t|s} \rvz_s + (1 - \alpha_{t|s}) \rvm\right),
\end{align}
where $\alpha_{t|s} := \alpha_t / \alpha_s$ for $t > s$.

\noindent\textbf{Law of Total Probability.}
We begin by marginalizing over $Z_s$:
\begin{align*}
    q(\rvz_t | \rvx) 
    &= \sum_{\rvz_s} q(\rvz_t | \rvz_s, \rvx) ~ q(\rvz_s | \rvx) \\
    &= q(\rvz_t |\rvz_s = \rvm) ~ q(\rvz_s = \rvm | \rvx) 
    + q(\rvz_t | \rvz_s = \rvx) ~ q(\rvz_s = \rvx | \rvx).
\end{align*}

\noindent\textbf{Simplifying the Components.}
From the forward process in Eq.~\eqref{eq:app-fwd}, we know:
\begin{align*}
    q(\rvz_s = \rvx | \rvx) &= \alpha_s, \\
    q(\rvz_s = \rvm | \rvx) &= 1 - \alpha_s.
\end{align*}

Let $\alpha := q(\rvz_t = \rvx | \rvz_s = \rvx)$, and note that since $\rvm$ is an absorbing state, we have:
\begin{align*}
    q(\rvz_t = \rvx | \rvz_s = \rvm) &= 0, \\
    q(\rvz_t = \rvm | \rvz_s = \rvm) &= 1.
\end{align*}

Then, the marginal probability of $\rvz_t = \rvm$ given $\rvx$ becomes:
\begin{align*}
    q(\rvz_t = \rvm | \rvx) 
    &= q(\rvz_t = \rvm | \rvz_s = \rvm) ~ q(\rvz_s = \rvm | \rvx) 
    + q(\rvz_t = \rvm | \rvz_s = \rvx) ~ q(\rvz_s = \rvx | \rvx) \\
    &= (1 - \alpha_s) + (1 - \alpha) ~ \alpha_s \\
    &= 1 - \alpha \alpha_s.
\end{align*}

From Eq.~\eqref{eq:app-fwd}, we also know:
\[
q(\rvz_t = \rvm | \rvx) = 1 - \alpha_t.
\]

Equating the two expressions:
\[
1 - \alpha \alpha_s = 1 - \alpha_t \quad \Rightarrow \quad \alpha = \frac{\alpha_t}{\alpha_s} = \alpha_{t|s}.
\]

Thus, we have shown:
\[
q(\rvz_t | \rvz_s = \rvz_s) = \mathrm{Cat}\left(\rvz_t; \alpha_{t|s} \rvz_s + (1 - \alpha_{t|s}) \rvm\right),
\]
which completes the derivation of the absorbing transition kernel.

\subsection{Anchored Graphical Model Analysis}
\label{sec:app-anchored-graph}
The foundational principle behind both AR and DLM pre-training is Maximum Likelihood Estimation (MLE), which optimizes their respective log-likelihood objectives. MLE has been extensively studied in the context of graphical models~\citep{koller2009probabilistic}, offering a principled framework to analyze expressiveness, tractability, and sample complexity. In this section, we recast AR and diffusion training as instances of learning in Directed Graphical Models (DAGs) and use this perspective to formally analyze our anchoring mechanism. We show that anchoring—conditioning only on a small, important subset of tokens—leads to significant reduction in parameters and sample complexity. While related ideas are well-established in probabilistic modeling, we demonstrate their effectiveness in large-scale language model pre-training (\S\ref{sec:task-gen}) and fine-tuning (\S\ref{sec:exp-acot}) tasks.

\noindent\textbf{Setup.}
Given a sample $\rvx = \left(\rvx^1, \cdots, \rvx^L\right)$, we consider a DAG denoted by $\gG = (L, E)$, where $L = \{1, 2, \dots, L\}$ denotes the set of nodes (each corresponding to a token position in the sequence), and $E$ denotes the set of directed edges representing conditional dependencies. Each node $\rvx^l$ takes a discrete value from a vocabulary $\gV$ of size $K$, so $\rvx^l \in \gV$. Let $\pi_l \subseteq \{1, \ldots, L\} \setminus \{l\}$ denote the set of parent indices of node $l$, and define $\rvx^{\pi_l} = \{\rvx^j : j \in \pi_l\}$ to be the corresponding parent tokens.

In the following, we analyze the sample complexity of learning graphical models for language, comparing standard approaches with our proposed anchored models. Our focus is to demonstrate that anchoring—by conditioning on a small, important subset of tokens—can dramatically reduce the number of parameters and training samples required.

\begin{assumption}
\label{assm:app-anchoring-sample-complexity}
Suppose the following properties hold.
\begin{itemize}
    \item Each conditional distribution $p(\rvx^l | \cdot)$ is modeled as a categorical distribution.
    \item The model is parameterized by Conditional Probability Tables (CPTs); that is, a distinct parameter is assigned to each possible configuration of the conditioning set.
    \item Anchor sets $\pi_l \subset \{1, \ldots, L\} \setminus \{l\}$ are fixed and of bounded size $|\pi_l| \leq d$, with $d \ll L$.
\end{itemize}
\end{assumption}

\begin{proposition}[Reduced Sample Complexity via Anchoring]
\label{prop:app-anchoring-sample-complexity}
Let $\rvx = (\rvx^1, \ldots, \rvx^L)$ be a sequence of discrete random variables, each taking values in a finite vocabulary $\gV$ of size $K$, i.e., $\rvx^l \in \gV$ with $|\gV| = K$. Suppose we are given $N$ i.i.d.\ samples $\{\rvx_i\}_{i=1}^N \sim q$, and \textbf{Assumption~\ref{assm:app-anchoring-sample-complexity}} holds.
Then the sample complexity of MLE under different modeling paradigms is as follows:

\begin{enumerate}
    \item \textbf{Standard Autoregressive Modeling:} Since each token $\rvx^l$ is conditioned on all previous tokens $\rvx^{1:l-1}$, the total number of parameters is $\gO(K^L)$, resulting in a sample complexity of $\mathcal{O}(K^L)$.

    \item \textbf{Standard Diffusion Modeling:} Each masked token $\rvx^l$ is conditioned on all other tokens $\rvx \setminus \rvx^l$. The per-token parameter count is $\mathcal{O}(K^{L})$, leading to total sample complexity of $\mathcal{O}(L K^{L})$.

    \item \textbf{Anchored Autoregressive Modeling (A2R):} Since each token $\rvx^l$ is conditioned only on a fixed-size anchor set $\rvx^{\pi_l}$, the number of parameters per conditional is $\mathcal{O}(K^{d+1})$, giving total sample complexity $\mathcal{O}(L  K^{d+1})$.

    \item \textbf{Anchored Diffusion Language Modeling (ADLM):} Each masked token $\rvx^l$ is predicted using only anchor tokens $\rvx^{\pi_l} \setminus \{\rvx^l\}$. The per-token parameter count becomes $\mathcal{O}(K^{d+1})$, resulting in a total sample complexity of $\mathcal{O}(L  K^{d+1})$.
\end{enumerate}
\end{proposition}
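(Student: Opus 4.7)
The plan is to reduce each of the four claims to (a) a parameter-counting argument on the CPTs, followed by (b) a standard MLE sample-complexity bound for discrete graphical models. The workhorse fact I would cite is the following: if a discrete joint distribution is parameterized by independent CPTs with $M$ total free entries, then MLE attains $\epsilon$-accuracy in KL with $\widetilde{\gO}(M/\epsilon^2)$ samples (this is classical; see, e.g., Chap.~17 of~\citep{koller2009probabilistic}). Because the proposition is stated only up to multiplicative constants, the $(\epsilon,\delta)$ factors and log terms can be absorbed, so the whole proof reduces to counting $M$ in each setting.

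The first step is a per-factor count that I would apply uniformly. For a conditional $p(\rvx^l\mid \rvx^{\pi_l})$ with $|\pi_l|$ parents under Assumption~\ref{assm:anchoring-sample-complexity}, the CPT has $K^{|\pi_l|}$ rows, each a categorical over $K$ outcomes with $K-1=\Theta(K)$ free parameters. Hence this factor contributes $\Theta(K^{|\pi_l|+1})$ parameters. Specializing:
\begin{itemize}[leftmargin=*,itemsep=0pt]
\item Standard AR: $|\pi_l|=l-1$, so the total is $\sum_{l=1}^{L} K^{l} = \Theta(K^L)$, dominated by the last token.
\item Standard DLM: each of the $L$ masked conditionals $p(\rvx^l\mid \rvx\setminus\rvx^l)$ has $|\pi_l|=L-1$, contributing $\Theta(K^L)$ each; summing over $l$ gives $\Theta(L\,K^L)$.
\item Anchored AR: $|\pi_l|\le d$ by Assumption~\ref{assm:anchoring-sample-complexity}, so per-conditional $\Theta(K^{d+1})$ and total $\Theta(L\,K^{d+1})$.
\item Anchored DLM: by symmetry with the anchored AR case, since each masked conditional $p(\rvx^l\mid \rvx^{\pi_l}\setminus\{\rvx^l\})$ is also of bounded degree $\le d$, the count is again $\Theta(L\,K^{d+1})$.
\end{itemize}

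Plugging these $M$-counts into the MLE lemma, and absorbing $\epsilon,\delta$ and polylog factors into $\widetilde{\gO}(\cdot)$, yields all four claimed sample-complexity rates. The dominating terms in the AR cases come from the longest parent set, while in the DLM cases one additionally pays a union bound over the $L$ separately estimated conditionals (which is where the extra factor of $L$ relative to AR originates).

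The main obstacle I expect is handling the DLM setting rigorously: unlike an AR factorization, the $L$ full-conditional CPTs $\{p(\rvx^l\mid \rvx\setminus\rvx^l)\}_{l=1}^L$ are not in general the factors of any DAG for $q$, so the plug-in MLE bound must be applied separately to each conditional and then combined by a union bound over positions $l\in[L]$. I would formalize this by viewing the diffusion training objective (NELBO in \eqref{eq:nelbo}, or ANELBO in \eqref{eq:anelbo}) as a sum of per-position categorical MLE problems, one for each masked coordinate, each of which has $\Theta(K^{|\pi_l|+1})$ parameters; the factor-of-$L$ cost is then the standard price of union-bounding $L$ independent estimation tasks. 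Everything else—the exponential reduction from $K^L$ to $L\,K^{d+1}$ in the anchored settings—falls out immediately from the parent-set cardinality bound in Assumption~\ref{assm:anchoring-sample-complexity}.
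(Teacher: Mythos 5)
Your proposal takes essentially the same route as the paper: count CPT parameters per conditional under Assumption~\ref{assm:app-anchoring-sample-complexity}, then invoke a linear-in-parameters MLE sample-complexity bound for discrete tabular models. Your per-factor counts ($\Theta(K^{|\pi_l|+1})$ with $|\pi_l|=l-1$, $L-1$, $\le d$, $\le d$ for the four cases) and the resulting totals match \S\ref{sec:app-agm-sample-complexity} and \S\ref{sec:app-reduced-sample-complexity} of the paper exactly, and you flag a genuine subtlety the paper glosses over: the $L$ full-conditional CPTs $\{p(\rvx^l\mid\rvx\setminus\rvx^l)\}$ appearing in the diffusion objective are not, in general, the factors of any single DAG factorizing $q$, so a Bayes-net MLE lemma cannot be applied to them as one joint estimation problem.

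One point of imprecision is worth fixing, however. You attribute the factor of $L$ in the DLM rates to ``union-bounding $L$ independent estimation tasks.'' Pursued literally, that argument gives the wrong exponent: a union bound over $L$ per-conditional MLE problems, each with $\Theta(K^L)$ parameters, costs only an extra $\log L$, i.e.\ $\widetilde{\gO}(K^L)$, not $\gO(L\,K^L)$. The stated $\gO(L\,K^L)$ rate in the proposition (and in your own first paragraph) comes instead from the paper's simpler accounting: the total number of free CPT entries across all $L$ conditionals is $\Theta(L\,K^L)$, and this aggregate $M$ is plugged directly into the $\gO(M/\epsilon^2)$ bound. Those are two different explanations that yield different bounds, and your write-up currently asserts both; you should commit to the parameter-aggregation one (as the paper implicitly does), and treat the per-conditional-plus-union-bound argument as an alternative that would in fact give a slightly tighter $\widetilde{\gO}(K^L)$ rate for the standard DLM case. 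Since the proposition is stated only up to $\gO(\cdot)$ and the paper's own treatment is informal, this does not break the conclusion, but the internal inconsistency should be resolved.
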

\textbf{Implications.} Assuming the existence of important tokens in an  anchor set of fixed cardinality $d$, anchored modeling achieves exponential reductions in sample complexity without sacrificing model expressiveness or decoding fidelity.
A2R improves upon standard AR models by reducing the sample complexity from $\mathcal{O}(K^L)$ to $\mathcal{O}(L  K^{d+1})$.
ADLM offers an analogous benefit, reducing the sample complexity from $\mathcal{O}(L  K^{L})$ to $\mathcal{O}(L K^{d+1})$.
These results highlight the theoretical advantage of anchoring in high-dimensional structured prediction settings, particularly for language modeling.

\begin{remark}
Consider a sequence of length $L = 1024$ and vocabulary size $K = 50257$ (as used in MDLM training) for a line network $\rvx^1 \rightarrow \rvx^2 \rightarrow\cdots \rightarrow \rvx^L$. Under standard autoregressive modeling, the total number of parameters in CPTs required to model the full joint distribution is $\gO(K^L) = \gO(50257^{1024})$, which is computationally intractable to estimate.

In contrast, under the anchored autoregressive model (A2R) with a small anchor set size, e.g., $d = 1$ for the line network, the total number of parameters reduces to $\mathcal{O}(L K^{2}) = \mathcal{O}(1024 \times 50257^2)$, which is within the scale of modern large language models.

Similarly, for diffusion models, anchoring reduces the per-token parameter complexity from $K^{L} = 50257^{1024}$ to $K^{d+1} = 50257^2$, leading to a sample complexity of $\mathcal{O}(L K^{d+1}) = \mathcal{O}(1024 \times 50257^2)$.

These exponential savings illustrate that anchored modeling makes otherwise intractable parameter estimation feasible in large-scale (diffusion/AR) language modeling.
\end{remark}

We provide detailed derivations and discussion of these results in the subsequent sections~\ref{sec:app-agm-sample-complexity} and~\ref{sec:app-reduced-sample-complexity}.
Our anchored training procedure also has an interpretation of expectation-maximization (EM)~\citep{dempster1977maximum}, which we discuss in \ref{sec:app-em-interp}.
In \ref{sec:app-improved-likelihood}, we provide an example to show how anchoring helps improve the likelihood of decoding important tokens. 
\subsubsection{Sample Complexity in Standard Training}
\label{sec:app-agm-sample-complexity}
Assume we are given $N$ i.i.d.\ samples $\{\rvx_i\}_{i=1}^{N}$ drawn from an unknown distribution $q$ over sequences $\rvx_i \in \gV^L$, where $\gV$ is a vocabulary of size $K$. Our goal is to estimate the parameters of a conditional probability model $p(\rvx | \theta)$, where $\theta = [\psi, \varphi]$, using MLE. The standard MLE objective is defined as:
\begin{align*}
    \gL_{\mathrm{MLE}}(\psi, \varphi) 
    = \sum_{i=1}^{N} \log p(\rvx_i | \psi, \varphi).
\end{align*}
Since the structure of the underlying graphical model $\gG$ is typically unknown, a common modeling assumption is to use a fully autoregressive factorization of the joint distribution. This leads to the following objective:
\begin{align*}
    \gL_{\mathrm{AR}}(\psi, \varphi) 
    = \sum_{i=1}^{N} \left[ \log p(\rvx_i^{1} | \psi, \varphi)    
    + \sum_{l=2}^{L} \log p(\rvx_i^{l} | \rvx_i^{1:l-1}, \psi, \varphi) \right],
\end{align*}
which models the joint probability by conditioning each token on all previous tokens.

Alternatively, in masked diffusion language modeling, the likelihood of the missing token is computed by conditioning on all tokens except the masked token:
\begin{align*}
    \gL_{\mathrm{Diffusion}}(\psi, \varphi) 
    = \sum_{i=1}^{N} 
    \left[
    \sum_{l=1}^{L} \log p(\rvx_i^{l} | \rvx_i \setminus \rvx_i^{l}, \psi, \varphi) \cdot \mathbf{1}_{\{\rvx_i^l = \rvm\}}
    \right],
\end{align*}
where $\rvm$ denotes a masked token and $\rvx_i \setminus \rvx_i^{l}$ is the set of all other tokens in the sequence.

In both formulations, each conditional probability is modeled as a categorical distribution over $\gV$ and is thus associated with a CPT. Learning such a model amounts to estimating these CPTs. The number of parameters required for each CPT depends exponentially on the size of its conditioning set. For example:
\begin{itemize}
    \item To model $q(\rvx^1)$, we need $K$ parameters to define $p(\rvx^1 | \theta)$.
    \item For $q(\rvx^2 | \rvx^1)$, we require $K^2$ parameters, one categorical distribution for each value of $\rvx^1$.
    \item In general, for $q(\rvx^l | \rvx^{1:l-1})$, the CPT size is $K^l$, as we require a distribution over $K$ values for each of the $K^{l-1}$ configurations of the conditioning context $\rvx^{1:l-1}$.
\end{itemize}

Summing over all positions yields the total number of parameters in the model:
\begin{align*}
    \sum_{l=1}^{L} K^l = \frac{K(K^L - 1)}{K - 1}.
\end{align*}

In this tabular setting, it is well known that the sample complexity of MLE grows at least linearly with the number of parameters in order to guarantee accurate estimation. Thus, the sample complexity of learning such a model is $\mathcal{O}(K^L)$, which becomes infeasible even for modest values of $L$ and $K$. For example, with $L = 1024$ and $K = 50257$ (as in GPT-2's vocabulary size), the number of parameters is on the order of $50257^{1024}$, which is computationally intractable.

This exponential order highlights the need for structure-aware modeling techniques, such as \emph{anchoring}, which constrain the dependency structure and significantly reduce the number of learnable parameters. In the following sections, we show how anchoring enables more sample-efficient learning by focusing on a subset of important tokens that govern the generative structure of the data.

\subsubsection{Reduced Sample Complexity via Anchored Modeling}
\label{sec:app-reduced-sample-complexity}
A key motivation for our approach is the observation that a sentence can often be accurately decoded given a small set of important tokens. We propose to leverage this property by computing the likelihood of missing tokens while conditioning only on a carefully selected subset of important tokens, referred to as \emph{anchor tokens}, rather than the full context. This design leads to a substantial reduction in the sample complexity required for maximum likelihood estimation.

Identifying the most informative tokens is an exciting problem that has been studied extensively in the NLP literature~\citep{linzen-etal-2016-assessing,li2016visualizing,sundararajan2017axiomatic,clark2019does,tenney2019bert,Khandelwal2020Generalization}. In this work, we adopt a simple yet effective information-theoretic strategy: tokens with low marginal frequency in the given sample tend to carry more information (\S\ref{sec:adlm}). 
Hence, we treat such low-frequency tokens as candidates for anchoring in language modeling tasks.
Our empirical results support this strategy across two commonly used generative modeling benchmarks: (1) LM1B~\citep{lm1b} and (2) OWT~\citep{owt}, and seven downstream evaluation benchmarks (\S\ref{sec:exps}).
We further demonstrate the benefits of anchoring in AR models by exploring alternate anchoring strategies:  
\begin{itemize}
    \item In logical reasoning benchmarks such as ProntoQA~\citep{prontoqa} and ProsQA~\citep{coconut}, root nodes of reasoning traces reliably serve as effective anchors.
    \item In mathematical reasoning benchmarks like GSM8K, we find that early steps in reasoning traces—excluding arithmetic operators (such as $+$, $-$, $\times$, and $\div$)—contain the important information needed to derive the correct answer.
\end{itemize}

\noindent\textbf{Anchored Autoregressive Modeling.}
Incorporating anchored tokens into our graphical model framework yields a more compact parameterization of the conditional likelihood. Specifically, we introduce \emph{Anchored Autoregressive} (A2R) modeling, which modifies the conditioning structure in the likelihood. The training objective becomes:
\begin{align*}
    \gL_{\mathrm{A2R}}(\psi) 
    = 
    -
    \sum_{i=1}^{N} \left[ \log p(\rvx_i^{1} | \psi)    
    + \sum_{l=2}^{L} \log p(\rvx_i^{l} | \rvx_i^{\pi_l}, \psi) \right],
\end{align*}
where $\rvx_i^{\pi_l}$ denotes the set of parent nodes treated as anchor tokens.

This formulation significantly reduces the number of parameters and thus the sample complexity. Consider a line graph where each token depends only on its immediate predecessor: $\rvx^1 \rightarrow \rvx^2 \rightarrow \cdots \rightarrow \rvx^L$. In standard AR modeling, $\rvx^L$ is conditioned on all preceding tokens, requiring $K^L$ parameters and yielding a sample complexity of $\mathcal{O}(K^L)$. In contrast, A2R limits dependencies to single-token anchors (i.e., $|\pi_l| = 1$), so each conditional requires only $\mathcal{O}(K^2)$ parameters. The total sample complexity becomes $\mathcal{O}(L K^2)$, a drastic reduction from $\gO(K^L)$. For example, with sequence length $L = 1024$ and vocabulary size $K = 50257$ (as in GPT-2), standard modeling scales as $\mathcal{O}(50257^{1024})$, whereas A2R scales as $\mathcal{O}(1024 \times 50257^2)$.

\noindent\textbf{Anchored Diffusion Language Modeling.}
The anchoring mechanism generalizes to the diffusion setting via our ADLM parameterization~(\S\ref{sec:adlm}). The corresponding training objective is:
\begin{align*}
    \gL_{\mathrm{ADLM}}(\psi) 
    =
    -
    \sum_{i=1}^{N} 
    \left[
    \sum_{l=1}^{L} \log p(\rvx_i^{l} | \rvx_i^{\pi_l} \setminus \rvx_i^{l}, \psi) \cdot \mathbf{1}_{\{\rvx_i^l = \rvm\}}
    \right],
\end{align*}
where $\rvm$ is the `mask' token, and $\mathbf{1}_{\{\rvx_i^l = \rvm\}}$ ensures that only masked tokens contribute to the overall loss. As in A2R, anchoring reduces the size of the conditioning context, leading to significantly lower sample complexity in estimating the parameters of the denoising model.

\noindent\textbf{Learning Anchors via KL-Divergence.}
While anchored modeling reduces sample complexity for the decoder, learning the anchor network could be challenging because the number of possible token subsets grows exponentially with sequence length.
To address this, we use a simple strategy—such as selecting low-frequency tokens—for anchoring, which are easy to compute from the input sequence.

We introduce a regularization term based on KL-divergence that encourages the learned anchor distribution $r_{\varphi}(\rvy | \rvx)$ to align with the chosen anchor distribution $r(\rvy | \rvx)$.
Since the model doesn't have access to the actual parents during inference, we replace $\rvx_i^{\pi_l}$ with predicted anchor tokens—$\rvy_\varphi(\rvx_i^{1:l-1})$ for AR and $\rvy_\varphi(\rvx_i \setminus \rvx_i^l)$ for diffusion—leading to the following regularized objective:
\begin{align*}
    \arg\min_{\psi, \varphi}
    \Big[
    \gL_{\mathrm{A2R}/\mathrm{ADLM}}(\psi, \varphi) 
    + \gamma  \mathrm{D}_{\mathrm{KL}}\left( r(\rvy | \rvx) \parallel r_{\varphi}(\rvy | \rvx) \right)
    \Big],
\end{align*}
where $\gamma > 0$ controls the anchor strength. This objective encourages the model to first decode the important tokens before reconstructing the rest of the sequence.

\noindent\textbf{Summary.}
By leveraging inductive biases about the distribution of important tokens within sequences, our anchoring approach achieves substantial improvements in sample complexity. Crucially, it avoids the combinatorial explosion of full-context modeling while delivering strong performance on generative modeling and complex reasoning tasks. Importantly, the proposed method is theoretically grounded, computationally tractable, and readily scalable to modern language models.

We note that exact inference in graphical models is NP-hard in the worst case~\citep{koller2009probabilistic}. However, many real-world scenarios do not exhibit worst-case behavior. As a result, such problems can often be effectively tackled using approximate inference techniques that operate over functional representations, rather than tabular. While the tabular form remains sufficient to highlight the importance of anchoring, more expressive functional representations can further complement and enhance its effectiveness.

\subsubsection{Interpretation Through Expectation-Maximization}
\label{sec:app-em-interp}
Our anchored training procedure can be naturally interpreted through the lens of the EM algorithm~\citep{dempster1977maximum}, a classical framework for MLE in models with latent variables. 
In our setting, the anchor tokens $\rvy$ act as latent variables. The anchor network (parameterized by $\varphi$) estimates a soft distribution over important tokens from the observed sequence $\rvx$ (E-step), and the denoising model (parameterized by $\psi$) uses this distribution to reconstruct the full sequence (M-step). While classical EM alternates between these steps, this is computationally expensive for large language models due to the doubled training time.
Instead, we unify both steps into a single ANELBO training objective, which allows efficient end-to-end training (approximately two months to train ADLM). 
Although our implementation does not explicitly perform separate 
E and M steps during training, the EM interpretation offers valuable theoretical insight for future research.

We now formalize this interpretation. Consider a parameterized model $p(\rvx|\theta)$ with parameters $\theta = [\psi, \varphi]$, where $\psi$ governs the denoising (generative) model and $\varphi$ governs the anchor network. 
Recall from anchor transition function in \S\ref{sec:adlm} that $r(\rvy|\rvx, \varphi) = r(\rvy|\rvx)$ when $\rvz_t \neq \rvm$. Therefore, minimizing the ANELBO objective is equivalent to maximizing the log likelihood in the fully-observed DAG\footnote{We note that our analysis can be easily extended to the case where some tokens have been masked. This follows from conditioning on the latent variables similar to diffusion models \citep{sohl2015deep}. We also refer to our derivation in Appendix~\ref{sec:appendix-anelbo-full} for extending this analysis to latent variables.}.
Thus, the task simplifies to maximizing the marginal likelihood of observed data:
\begin{align}
    \label{eq:mle}
    \psi^*, \varphi^* =\arg\min_{\psi, \varphi} \gL(\rvx;\psi, \varphi) =  \arg\max_{\psi, \varphi} \log p(\rvx | \psi, \varphi).
\end{align}
In practice, this objective is generally non-convex and does not have a closed-form solution. However, meaningful convergence analysis can be carried out under simplifying assumptions
\citep{neal1998view,koller2009probabilistic,Kwon2025}. In the following, we assume a specific update rule analogous to EM and show that it leads to monotonic improvement of the ANELBO objective.
\begin{assumption}
\label{assm:em-update}
Suppose the parameter updates follow the anchored EM update rule:
\[
\psi_{i+1}, \varphi_{i+1} = \arg\min_{\psi, \varphi} -\sum_{\rvy \in \mathcal{V}} r(\rvy | \rvx, \varphi_i) \log p(\rvx, \rvy | \psi, \varphi),
\]
where $r(\rvy | \rvx, \varphi_i)$ is the anchor distribution predicted by the anchor network at the current iterate $i$, and $p(\rvx, \rvy | \psi, \varphi)$ is the joint likelihood of observed variables and anchor tokens.
\end{assumption}
\begin{theorem}[Monotonic Improvement of Anchored Likelihood]
\label{thm:app-agm-conv}
Suppose \textbf{Assumption~\ref{assm:em-update}} holds. Let the anchor distribution be parameterized as $p(\rvy | \rvx, \psi, \varphi) = r(\rvy | \rvx, \varphi)$. Then the ANELBO objective undergoes monotonic improvement:
\[
\gL(\rvx; \psi_{i+1}, \varphi_{i+1}) \leq \gL(\rvx;\psi_i, \varphi_i).
\]
\end{theorem}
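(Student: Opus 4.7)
The plan is to adapt the classical EM monotonicity argument to our anchored setting, invoking the decomposition of the marginal log-likelihood into the complete-data expected log-likelihood $Q$ and the conditional-entropy term $H$, and then using the assumed update rule together with Gibbs' inequality. Since equation~\eqref{eq:mle} establishes that minimizing $\gL$ is equivalent to maximizing $\log p(\rvx \mid \psi, \varphi)$, it suffices to show that the anchored EM update never decreases the marginal log-likelihood.

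First, I would write, for any $(\psi, \varphi)$ and any fixed reference iterate $(\psi_i, \varphi_i)$, the identity $\log p(\rvx \mid \psi, \varphi) = \log p(\rvx, \rvy \mid \psi, \varphi) - \log p(\rvy \mid \rvx, \psi, \varphi)$, take expectation of both sides with respect to $r(\rvy \mid \rvx, \varphi_i)$ (whose total mass equals one), and define
\begin{align*}
Q(\psi, \varphi \mid \psi_i, \varphi_i) &\coloneq \sum_{\rvy} r(\rvy \mid \rvx, \varphi_i) \log p(\rvx, \rvy \mid \psi, \varphi), \\
H(\psi, \varphi \mid \psi_i, \varphi_i) &\coloneq \sum_{\rvy} r(\rvy \mid \rvx, \varphi_i) \log p(\rvy \mid \rvx, \psi, \varphi),
\end{align*}
so that $\log p(\rvx \mid \psi, \varphi) = Q(\psi, \varphi \mid \psi_i, \varphi_i) - H(\psi, \varphi \mid \psi_i, \varphi_i)$, with the left-hand side independent of the reference iterate.

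Next, I would exploit the parameterization $p(\rvy \mid \rvx, \psi, \varphi) = r(\rvy \mid \rvx, \varphi)$ stated in the hypothesis. This turns the $H$-difference into a KL divergence, namely $H(\psi_i, \varphi_i \mid \psi_i, \varphi_i) - H(\psi, \varphi \mid \psi_i, \varphi_i) = \mathrm{D}_{\mathrm{KL}}\bigl(r(\cdot \mid \rvx, \varphi_i) \,\|\, r(\cdot \mid \rvx, \varphi)\bigr) \geq 0$, so $H$ is globally maximized in its first argument at $(\psi_i, \varphi_i)$. Separately, Assumption~\ref{assm:em-update} states precisely that $(\psi_{i+1}, \varphi_{i+1})$ maximizes $Q(\cdot, \cdot \mid \psi_i, \varphi_i)$, so $Q(\psi_{i+1}, \varphi_{i+1} \mid \psi_i, \varphi_i) \geq Q(\psi_i, \varphi_i \mid \psi_i, \varphi_i)$. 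Subtracting the two evaluations of the decomposition gives
\begin{align*}
\log p(\rvx \mid \psi_{i+1}, \varphi_{i+1}) - \log p(\rvx \mid \psi_i, \varphi_i)
&= \bigl[Q(\psi_{i+1}, \varphi_{i+1} \mid \psi_i, \varphi_i) - Q(\psi_i, \varphi_i \mid \psi_i, \varphi_i)\bigr] \\
&\quad + \bigl[H(\psi_i, \varphi_i \mid \psi_i, \varphi_i) - H(\psi_{i+1}, \varphi_{i+1} \mid \psi_i, \varphi_i)\bigr] \geq 0,
\end{align*}
and translating back through \eqref{eq:mle} yields $\gL(\rvx; \psi_{i+1}, \varphi_{i+1}) \leq \gL(\rvx; \psi_i, \varphi_i)$.

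The main obstacle I anticipate is the bridge between the ANELBO objective $\gL$ (which involves denoising transitions over the full latent trajectory $Z_{0:1}$) and the clean EM decomposition above, which is stated in terms of a single latent variable $\rvy$ and the fully-observed DAG likelihood. The paper already asserts this equivalence via \eqref{eq:mle} and the remark preceding Assumption~\ref{assm:em-update}, so the argument can legitimately operate on $\log p(\rvx \mid \psi, \varphi)$ rather than on the ANELBO directly; any care needed lies in verifying that the anchor parameterization $p(\rvy \mid \rvx, \psi, \varphi) = r(\rvy \mid \rvx, \varphi)$ used in the $H$-step is consistent with the joint model implicitly defined by ADLM, and that the sum over $\rvy \in \mathcal{V}$ in Assumption~\ref{assm:em-update} is interpreted as a sum over anchor-token configurations for the sequence, so the KL bound indeed applies position-wise. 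Modulo this bookkeeping, the proof reduces to the two one-line inequalities on $Q$ and $H$ above.
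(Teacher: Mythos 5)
Your proof is correct and follows essentially the same route as the paper: you invoke the identification $\gL(\rvx;\psi,\varphi)=-\log p(\rvx\mid\psi,\varphi)$ in the fully-observed regime, split $\log p$ into the expected complete-data log-likelihood and the conditional term under $r(\cdot\mid\rvx,\varphi_i)$, use the update rule of Assumption~\ref{assm:em-update} to bound the first piece, and use Gibbs'/Jensen's inequality to bound the second. The only cosmetic difference is that you give the two pieces the standard EM names $Q$ and $H$ and phrase the $H$ step directly as KL non-negativity, whereas the paper carries the expanded sums and applies Jensen's inequality in place.
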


\textbf{Implications.}
Theorem~\ref{thm:app-agm-conv} guarantees that the anchored EM procedure produces non-increasing negative log-likelihood at each iteration, thereby ensuring stability and convergence to a first-order stationary point. Notably, this convergence behavior emerges even though the anchor tokens are unobserved~\citep{Kwon2025}; they are estimated in the E-step and subsequently used in a regularized M-step to update both the denoiser and anchor network parameters.
In practice, we approximate these two steps using a single gradient update for efficient scaling.

\begin{proof}
\textbf{Optimal distribution over anchors.}
We begin by conditioning the standard negative log-likelihood on the anchor tokens $\rvy \in \gV$. By introducing an arbitrary distribution $r(\rvy)$ over the anchors, we apply Jensen's inequality to obtain an upper bound:
\begin{align*}
    \gL(\rvx;\psi, \varphi)
    =-\log p(\rvx | \psi, \varphi) 
    &= - \log \sum_{\rvy \in \gV}  p(\rvx, \rvy | \psi, \varphi) \\
    &= - \log \sum_{\rvy \in \gV} \left( \frac{r(\rvy)}{r(\rvy)} \right) p(\rvx, \rvy | \psi, \varphi) \\
    &\leq - \sum_{\rvy \in \gV} r(\rvy) \log \left( \frac{p(\rvx, \rvy | \psi, \varphi)}{r(\rvy)} \right) \quad \text{(Jensen's inequality)} \\
    &= - \Bigg(\underbrace{\sum_{\rvy \in \gV} r(\rvy) \log p(\rvx, \rvy | \psi, \varphi)}_{\text{Expected Energy}} 
    - \underbrace{\sum_{\rvy \in \gV} r(\rvy) \log r(\rvy)}_{\text{Entropy}} \Bigg)\\
    & \coloneq \gF(r,[\psi,\varphi]).
\end{align*}
The upper bound $\gF(r, [\psi, \varphi])$ is referred to as the \emph{free energy}. The free energy is minimized in two steps. First, we minimize with respect to $r$ for a fixed $\psi, \varphi$, and then use the optimal $r^*$ to optimize $\psi, \varphi$.
We now simplify this expression to identify the optimal choice of $r$:
\begin{align*}
    \gF(r,[\psi,\varphi]) 
    & = - \sum_{\rvy \in \gV} r(\rvy) \log p(\rvx, \rvy | \psi, \varphi) + \sum_{\rvy \in \gV} r(\rvy) \log r(\rvy) \\
    & = 
    - \sum_{\rvy \in \gV} r(\rvy)  \log p(\rvy | \rvx, \psi, \varphi) 
    - \sum_{\rvy \in \gV} r(\rvy)  \log p(\rvx | \psi, \varphi)  
    + \sum_{\rvy \in \gV} r(\rvy) \log r(\rvy) \\
    & = 
    - \sum_{\rvy \in \gV} r(\rvy)  \log p(\rvx | \psi, \varphi)  
    + \sum_{\rvy \in \gV} r(\rvy) \log \left(\frac{r(\rvy)}{p(\rvy | \rvx, \psi, \varphi)}\right) \\
    & = 
    - \sum_{\rvy \in \gV} r(\rvy)  \log p(\rvx | \psi, \varphi)  
    + \mathrm{D}_{\mathrm{KL}}\left(r(\rvy) ||p(\rvy | \rvx, \psi, \varphi)\right) \\
    & =
    - \log p(\rvx | \psi, \varphi)  
    + \mathrm{D}_{\mathrm{KL}}\left(r(\rvy) ||r(\rvy | \rvx, \varphi)\right) \quad\quad (\text{since } p(\rvy | \rvx, \psi, \varphi) = r(\rvy | \rvx, \varphi)).
\end{align*}
The inequality becomes an equality when $\mathrm{D}_{\mathrm{KL}}\left(r(\rvy) ||r(\rvy | \rvx, \varphi)\right) = 0 $. Therefore, the minimum is attained when $r^*(\rvy) = r(\rvy | \rvx, \varphi)$ for a fixed $[\psi, \varphi]$, motivating our choice of anchor transitions in \S\ref{sec:adlm}.

Next, we show that the anchored log likelihood improves monotonically under the anchored EM procedure using the optimal distribution $r^*(\rvy)$.

\noindent\textbf{Decomposition of the log-likelihood.}
Using the identity
\[
p(\rvx, \rvy|\psi, \varphi) = p(\rvy|\rvx, \psi, \varphi)~  p(\rvx|\psi, \varphi),
\]
we can write:
\begin{align*}
    \gF(r^*,[\psi,\varphi])
    = -\log p(\rvx | \psi, \varphi)  
    &= -\log p(\rvx, \rvy|\psi, \varphi) + \log p(\rvy|\rvx, \psi, \varphi).
\end{align*}

\noindent\textbf{Expectation over $r^*(\rvy)$.}
Multiplying both sides by $r(\rvy | \rvx, \varphi_i)$ and summing over $\rvy \in \gV$:
\begin{align*}    
    -\sum_{\rvy \in \gV} r(\rvy | \rvx, \varphi_i) \log p(\rvx | \psi, \varphi)  
    &= 
    - \sum_{\rvy \in \gV} r(\rvy | \rvx, \varphi_i) \log p(\rvx, \rvy|\psi, \varphi)
    + \sum_{\rvy \in \gV} r(\rvy | \rvx, \varphi_i) \log p(\rvy|\rvx, \psi, \varphi).
\end{align*}

Since $\log p(\rvx | \psi, \varphi)$ is constant with respect to $\rvy$, we simplify:
\begin{align*}
    - \log p(\rvx | \psi, \varphi)
    &= 
    - \sum_{\rvy \in \gV} r(\rvy | \rvx, \varphi_i) \log p(\rvx, \rvy|\psi, \varphi)
    + \sum_{\rvy \in \gV} r(\rvy | \rvx, \varphi_i) \log p(\rvy|\rvx, \psi, \varphi).
\end{align*}

\noindent\textbf{Difference of log-likelihoods.}
We now compute the difference between two consecutive iterations:
\begin{align*}
    & - \log p(\rvx | \psi_{i+1}, \varphi_{i+1}) + \log p(\rvx | \psi_i, \varphi_i) \\
    &= 
    - \sum_{\rvy} r(\rvy | \rvx, \varphi_i) \log p(\rvx, \rvy | \psi_{i+1}, \varphi_{i+1})
    + \sum_{\rvy} r(\rvy | \rvx, \varphi_i) \log p(\rvy |\rvx, \psi_{i+1}, \varphi_{i+1}) \\
    &\quad
    + \sum_{\rvy} r(\rvy | \rvx, \varphi_i) \log p(\rvx, \rvy | \psi_i, \varphi_i)
    - \sum_{\rvy} r(\rvy | \rvx, \varphi_i) \log p(\rvy |\rvx, \psi_i, \varphi_i) \\
    &= 
    \left[- \sum_{\rvy} r(\rvy | \rvx, \varphi_i) \log p(\rvx, \rvy | \psi_{i+1}, \varphi_{i+1}) 
    + \sum_{\rvy} r(\rvy | \rvx, \varphi_i) \log p(\rvx, \rvy | \psi_i, \varphi_i) \right] \\
    &\quad
    + \sum_{\rvy} r(\rvy | \rvx, \varphi_i) \log \left( \frac{p(\rvy |\rvx, \psi_{i+1}, \varphi_{i+1})}{p(\rvy |\rvx, \psi_i, \varphi_i)} \right).
\end{align*}

\noindent\textbf{Applying Jensen's Inequality.}
Using Jensen's inequality on the last term:
\begin{align*}
    \sum_{\rvy} r(\rvy | \rvx, \varphi_i) \log \left( \frac{p(\rvy |\rvx, \psi_{i+1}, \varphi_{i+1})}{p(\rvy |\rvx, \psi_i, \varphi_i)} \right)
    &\leq 
    \log \sum_{\rvy} r(\rvy | \rvx, \varphi_i) \left( \frac{p(\rvy |\rvx, \psi_{i+1}, \varphi_{i+1})}{p(\rvy |\rvx, \psi_i, \varphi_i)} \right) \\
    &= \log \sum_{\rvy} p(\rvy |\rvx, \psi_{i+1}, \varphi_{i+1}) \\
    &= \log 1 = 0,
\end{align*}
where we use the fact that $r(\rvy | \rvx, \varphi_i) = p(\rvy |\rvx, \psi_i, \varphi_i)$ and that $p(\cdot|\rvx, \cdot, \cdot)$ is a valid probability distribution.

\noindent\textbf{Monotonicity.}
Thus, we conclude:
\begin{align*}
    - \log p(\rvx | \psi_{i+1}, \varphi_{i+1}) + \log p(\rvx | \psi_i, \varphi_i)
    &\leq 
    - \sum_{\rvy} r(\rvy | \rvx, \varphi_i) \log p(\rvx, \rvy | \psi_{i+1}, \varphi_{i+1}) \\
    &\quad + \sum_{\rvy} r(\rvy | \rvx, \varphi_i) \log p(\rvx, \rvy | \psi_i, \varphi_i) \\
    &\leq 0,
\end{align*}
where the final inequality holds because $(\psi_{i+1}, \varphi_{i+1})$ is chosen to minimize the expected negative log-joint likelihood under $r(\rvy | \rvx, \varphi_i)$.

Therefore,
\[
\gL(\rvx; \psi_{i+1}, \varphi_{i+1}) \leq \gL(\rvx;\psi_i, \varphi_i),
\]
which proves monotonic improvement of the ANELBO objective.
\end{proof}

\subsubsection{Improved Likelihood Modeling During Inference}
\label{sec:app-improved-likelihood}
\textbf{Example.}
To illustrate how anchoring improves likelihood modeling during inference, we consider a DAG model with $L = 3$ nodes: $\rvx = (\rvx^1, \rvx^2, \rvx^3)$. Each token takes values from a discrete vocabulary $\gV = \{0, 1, m\}$ of size $K = 3$, where $m$ denotes the `mask' token. We discretize time into $T = 3$ steps, with $t \in \{0, \frac{1}{3}, \frac{2}{3}, 1\}$. Each token $\rvx^l$ is represented as a one-hot vector, i.e., a corner point of the 3-dimensional probability simplex.

The full state space is $\gS = \gV^L$ and contains $3^3 = 27$ possible states. Let $X$ be a random variable taking values in $\gS$ (such as $X=x = (1, 0, m) \in \gS$), which is represented by a mixture distribution:
\[
\rvx = (\rvx^1, \rvx^2, \rvx^3) =
\begin{bmatrix}
0 & 1 & 0 \\
1 & 0 & 0 \\
0 & 0 & 1 \\
\end{bmatrix}
\]

We define a DAG structure where $\rvx^2$ is the parent node, and $\rvx^1$ and $\rvx^3$ are its children:
\[
q(\rvx) = q(\rvx^2) \, q(\rvx^1 | \rvx^2) \, q(\rvx^3 | \rvx^2)
\]

Suppose the data distribution $q$ is supported on two states:
\[
x_1 = (1, 0, 0) \quad \text{with probability} \; 0.9, \quad \text{and} \quad x_2 = (0, 1, 1) \quad \text{with probability} \; 0.1.
\]

This structure can be interpreted as a logical circuit: if $x^2 = 1$, it activates the bulb on the right ($x^3 = 1$); otherwise, if $x^2 = 0$, it activates the bulb on the left ($x^1 = 1$). In this circuit, $x^2$ determines the entire configuration of the sequence and thus acts as an important node.

It is easy to verify that the token $x^2$ is the most informative variable for modeling the joint likelihood. When $x^2 = 0$, which occurs with high probability in $q$, the entire sequence is determined. Thus, the \textit{anchor} token is $x^2 = 0$ and it is represented as a column vector: $\rvx^2 = [1, 0, 0]^\top$.

This example is designed to illustrate how identifying and conditioning on such informative tokens (anchors) can improve the estimation of masked token likelihoods. Anchoring thus provides, as we discuss next, a principled mechanism for improving inference quality in discrete diffusion models.

\textbf{Forward Process.} 
In our discrete diffusion framework, the forward process gradually corrupts an input sequence $\rvx = (\rvx^1, \rvx^2, \rvx^3)$ by replacing tokens with a special \textit{mask} token $\rvm$, according to a fixed noise schedule. 
We choose the forward noise schedule to be:
\[
\alpha_{t(i)} = 1 - t(i) = 1 - \frac{i}{3}
\]
This defines the amount of information retained from the original input $\rvx$ at time step $t$. The forward transition at each step is defined such that the token $\rvx^l$ is preserved with probability $\alpha_{t(i)}$ and replaced with the mask token $\rvm$ with probability $(1 - \alpha_{t(i)})$. For brevity, we drop $i$ from the noise schedule and denote by $\alpha_t$.
The conditional distribution of the forward process is given by:
\begin{align}
\label{eq:app-il-fwd}
q(\rvz_t | \rvx) = \prod_{l=1}^{3} q(\rvz_t^l | \rvx), \quad 
q(\rvz_t^l | \rvx) = \text{Cat}\Big(\rvz_t^l; \alpha_t \rvx^l + (1 - \alpha_t) \rvm\Big), \quad l \in \{1,2,3\}
\end{align}
Here, $\rvz_t^l$ denotes the corrupted version of token $\rvx^l$ at time $t$, and $\text{Cat}(\cdot)$ denotes the categorical distribution over the vocabulary $\gV = \{0, 1, m\}$. 
The vector $\rvx^l \in \mathbb{R}^3$ is a one-hot column vector corresponding to the original token $x^l$, and $\rvm = [0, 0, 1]^\top$ is the one-hot vector for the mask token.

To make this concrete, consider the example input $x_1 = (1, 0, 0)$ from our earlier setup. This corresponds to the one-hot matrix:
\[
\rvx_1 =
\begin{bmatrix}
0 & 1 & 1 \\
1 & 0 & 0 \\
0 & 0 & 0 \\
\end{bmatrix}
.
\]
At time $t = \frac{1}{3}$ (i.e., $i = 1$), we have $\alpha_{t} = 1 - \frac{1}{3} = \frac{2}{3}$. Substituting into the forward conditional gives:
\[
q(\rvz_{1/3} | \rvx_1) 
=
\begin{bmatrix}
0 & \frac{2}{3} & \frac{2}{3} \\~\\
\frac{2}{3} & 0 & 0 \\~\\
\frac{1}{3} & \frac{1}{3} & \frac{1}{3} \\~\\
\end{bmatrix}
.
\]

Each column of this matrix represents the categorical distribution over $\gV = \{0, 1, m\}$ for the corresponding position in the sequence. 
For example:
\begin{itemize}
    \item The first column corresponds to $\rvz_{1/3}^1 \sim \text{Cat}(0, \frac{2}{3}, \frac{1}{3})$
    \item The second column corresponds to $\rvz_{1/3}^2 \sim \text{Cat}(\frac{2}{3}, 0, \frac{1}{3})$
    \item The third column corresponds to $\rvz_{1/3}^3 \sim \text{Cat}(0, 0, 1)$
\end{itemize}
Similarly, we compute the following conditionals:
\[
q(\rvz_{2/3} | \rvx_1) 
=
\begin{bmatrix}
0 & \frac{1}{3} & \frac{1}{3} \\~\\
\frac{1}{3} & 0 & 0 \\~\\
\frac{2}{3} & \frac{2}{3} & \frac{2}{3} \\~\\
\end{bmatrix}
,\quad\quad\quad
q(\rvz_{1} | \rvx_1) 
=
\begin{bmatrix}
0 & 0 & 0 \\
0 & 0 & 0 \\
1 & 1 & 1 \\
\end{bmatrix}
.
\]
This formulation captures the probabilistic corruption of each token independently according to the noise schedule, blending its original identity with increasing uncertainty (i.e., masking) over time. It provides a concrete foundation for analyzing how anchoring improves denoising, especially when key tokens (like $\rvx^2$ in our setup) are retained or inferred with higher confidence.

To illustrate the benefit of anchoring during inference, we analyze a single reverse step in the diffusion process: transitioning from $\rvz_{2/3}$ to $\rvz_{1/3}$. Suppose we observe $\rvz_{2/3} = (\rvm, \rvm, \rvm) \sim q(\cdot | \rvx_1)$, i.e., the sequence is fully masked at this step. Ideally, we would prefer to unmask $\rvz^2_{2/3} = [0,0,1]^\top$ to $\rvz^2_{1/3} = [1,0,0]^\top$, since this corresponds to the parent node $\rvx^2 = \mathbf{0}=[1, 0, 0]^\top$, which is the most informative token in the sequence. Decoding this parent token early makes it significantly easier to infer the full sequence, and conditioning on it reduces the sample complexity of estimating the model’s conditional probability tables from exponential to polynomial, as discussed in \S\ref{sec:app-reduced-sample-complexity}.

We now compute the likelihood of correctly recovering the important token under both the standard reverse process and our anchored reverse process.
Finally, we show that anchored reverse process yields a higher probability of decoding the important token. 

\textbf{Standard Reverse Process.}
As described in \S\ref{sec:background}, the reverse process is parameterized as:
\begin{align}
\label{eq:mdlm-rev-conditional}
p_\theta(\rvz^l_{1/3} | \rvz_{2/3}) 
&= 
q(\rvz^l_{1/3} | \rvz^l_{2/3}, \rvx^l_\theta(\rvz_{2/3})) \nonumber\\
&= 
\begin{cases}
\mathrm{Cat}(\rvz^l_{1/3}; \rvz^l_{2/3}), & \rvz^l_{2/3} \neq \rvm \\
\mathrm{Cat}\left(\rvz^l_{1/3}; \frac{1}{2} \rvx^l_\theta(\rvz_{2/3}) + \frac{1}{2} \rvm \right), & \rvz^l_{2/3} = \rvm.
\end{cases}
\end{align}

In this case, all tokens are masked at $\rvz_{2/3}$. To proceed, we estimate $\rvx^l_\theta(\rvz_{2/3})$ using samples from the data distribution $q$. Recall that $q$ is supported on:
\[
x_1 = (1, 0, 0) \text{ with probability } 0.9, \quad
x_2 = (0, 1, 1) \text{ with probability } 0.1.
\]
Thus, we compute the predicted token mixture as:
\[
\rvx^1_\theta(\rvz_{2/3}) = [0.1, 0.9, 0]^\top, \quad
\rvx^2_\theta(\rvz_{2/3}) = [0.9, 0.1, 0]^\top, \quad
\rvx^3_\theta(\rvz_{2/3}) = [0.9, 0.1, 0]^\top.
\]
This concurs with the zero-masking parameterization discussed in \S\ref{sec:background}.
Substituting into the reverse conditional \eqref{eq:mdlm-rev-conditional}, we obtain:
\[
p_\theta(\rvz_{1/3} | \rvz_{2/3}) =
\begin{bmatrix}
0.05 & 0.45 & 0.45 \\
0.45 & 0.05 & 0.05 \\
0.50 & 0.50 & 0.50 \\
\end{bmatrix}
.
\]
Each column represents the categorical distribution over the vocabulary $\{0,1,m\}$ for $\rvz^1_{1/3}, \rvz^2_{1/3}, \rvz^3_{1/3}$ respectively.
We now compute the likelihood of decoding the target partial sequence $(\rvm, \mathbf{0}, \rvm)$, i.e., correctly unmasking the important token:
\[
p_\theta(\rvz_{1/3} = (\rvm, \mathbf{0}, \rvm) | \rvz_{2/3} = (\rvm, \rvm, \rvm)) 
= (0.5) \cdot (0.45) \cdot (0.5) = 0.1125.
\]

This probability reflects the chance of correctly decoding only the important token using the standard reverse process. In the following section, we contrast this with the likelihood achieved under the anchored reverse process used in ADLM.

\textbf{ADLM Reverse Process.} 
As discussed in \S\ref{sec:adlm}, our anchored reverse process is parameterized as:
\begin{align}
\label{eq:anchored-rev-conditional}
p_{[\psi, \varphi]}(\rvz^l_{1/3} | \rvz_{2/3}) 
&= 
q(\rvz^l_{1/3} | \rvz^l_{2/3}, \rvx^l_\psi(\rvy_\varphi(\rvz_{2/3}))) \nonumber \\
&= 
\begin{cases}
\mathrm{Cat}(\rvz^l_{1/3}; \rvz^l_{2/3}), & \text{if } \rvz^l_{2/3} \neq \rvm \\
\mathrm{Cat}\left(\rvz^l_{1/3}; \frac{1}{2} \rvx^l_\psi(\rvy_\varphi(\rvz_{2/3})) + \frac{1}{2} \rvm \right), & \text{if } \rvz^l_{2/3} = \rvm
\end{cases}
\end{align}
Given an input sequence $\rvx$, let the operator $\gA(\cdot)$ construct the anchored sequence $\rvy =\gA(\rvx)= (\rvx^1, \mathbf{0}, \rvx^3)$. This operator overwrites the second position with the important anchor token $[1,0,0]^\top$, while copying the first and third tokens from $\rvx$.  
Since the anchor loss (see Eq.~\eqref{eq:loss-anchor}) is applied only to important tokens, the anchor network is trained to predict $\rvy^2_\varphi(\rvz_{2/3}) = [1, 0, 0]^\top$. 
In contrast, the first and third tokens are jointly parameterized and optimized with the denoiser network to maximize the overall sequence likelihood.
The denoiser network $\rvx^l_\psi(\cdot)$ behaves as follows:
\begin{itemize}
    \item If $\rvy^l_\varphi(\rvz_{2/3})$ is unmasked, then $\rvx^l_\psi(\rvy_\varphi(\rvz_{2/3}))$ simply copies it to the output due to the carry-over unmasking parameterization (\S\ref{sec:background}).
    \item Otherwise, it defaults to a standard MLE estimate as in the vanilla reverse process. 
\end{itemize}

Therefore, the predicted token mixture becomes:
\[
\rvx^1_\psi(\rvy_\varphi(\rvz_{2/3})) = [0.1, 0.9, 0]^\top, \quad
\rvx^2_\psi(\rvy_\varphi(\rvz_{2/3})) = [1, 0, 0]^\top, \quad
\rvx^3_\psi(\rvy_\varphi(\rvz_{2/3})) = [0.9, 0.1, 0]^\top.
\]

Substituting these predictions into Eq.~\eqref{eq:anchored-rev-conditional}, the conditional distribution under the ADLM reverse process becomes:
\[
p_{[\psi, \varphi]}(\rvz_{1/3} | \rvz_{2/3}) =
\begin{bmatrix}
0.05 & 0.50 & 0.45 \\
0.45 & 0.0 & 0.05 \\
0.50 & 0.50 & 0.50 \\
\end{bmatrix}
.
\]

We now compute the likelihood of decoding the desired partial sequence $(\rvm, \mathbf{0}, \rvm)$:
\[
p_{[\psi, \varphi]}(\rvz_{1/3} = (\rvm, \mathbf{0}, \rvm) | \rvz_{2/3} = (\rvm, \rvm, \rvm)) 
= (0.5) \cdot (0.5) \cdot (0.5) = 0.125
\]

This is higher than the standard reverse process likelihood computed earlier:
\[
p_\theta(\rvz_{1/3} = (\rvm, \mathbf{0}, \rvm) | \rvz_{2/3} = (\rvm, \rvm, \rvm)) = 0.1125
\]

This example illustrates how anchoring improves the recovery of important tokens during inference. 
Similarly, the likelihood of $\rvz_{1/3} = (\mathbf{1}, \mathbf{0}, \rvm)$ given $\rvz_{2/3} = (\rvm, \rvm, \rvm)$ is higher for ADLM compared standard DLM.
By prioritizing important tokens like $\rvx^2$, ADLM improves masked likelihood modeling and reduces sample complexity of the denoiser.

\textbf{Practical Considerations.}
While this example assumes sampling from the anchor distribution followed by denoising, such a two-step pipeline is not end-to-end differentiable due to the intermediate sampling operation. In practice, we implement anchoring by projecting the anchor logits through a linear layer into the embedding space of the denoiser. This allows gradients to flow from the denoiser output back to the anchor network though the linear projection, enabling efficient joint training of both the modules; refer to implementation details in \S\ref{sec:addn-exp-impl-results}.

\textbf{Summary.}
Our anchored graphical model analysis demonstrates the dual advantage of anchoring in language models: (1) reduced sample complexity during training (\S\ref{sec:app-reduced-sample-complexity}) and (2) improved likelihood modeling during inference (\S\ref{sec:app-improved-likelihood}).
While this analysis was performed on a small DAG ($L=3$) and coarse time discretizations ($T=3$), the insights generalize to broader classes of DAGs and finer time discretizations. 
We believe this theoretical understanding complements our strong empirical results across generative modeling (\S\ref{sec:exp-dlm}) and complex reasoning benchmarks (\S\ref{sec:exp-arm}) in the main draft (\S\ref{sec:exps}) and also in the Appendix~\ref{sec:addn-exps}. We hope it offers a compelling justification for the use of anchoring as a general framework for language modeling.

\section{Additional Background and Related Works}
\label{sec:addn-rel}

In this section, we provide extended background and related work relevant to our proposed approach. We focus on two model families: diffusion language models (\S\ref{sec:addn-rel-dlm}) and autoregressive models (\S\ref{sec:addn-rel-ar}), covering their recent developments.

\subsection{Diffusion Language Models}
\label{sec:addn-rel-dlm}
Diffusion models are built on two stochastic processes: a forward (noising) process that gradually corrupts a clean input $\rvx$ into a noisy latent representation $\rvz_t$ for $t \in [0,1]$, and a reverse (denoising) process that reconstructs $\rvx$ from $\rvz_1$. The effectiveness of diffusion models is primarily attributed to two factors: \textit{iterative refinement} through multiple steps, and a \textit{simple regression-based training objective}~\citep{sohl2015deep,ddpm}.

For continuous domains (e.g., image generation), the forward process is typically modeled as an Ornstein–Uhlenbeck (OU) process that adds Gaussian noise with increasing variance~\citep{sohl2015deep,ddpm}. In the discrete setting, the forward process is defined by either: (a) \textit{uniform noising}, where each token is replaced with another random token from the vocabulary~\citep{d3pm,sedd,ddpd,ggm}, or (b) \textit{random masking}, where each token is independently replaced by a special mask token $\rvm$~\citep{d3pm,sedd,mdlm,bd3lm,remdm,ou2025your,smdm,llada}.

Recent studies~\citep{d3pm,mdlm,sedd,md4,ou2025your} demonstrate that random masking provides improved training stability and sample quality compared to uniform noising. In these \textit{masked} DLMs, the reverse process is trained to progressively reconstruct the sequence from a fully masked input $\rvz_1 = (\rvm, \rvm, \ldots, \rvm)$ by unmasking tokens step-by-step. 
Training is done via a negative evidence lower bound objective~\citep{d3pm}, which admits a score-based interpretation~\citep{sedd} and supports time-independent parameterization~\citep{ou2025your}, enabling simplified training and efficient scaling~\citep{smdm,llada}.

The time-independent formulation is based on the insight that in masked DLMs, the number of masked tokens implicitly encodes the timestep. Therefore, the denoising network does not require an explicit time embedding as input. Our proposed ADLM follows this time-independent parameterization, simplifying the model training and improving scalability without sacrificing performance.

\subsection{Auto-Regressive Models}
\label{sec:addn-rel-ar}
\textbf{Explicit CoT Fine-Tuning (GPT-2 baseline).}
In this baseline, a GPT-2 model is fine-tuned to generate chain-of-thought (CoT) reasoning traces followed by the final answer~\citep{cot}. It serves as a strong supervised benchmark for comparison, and we include several variants in our experimental evaluation. We demonstrate how anchoring can be integrated on top of standard CoT to complement reasoning.

\textbf{\textsc{Coconut} (Chain-of-Continuous-Thought).}
\textsc{Coconut}~\citep{coconut} extends CoT by replacing discrete reasoning tokens with continuous latent representations. A GPT-2 model is fine-tuned using a multi-stage procedure that gradually introduces continuous latent ``thoughts'' between the question and answer. When integrated with our anchoring mechanism, this approach also yields strong performance on symbolic reasoning benchmarks like ProsQA (see \S\ref{sec:addn-exp-arm}).

\textbf{CODI (Continuous Chain-of-Thought via Self-Distillation).}
CODI~\citep{codi} distills reasoning steps into a continuous latent space, achieving a final accuracy of 43.7\% on GSM8K~\citep{gsm8k}. This approach outperforms the previous best GPT-2 finetuned model by roughly 9.6\% (from 34.1\% to 43.7\%) as shown in Table~\ref{tab:acot-results}. CODI is orthogonal to our method: while it compresses reasoning via distillation, our model explicitly guides reasoning via anchor tokens without relying on distillation. Exploring combinations of both ideas is an interesting direction for future work.

Since CoT and \textsc{Coconut} are closely related to our work, we provide a more detailed discussion of these baselines and their variants in \S\ref{sec:addn-exp-arm}.

\subsection{Token Unmasking Strategies in Diffusion and AR Language Models}
A large body of work has explored strategies for adaptive unmasking tokens during the generative process in diffusion and AR language models. These strategies include:

\begin{itemize}
    \item \textbf{Greedy decoding}, where the most confident (i.e., high-probability) tokens are unmasked first~\citep{smdm,llada}.
    \item \textbf{Locked-in sampling}, where once a token is unmasked, it is fixed for the remainder of the generation~\citep{mdlm}.
    \item \textbf{Remasking sampling}, which allows previously unmasked tokens to be re-masked and resampled in future steps~\citep{remdm}.
    \item \textbf{Top-$p$ (nucleus) sampling}, where the sampling distribution is restricted to the smallest subset of tokens whose cumulative probability mass exceeds a threshold $p$~\citep{gpt2,wang2019bert,ggm}.
    \item \textbf{Top-$k$ sampling}, where only the top $k$ tokens with the highest logits are retained for sampling~\citep{topk}.
    \item \textbf{Beam search}, where a set of candidate sequences (beam) is expanded and pruned at each step until a termination condition is met.
\end{itemize}

While these approaches focus on selecting tokens based on confidence or likelihood, our work introduces a new approach: \textit{token importance}. Rather than decoding the most likely tokens, which often correspond to frequent but semantically shallow tokens such as articles or conjunctions, our anchoring mechanism prioritizes decoding \textit{informative tokens}—typically content-bearing nouns, verbs, or entities that anchor a sentence.

By identifying (using a jointly trained anchor network) and decoding these anchor tokens first, our model improves contextual understanding and enables the denoiser to more accurately recover the remaining tokens. We demonstrate that this strategy is effective across two representative sampling strategies in DLMs: the locked-in sampler~\citep{mdlm}, and the remasking sampler~\citep{remdm}, which internally integrates top-$p$ (nucleus) sampling.

This shift from decoding based on likelihood to decoding based on semantic utility offers a new perspective on generative planning and opens the door to further improvements in interpretability, reasoning, and controllable generation.
This also opens up an interesting research direction—look ahead planning and reasoning in AR models—as we demonstrate in \S\ref{sec:addn-exp-arm}.

\section{Additional Experiments}
\label{sec:addn-exps}

This section provides additional experimental details on our proposed anchoring mechanism, applied to both masked diffusion language models in \S\ref{sec:addn-exp-dlm} and autoregressive models in \S\ref{sec:addn-exp-arm}. We present benchmark datasets, training procedure, ablation studies, and additional quantitative and qualitative results to support the findings discussed in the main paper.

\noindent\textbf{Broader Impact.}  
This work introduces an anchoring framework that improves likelihood modeling and generation quality in DLMs, while also enhancing complex reasoning in AR models. On the positive side, ADLM has the potential to increase the accuracy, interpretability, and efficiency of language models applied to critical domains such as education, healthcare, and scientific research. Its ability to prioritize semantically important tokens may contribute to the development of more transparent and explainable AI systems.

However, these same capabilities also carry risks. Enhanced reasoning and generation fidelity may increase the potential for misuse, such as generating persuasive misinformation, reinforcing biases, or enabling manipulation. As with other generative models, there is a possibility of producing deceptive or harmful content if deployed irresponsibly. We emphasize the importance of safeguards and responsible deployment to mitigate these risks.

\noindent\textbf{Reproducibility.}
To support reproducibility, we provide complete pseudo-code and all hyperparameter settings used in our experiments. Our training and evaluation protocols are aligned with prior work to ensure fair and transparent comparisons.

\noindent\textbf{Safeguards.}
To mitigate risks of misuse, we recommend applying standard safeguards such as dataset filtering, usage auditing, and model alignment techniques, including Reinforcement Learning from Human Feedback (RLHF). We also advocate for releasing models and code under responsible use guidelines with access controls and documentation to promote ethical deployment.

\subsection{Diffusion Language Models}
\label{sec:addn-exp-dlm}

This subsection provides extended experimental details for diffusion language models. We describe the compared baselines in \S\ref{sec:addn-baselines}, outline the training and evaluation benchmarks in \S\ref{sec:addn-dlm-benchmark}, and present implementation details along with additional results in \S\ref{sec:addn-exp-impl-results}. Finally, we include qualitative examples and analysis of samples generated by ADLM in \S\ref{sec:addn-adlm-samples}.

\subsubsection{Compared Baselines}
\label{sec:addn-baselines}

We compare ADLM against a broad range of autoregressive, diffusion, and hybrid (autregressive+diffusion) language models. Each baseline is evaluated under the same training and evaluation protocol as ADLM, using identical tokenizers, data splits, and sampling steps. Hyperparameters are adopted from official implementations or tuned for fairness when not explicitly provided. Below, we summarize each method and provide links to source code where available:

\begin{itemize}
    \item \textbf{Autoregressive Transformer (AR):} A standard GPT-style transformer trained using next-token prediction. We use the architecture from the MDLM repository.\footnote{\url{https://github.com/kuleshov-group/mdlm/blob/master/models/autoregressive.py}}

    \item \textbf{SEDD}~\citep{sedd}: A discrete diffusion language model that denoises using a neural network trained to approximate score entropy. Source: \url{https://github.com/louaaron/Score-Entropy-Discrete-Diffusion}

    \item \textbf{MDLM}~\citep{mdlm}: A masked diffusion language model trained with the NELBO objective \eqref{eq:nelbo}. MDLM uses a locked-in sampler where tokens, once unmasked, remain fixed. Source: \url{https://github.com/kuleshov-group/mdlm}

    \item \textbf{BD3LM}~\citep{bd3lm}: A hybrid model that combines autoregressive generation with block-wise diffusion-based refinement. This enables parallel sampling in each block and sequential across blocks. Source: \url{https://github.com/kuleshov-group/bd3lms}

    \item \textbf{ReMDM}~\citep{remdm}: Extends MDLM by incorporating a re-masking sampler, which allows the model to re-mask and re-predict tokens during inference. This re-masking sampler improves generated text quality. Source: \url{https://github.com/kuleshov-group/remdm}

    \item \textbf{GIDD}~\citep{gidd}: GIDD introduces a general interpolation between masking and uniform noising in discrete diffusion models. As a concurrent work to ReMDM, it addresses a key limitation of MDLM (i.e., the inability to revise tokens once unmasked) by allowing previously unmasked tokens to be updated during inference. This flexibility enables the model to iteratively correct its own mistakes and refine its outputs more effectively. Source: \url{https://github.com/dvruette/gidd/}

    \item \textbf{DFM (Discrete Flow Matching)}~\citep{gat2024discrete}: DFM is a flow-based approach for discrete data that replaces the diffusion loss with a flow matching objective~\citep{gat2024discrete}. At the time of writing, the official implementation was unavailable. We instead reference a simplified public version capturing the core ideas\footnote{\url{https://github.com/facebookresearch/flow_matching}}. For evaluation, we adopt the DFM sampler built on top of the MDLM base model, available in the ReMDM repository:\footnote{\url{https://github.com/kuleshov-group/remdm/blob/main/scripts/dfm.sh}}.

    \item \textbf{Forward-Backward (FB)}~\citep{campbell2022continuous}: FB is a corrective sampler that combines forward and backward transitions to improve reconstruction accuracy~\citep{campbell2022continuous}. It is applied as a post-hoc sampling method on top of pretrained models such as MDLM. We use the publicly available implementation from the ReMDM repository:\footnote{\url{https://github.com/kuleshov-group/remdm/blob/main/scripts/fb.sh}}.

\end{itemize}

All models are evaluated using the same number of neural function evaluations (NFEs), training tokens, and sampling steps where applicable. This ensures a consistent and fair comparison across all baselines, allowing us to isolate the effect of anchoring on model performance.

\subsubsection{Training and Evaluation Benchmarks}
\label{sec:addn-dlm-benchmark}

We evaluate diffusion language models on two fronts: (1) generative modeling quality and (2) zero-shot generalization to downstream tasks.

For generative modeling, we use two widely adopted masked language modeling benchmarks: \textbf{One Billion Words (LM1B)}~\citep{lm1b} and \textbf{OpenWebText (OWT)}~\citep{owt}. These datasets provide large-scale and diverse natural language corpora for evaluating the ability of models to understand and generate natural language.

For downstream evaluation, we assess zero-shot likelihoods on seven standard benchmarks spanning commonsense reasoning, scientific language, and formal language domains. These include Lambada, PTB, WikiText, LM1B, AG News, PubMed, and ArXiv.
Below, we briefly describe these benchmarks.

\textbf{One Billion Words (LM1B).}
We use the LM1B dataset~\citep{lm1b}\footnote{\url{https://code.google.com/archive/p/1-billion-word-language-modeling-benchmark/}} which consists of news crawl data collected by \citet{lm1b}. The dataset is released under the Apache 2.0 license. Following prior work~\citep{mdlm,remdm}, we train ADLM for 1M steps using a batch size of 512 and a context length of 128, which corresponds to approximately 33B tokens. A larger variant trained for 2M steps sees approximately 65B tokens. The autoregressive baseline is trained for 0.5M and 1M steps respectively to match the total number of tokens processed. For evaluation, we use the standard LM1B test split.

\textbf{OpenWebText (OWT).}
We use the OpenWebText dataset~\citep{owt}\footnote{\url{http://Skylion007.github.io/OpenWebTextCorpus}}, which is a public reproduction of the WebText dataset originally used in GPT-2~\citep{gpt2}. It consists of web content extracted from high-quality Reddit URLs. The dataset is licensed under Creative Commons CC0 license (``no rights reserved''). 

\textbf{Zero-Shot Evaluation Benchmarks.}
To assess generalization, we perform zero-shot evaluation on seven diverse benchmarks covering language understanding, scientific articles, and long-context reasoning. 
We measure the perplexity on the validation sets of the following benchmarks:

\begin{itemize}
    \item \textbf{Lambada}~\citep{lambada}: This benchmark evaluates the ability of language models to predict a target word based on a broad context. Each example consists of a short narrative (\textit{context}) followed by a target sentence with its final word omitted. 
    Unlike typical language modeling tasks, solving Lambada requires understanding of longer-range dependencies beyond just the last sentence.

    \item \textbf{Penn Treebank (PTB)}~\citep{ptb}: A classical benchmark for language modeling, which is used to evaluate syntactic fluency and local coherence in generated outputs.

    \item \textbf{WikiText}~\citep{wikitext}: A long-form language modeling benchmark based on Wikipedia articles. This evaluation emphasizes factual correctness and world knowledge.

    \item \textbf{AG News}~\citep{agnews}: A text classification dataset with four categories, which is designed to predict the topic label given a news headline or excerpt.

    \item \textbf{PubMed}~\citep{science}: A benchmark based on biomedical research articles, curated for studying text summarization. Each example consists of a document body and a corresponding summary, typically derived from the abstract or conclusion section.

    \item \textbf{ArXiv}~\citep{science}: A benchmark similar to PubMed, but based on research articles from the ArXiv repository across diverse scientific domains. Summaries are typically derived from the abstract or conclusion sections of the papers.

\end{itemize}

\textbf{Licenses and Usage.}
All datasets used in this work are publicly available and licensed for research use. OWT is distributed under the Creative Commons CC0 license, and LM1B is under the Apache 2.0 license. All baseline models (e.g., SEDD, MDLM, BD3LM, ReMDM, GPT-2) and implementations are sourced from repositories released under MIT or Apache 2.0 licenses. 

All reported perplexity values for diffusion language models in this paper are \textit{upper bounds}, rather than exact likelihoods. This is because diffusion models do not compute normalized likelihoods in closed form due to the intractability of the reverse process. Instead, we follow prior work~\citep{sedd, mdlm, bd3lm, remdm} and evaluate diffusion models by estimating a negative log-likelihood upper bound via importance sampling or ELBO-based objectives. In contrast, perplexities for autoregressive models are computed exactly via standard token-wise cross-entropy. As such, direct PPL comparisons should be interpreted with this distinction in mind.

\subsubsection{Implementation Details \& Additional Results}
\label{sec:addn-exp-impl-results}

\textbf{Architecture Details.}
We apply a learnable linear projection within the denoiser network $\rvx_\psi(\cdot)$ that maps the anchor logits $\rvy_\varphi(\rvz_t)$ directly into the embedding space of the $\psi$-transformer. This modular design avoids explicit decoding and re-embedding of anchor predictions, enabling fully end-to-end differentiability between the anchor and denoising networks.

Although the ANELBO loss \eqref{eq:anelbo} is defined over all tokens $\{\rvx^l\}_{l=1}^L$ in the sequence $\rvx$, only the \emph{masked tokens} contribute to the denoiser loss via $\rvx_\psi(\rvy_\varphi(\rvz_t))$, and only the \emph{important tokens} contribute to the anchor loss via $\rvy_\varphi(\rvz_t)$. In practice, this is implemented using carry-over unmasking~\citep{mdlm} or by multiplying indicator functions in \eqref{eq:anelbo}. Specifically, we use $\mathbf{1}_{\{ \rvz_t^l = \rvm \}}$ to mask the denoising loss, and $\mathbf{1}_{\{\rvy^l = (\mathcal{A}(\rvx))^l\}}$ to mask the anchor loss. 

\textbf{Sampling Implementation.}
We provide the pseudo-code for ADLM sampling in \textbf{Algorithm~\ref{alg:adlm}}, which employs the standard locked-in sampler~\citep{mdlm} with a remasking schedule $\sigma_t$~\citep{remdm}. The locked-in sampler is a special case obtained by setting $\sigma_t = 0$.

\begin{algorithm}[t]
\caption{Anchored Diffusion Language Model (ADLM)}
\label{alg:adlm}
\KwIn{Anchor network $\rvy_\varphi(\cdot)$, denoising network $\rvx_\psi(\cdot)$, number of steps $T$, noise schedule $\alpha_t$, remasking schedule $\sigma_t$}
\KwOut{Generated sequence $\rvz_0$}

Initialize $\rvz_T \leftarrow (\rvm, \rvm, \dots, \rvm)$ \hfill \textcolor{gray}{$\triangleright$ Fully masked sequence}

\For{$i = T$ \KwTo $1$}{
    $t = i/T$, \quad $s = (i-1)/T$ \\
    Compute noise schedule: $\alpha_t$, $\alpha_s$ \\
    Compute remasking schedule: $\sigma_t \in [0, \sigma_t^{\text{max}}]$ \hfill \textcolor{gray}{$\triangleright$ Follows Eq. (9)~\citep{remdm}} \\

    Compute anchor transition using noisy sequence: \hfill \textcolor{gray}{$\triangleright$ Eq.~\eqref{eq:anchor-cat-dist} in \S\ref{sec:adlm}}
    \[
    r(\rvy_s^l | \rvz_t^l, \rvy_\varphi(\rvz_t)) =
    \begin{cases}
        \mathrm{Cat}(\rvz_s^l; (1 - \sigma_t) \rvy^l + \sigma_t \rvm), & \rvz_t^l \neq \rvm \\
        \mathrm{Cat}(\rvz_s^l; \frac{\alpha_s - (1 - \sigma_t) \alpha_t}{1 - \alpha_t} \rvy_\varphi^l(\rvz_t) + \frac{1 - \alpha_s - \alpha_t \sigma_t}{1 - \alpha_t} \rvm), & \rvz_t^l = \rvm
    \end{cases}
    \]

    Compute inference posterior using anchored prediction $\rvy_\varphi(\rvz_t)$: \hfill \textcolor{gray}{$\triangleright$ Eq.~\eqref{eq:inf-post-adlm} in \S\ref{sec:adlm}}
    \[
    q(\rvz_s^l | \rvz_t^l, \rvx_\psi^l(\rvy_\varphi(\rvz_t))) =
    \begin{cases}
        \mathrm{Cat}(\rvz_s^l; (1 - \sigma_t) \rvx^l + \sigma_t \rvm), & \rvz_t^l \neq \rvm \\
        \mathrm{Cat}(\rvz_s^l; \frac{\alpha_s - (1 - \sigma_t) \alpha_t}{1 - \alpha_t} \rvx_\psi^l(\rvy_\varphi(\rvz_t)) + \frac{1 - \alpha_s - \alpha_t \sigma_t}{1 - \alpha_t} \rvm), & \rvz_t^l = \rvm
    \end{cases}
    \]

    Sample $\rvz_s^l \sim q(\rvz_s^l | \rvz_t^l, \rvx_\psi^l(\rvy_\varphi(\rvz_t)))$ for all $l \in \{1, \dots, L\}$ \\
    Update $\rvz_t \leftarrow \rvz_s$
}
\Return $\rvz_0$
\end{algorithm}

\textbf{Implementation Details for OWT.}
As OWT dataset does not provide an official train/test split, we use the splits used in prior work~\citep{mdlm} and train ADLM for 1M and 2M steps with a GPT-2 tokenizer, batch size of 512, sequence length of 1024, and a log-linear diffusion schedule. This results in approximately 262B (1M steps) and 524B (2M steps) tokens. The AR baseline is trained for half as many steps under the same configuration to ensure a comparable number of tokens seen. 
We use the last 100K documents (held out during training) for evaluation.

\textbf{Ablation Study for OWT.}
In Table~\ref{tab:lm1b-owt-results}(b) (262B tokens) of the main paper, we show that simply using our two-stage architecture without the anchor loss already improves test perplexity from 23.17 to 21.79, validating our architectural choice. Adding the anchor loss further reduces perplexity to 20.62, highlighting the effectiveness of anchoring in likelihood modeling.

To better understand the role of anchoring, we conduct an ablation study on OWT using 78B tokens over 300K training iterations. We evaluate the impact of two hyperparameters: the anchoring loss coefficient ($\gamma$) and the anchor threshold ($\tau$), as discussed in \S\ref{sec:adlm}. Table~\ref{tab:ablation} reports results across different values. In Table~\ref{tab:ablation}(a), we observe that $\gamma = 3\text{e-}3$ yields the best trade-off across negative log-likelihood (NLL), perplexity (PPL), and bits-per-dimension (BPD), outperforming both higher ($3\text{e-}2$) and lower ($3\text{e-}5$) values. Similarly, Table~\ref{tab:ablation}(b) shows that $\tau = 5$ achieves the best results, outperforming thresholds such as 1, 10, and 20. Based on this study, we use $\gamma = 3\text{e-}3$ and $\tau = 5$ as our default configuration across all experiments.
With this default configuration, the training loss and validation PPL per iteration is shown in Figure~\ref{fig:train-valid-curve}.

\begin{table*}[t]
  \centering
  \caption{Evaluation metrics on OWT (78B tokens) for our ADLM algorithm across different (a) anchoring loss coefficients ($\gamma$) and (b) anchoring thresholds ($\tau$).}
  \label{tab:ablation}
  \begin{minipage}[t]{0.46\textwidth}
    \centering
    \begin{tabular}{lccc}
      \toprule
      (a) $\gamma$ & $3\text{e-}2$ & $3\text{e-}3$ & $3\text{e-}5$ \\
      \midrule     
      NLL ($\downarrow$) & 3.1084 & \textbf{3.1055} & 3.1663 \\
      PPL ($\downarrow$) & 22.386 & \textbf{22.321} & 23.719 \\
      BPD ($\downarrow$) & 4.4878 & \textbf{4.4803} & 4.5680 \\
      \bottomrule
    \end{tabular}
  \end{minipage}  
  \hfill
  \begin{minipage}[t]{0.48\textwidth}
    \centering
    \begin{tabular}{lcccc}
      \toprule
      (b) $\tau$ & 1 & 5 & 10 & 20 \\
      \midrule     
      NLL ($\downarrow$) & 3.1429 & \textbf{3.1055} & 3.1358 & 3.1770 \\
      PPL ($\downarrow$) & 23.171 & \textbf{22.321} & 23.008 & 23.976 \\
      BPD ($\downarrow$) & 4.5342 & \textbf{4.4803} & 4.5241 & 4.5835 \\
      \bottomrule
    \end{tabular}
  \end{minipage}
\end{table*}

\begin{figure}[t]
    \centering
    \includegraphics[width=\linewidth]{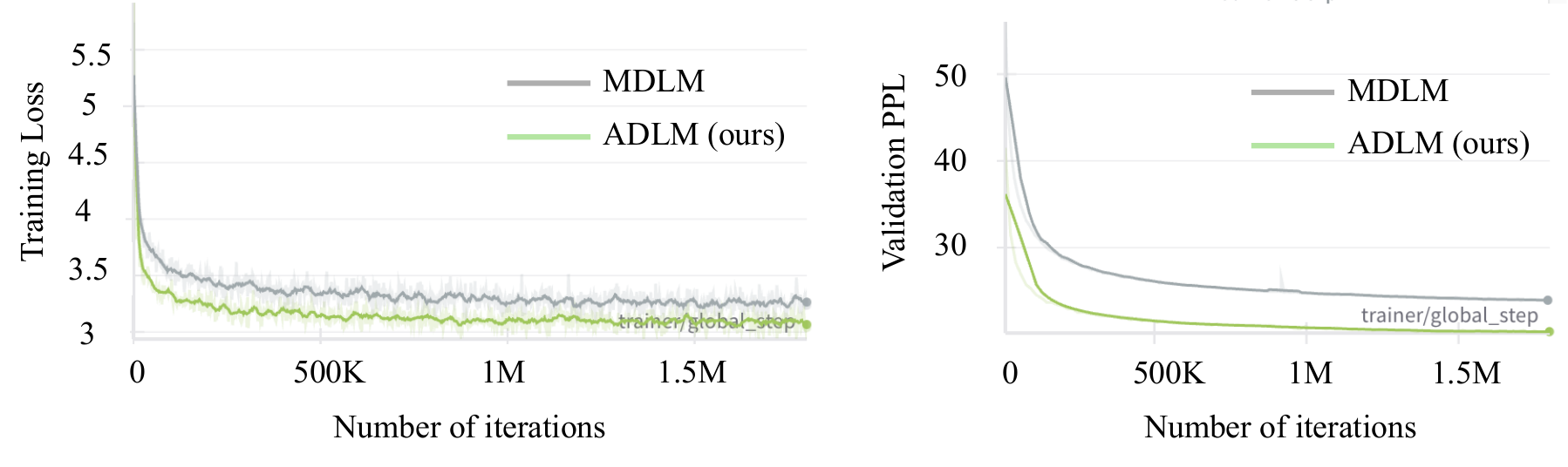}
    \caption{
    \textbf{Training loss and validation PPL versus number of iterations on OWT.} 
    We train both MDLM~\citep{mdlm} and our ADLM model for 2M iterations (524B tokens). 
    As discussed in \S\ref{sec:theory}, anchoring improves the sample complexity during training, resulting in faster convergence and lower validation perplexity. While the anchor loss is part of the training objective, we only visualize the NELBO here for a direct comparison with MDLM.}
    \label{fig:train-valid-curve}
\end{figure}

\textbf{Implementation Details for LM1B.}
The experimental setup follows prior works~\citep{mdlm,remdm}.
For LM1B, we follow the same setup as OWT, except a shorter sequence length of 128 tokens and the BERT-base-uncased tokenizer. Since LM1B includes an official test split, we use it for evaluation. We use the same anchoring configuration ($\gamma = 3\text{e-}3$, $\tau = 5$) as in OWT.

\textbf{Remasking Evaluation.}
For the results reported in Table~\ref{tab:remdm-exp-owt}, we generate 5{,}000 samples using sampling steps ranging from 128 to 4096. We evaluate each set of samples using MAUVE score, GPT-2 perplexity, and entropy. All hyperparameters follow the settings recommended in the remasking sampler developed in ReMDM~\citep{wang2025remasking}. For clarity, we report the exact values used in our experiments:

\begin{itemize}
  \item {sampling steps:} Number of sampling steps used per generation; values range over \{128, 256, 512, 1024, 2048, 4096\}.
  \item {$p$: 0.9} Top-$p$ value used in nucleus sampling.
  \item {$\eta$: 0.02} Parameter used in ReMDM strategies.
  \item {$t_{\text{on}}$: 0.55} Activation time for remasking in the ReMDM loop.
  \item {$t_{\text{off}}$: 0.05} Deactivation time for remasking in the ReMDM loop.
  \item {$\alpha_{\text{on}}$: 0.9} Fixed masking schedule $\alpha(t_{\text{on}})$ used in the ReMDM loop.
\end{itemize}

\noindent\textbf{Discussion on Anchoring vs. Attention.}
Anchoring is completely different from the traditional attention mechanism.
In the example shown in Figure~\ref{fig:adlm-overview}, attention layers operate only over the tokens available in $Z_t$. A key limitation arises when important tokens are masked—standard attention mechanisms are unable to access or reason about these missing tokens. In contrast, our anchoring mechanism explicitly predicts important tokens in its output, which can then be attended to by the downstream attention layers of the denoiser network. As such, our method is orthogonal to attention mechanisms. In fact, anchoring provides an efficient way to reduce the sample complexity of these layers and improve training as discussed in \S\ref{sec:app-anchored-graph}.

\noindent\textbf{Discussion on Size vs. Generative Perplexity.}
In Table~\ref{tab:size-ppl}, we compare the generative perplexities of ADLM against a range of autoregressive, masked language, and diffusion-based models, all evaluated using GPT2-Large over 1024 unconditional generations. A key advantage of ADLM lies in its ability to produce high-quality generations with significantly fewer parameters and fewer sampling steps than prior diffusion language models.

Specifically, ADLM achieves a perplexity of 15.7 using 4096 sampling steps, outperforming Plaid~\citep{plaid}, which reaches a perplexity of 19.7 despite using nearly \textit{five times} more parameters (1.3B vs. 293M). Even at lower sampling budgets (e.g., $T{=}2048$), ADLM maintains strong performance (20.1), reducing the performance gap with GPT2-medium (12.4) and outperforming BERT-Large+Gibbs while using at least 40M fewer parameters. This indicates that anchoring  enables more efficient denoising compared to prior diffusion language models.

Moreover, ADLM demonstrates a consistent reduction in perplexity as the number of sampling steps increases, validating the role of iterative refinement in enhancing sample quality. These results highlight the sample efficiency and scalability of our approach, making it a practical alternative to large autoregressive or parameter-heavy diffusion models for high-quality text generation.

\begin{table}[t]
\centering
\caption{
\textbf{Generative perplexities evaluated over 1024 unconditional generations using GPT2-Large (774M params) as the evaluation model.}
ADLM significantly outperforms prior masked and diffusion language models under comparable sampling configurations. 
Notably, it surpasses Plaid~\citep{plaid} while using only $\sim$20\% of the parameters, and nearly matches the performance of GPT2-medium despite having $\sim$50M fewer parameters.
}
\label{tab:size-ppl}
\small
\begin{tabular}{llcc}
\toprule
\textbf{Model Type} & \textbf{Evaluated Model} & \textbf{Params} & \textbf{Gen. PPL ($\downarrow$)} \\
\midrule
\multirow{1}{*}{\textit{Autoregressive}} 
& GPT2-medium~\citep{gpt2} & 345M & 12.4 \\
\midrule
\multirow{2}{*}{\textit{Masked Language}} 
& BERT-Large + Gibbs~\citep{wang2019bert} ($T{=}2048$) & 334M & 487.0 \\
& BERT-Large + Gibbs~\citep{wang2019bert} ($T{=}65536$) & 334M & 28.7 \\
\midrule
\multirow{5}{*}{\textit{Diffusion}} 
& Plaid~\citep{plaid} ($T{=}4096$) & 1.3B & 19.7 \\
& SEDD-medium~\citep{sedd} ($T{=}2048$) & 424M & 27.3 \\
& MDLM~\citep{mdlm} (locked-in, $T{=}1000$) & 170M & 44.2 \\
& ReMDM~\citep{remdm} (remasking, $T{=}1024$) & 170M & 28.6 \\
& GGM~\citep{ggm} ($T{=}4096$) & 387M & {19.5} \\
\rowcolor{orange!25}
& ADLM (ours) (locked-in, $T{=}1000$) & 293M & 32.9 \\
\rowcolor{orange!25}
& ADLM (ours) (remasking, $T{=}1000$) & 293M & 26.8 \\
\rowcolor{orange!25}
& ADLM (ours) (remasking, $T{=}1024$) & 293M & 25.1 \\
\rowcolor{orange!25}
& ADLM (ours) (remasking, $T{=}2048$) & 293M & 20.1 \\
\rowcolor{orange!25}
& ADLM (ours) (remasking, $T{=}4096$) & 293M & \textbf{15.7} \\
\bottomrule
\end{tabular}
\end{table}

\subsubsection{Samples Generated using ADLM}
\label{sec:addn-adlm-samples}

In this section, we present qualitative samples generated by our method ADLM and compare them with MDLM~\citep{mdlm}. We illustrate outputs under two sampling strategies: the \textit{locked-in sampler} (Examples I and IV) and the \textit{remasking sampler} (Examples II and III). As shown, increasing the number of sampling steps and enabling remasking leads to more coherent, paragraph-structured generations. These qualitative observations align with the quantitative results reported in Table~\ref{tab:remdm-exp-owt}, where ADLM achieves significantly higher MAUVE scores and lower generative perplexities than the existing diffusion language models.

\textbf{Example-I.}
This example shows a sample generated using ADLM with 1000 sampling steps and the locked-in sampler from MDLM. While the generated content is largely coherent, it lacks structured paragraph breaks and clear discourse delimiters. The result reads as a continuous stream of thoughts rather than a segmented essay. 
Notably, even this partially structured output is both qualitatively and quantitatively (Table~\ref{tab:remdm-exp-owt}) superior to MDLM (Example-IV), which fails to organize ideas into meaningful paragraphs without breaking logical continuity.
In subsequent examples (Example-II and Example-III), we show that increasing the number of sampling steps and incorporating remasking in ADLM significantly improves human-like text generation quality.

\begin{tcolorbox}[colback=gray!5, colframe=gray!40!black, 
  title=Example-I: Output from Anchored Diffusion Language Model using locked-in sampler, 
  fonttitle=\bfseries, sharp corners=south, enhanced, breakable, boxrule=0.5pt]
\small
\ttfamily
\noindent
<|endoftext|> knows they didn’t have the answers to answer this. It was a solid piece of early Survivor and people glossed over it. Sadly guess that’s where the Tribe go. It wasn’t a family. It was based on a communitynt, and that was everyone else.” Probst Steve Watson.

Never as good as our pundit, we’ll repair the damage we’ve done!

Tell a Random Story

A large facet of the new season will be to tell a random story, not sure if it’s how the players are portrayed. But it’s fairly obvious that this specificity will be replaced by community. The players will be given history, and the people we’ve seen seemed less and more surprising as the episodes went.

The host will highlight Tukuku Tshivu as a captain that’s always tried to respect the senior group and ditches the Amazonians in their barbaric ways. After that we will see Toatia; some’ve described Tasha as someone who listens to the players despite scheduling theirs, and she’s a pretty Open Tribal so far.

“And there’s Andrea the Polarnesian,” Probst continues, remembering the wood thinner swimmer named Andrea whose entire life she’s been challenging both Rossheimerk and the predecessor to make IHOPE more racially inclusive. Paris, so much for standing up to her at the beginning, was willing to take no down. This showed that the guild wouldn’t just be her. Andrea had been waiting for free play, few months and months, and she stayed. But boy did she know that players with such wide access to free play at the beginning of her life did require a fierce lane for attention. There’s a reason she was ripped from her alliance for a reason we’ll likely see players end up in separate struggles with. Paris.

For those players who can’t wait to see the transition into the beginning, the big reward before the game will be in reality. If you waited for three weeks for example, you’ll still need fully footed shoes to get your reward, you won’t get old shoes. The urgency of you signing up long enough to get this reward makes as big of a social sense because that’s the way the society goes, as the rewards are physical. The maximum rewards have yet to be determined but we’ll all keep tabs on the Breck King for progress.

Go Jaeger, the Trienge! It’s some fun! This a system will work well but not for an individual tribe in particular. With two challenges means a lot of people can only earn a full one day of fun and rewards. And if you haven’t yet seen a single challenge, sure not the one played by the KaZinaks. Julie, Caleb, and Shirishio Polyhrine have chosen creative challenge themes for them, albeit stoic winter difficult.

Winter’s captains might get less access to raw gameplay, but it doesn’t have to be a shallow session where everyone goes over to hash out rewards and conflicts. It won’t take place mid-season like winter, and it won’t go organically. Book dramas are inevitable, but reboots have less incentive to fail if the show does, and network commitments can to put everyone peoples hustle in a sorech we don’t. I’m not sure this makes up for these two things.

From What Built

Yes for the record, Jason and Miss were from a school group on the controversial side of history. Now Jay and the photographer will have an unique way of representing the tribe while keeping the story relevant. Though the cast will have something to do with it (gohan, skinnyfoots), there will be some mismatched players who generally won’t fill up the ample, talented roster. Which they clearly will. Anyone stuck trying to play too heavy on the other sides of the story? Time out is Rachel Probst, who only worked one Survivor Bannett final with EMA, ’97: Sandra’s power was barely that of Bailey Devevaorda, and Candice Burke was Pats to name a few.

Like Raj wound Katie’s name, her tribe will have an extraordinary impact on all of the stories and alliances.

Thai’s Olympic tribe is another portfolio for the series. Last season, Zinke Group managed to represent 13 Pacific nations on the world stage by visiting the indigenous peoples of them in various languages (Lahonese for example) and including First Peoples Tribal Councils; half of Taiwan’s “transformations” were so successful that they typically received their return for their contributions. It was a creative action, and even if one<|endoftext|>
\end{tcolorbox}

\textbf{Example-II.} 
The output generated by ADLM exhibits coherent structure and articulates thoughts in paragraphs like human-written text. In this example, we use our pre-trained ADLM base model and apply the remasking sampler~\citep{remdm} with 2048 sampling steps during inference. The model is able to maintain logical progression across long sequences, keeping together facts, statistics, and historical context into a coherent narrative.

This qualitative output complements the quantitative results reported in Table~\ref{tab:remdm-exp-owt}, where ADLM achieves the highest MAUVE score among all diffusion-based language models. MAUVE is known to correlate well with human evaluation of fluency and coherence, and this example provides evidence supporting ADLM’s superior generation quality. 

\begin{tcolorbox}[colback=gray!5, colframe=gray!40!black, 
  title=Example-II: Output from Anchored Diffusion Language Model using remasking sampler, 
  fonttitle=\bfseries, sharp corners=south, enhanced, breakable, boxrule=0.5pt]
\small
\ttfamily
\noindent
<|endoftext|> health and demographics compared to the EU [ edit ] \\\\ Historical European birth rate map \\\\ for EU and European men and immigrants[4] [5]\\\\ B.R.E.O. [6] Birth rate by country Birth Rate co- Interval Interval 1910 50.4\% (3.1\%) 4.8\% (12.6) 1910 61.5\% (4\%) 1920 51.1\% (3.0\%) 4.8\% (12.6) 1920 63.5\% (2.5\%) 3.4\% (1925) 59.4\% (2.2\%) 1930 62.7\% (2.1\%) 3.4\% (1930) 60.2\% (2.0\%) Other caregiving* 1910 49.1\% (3.0\%) 4.8\% (12.6) 1910 63.1\% (2.8\%) 3.4\% (other) 1920 62.7\% (2.5\%) 1930 60.5\% (2.1\%) (b) First and (c) Birth: average (for girls, for boys, both to sexes. d) Sex only and for all is for life and people can't be combined, have to be under 15 years of age to have a means to measure motherhood) Oldest age. In the mid-19th century, at least half the population had abortions before sterilization had taken. Abortion increased more rapidly in the twentieth century (nearly half in that time, 1.6 million births, infant mortality over 2 million during the period up to 1960). In a study of Austrian prisons, 50\% of the prisoners under the age of about 20 had been conscripted or maltreated for purposes before 1882. Even in cases where there was a thorough investigation, the participant had to remain in cruel conditions, including medical experimentation, electric lashings, etc., while pregnancy rates were very low[ [ ]. The sterilized prisoners were not only much more likely to have babies than the prisoners with no pregnancy; they were less likely to have abortions. Researchers found that the number of births to sterilized prisoners between 1882 and 1930 was historically the highest in the Czech Republic.\\\\ In Egypt, the penal code threatened local women and men with death if they didn't marry foreigners. Jihadists were slaughtered and people were forced to flee out of Egypt's borders. Likewise, foreigners women were forced to marry men from Egyptians as well. The Roma continued to rise, creating an informal climate and assimilated life in the country. Soon Egypt teemed with lesbian women abandoned by migrants shortly after departing for the country. This ethnic seeds played a role in the decline of the Ottoman Empire. It is widely agreed that this prejudice towards foreigners, and other groups, arose among Orientals and had little influence at the time, probably because of the security of limited exchanges between the two peoples.\\\\ In England, it was a family tradition to ferment wine up until the middle of the 19th century. After defeating the crown in 1791, the Shelburne brothers flattened the cladding of the Osgoode Hill to form a pot for setting in their wine cellar; in the process, they extracted quintense from wine. Thus in the first half of the nineteenth century, the use of preglycerine only lasted three thousand years, when any other taxes were levied. In Poland, torus helped prevent tuberculosis before it became widespread, as the pilgrim resisted wine, showing evidence of the disease. The use of torus accounted for the extinction of European flu.\\\\ Prevention measures [ edit ]\\\\ Pumps and band vein pumping apparatus\\\\ Because of increased RME and MMB in the central 20s researchers set about examining how physical physiological factors might affect pregnancy and its prevention. As had been done in the epidemics, there were two types of PNEA delivery devices used: cardiac pumps and ECG band pumps. The former was placed in the blood vessel, through which 20 mL/mm flow of gold tubing at 70,744 feet (21,131 m) was rated for blood pressure and had a continuous output of mL (100 yps) was produced and blood pressure measured. Approximately 7,000 km (3,200 mi) of gold tubing was used for each of these two devices. The hg pump that was placed in the patent vein could go from 5 cm (1.8 in) to 5 cm (1.5 in) of tubing to 3,000 feet/1,800 m (2,100 m/720 ft). The band pump had the same potential but the added disadvantage of releasing gas too hard. More than 100 million births were prevented by these pumps every year.[8]\\\\ Figure 2 Heart signs in a fetus during conception of the mother.\\\\ The gauges measure blood vessel pressure and blood pressure for 24 h during the first month (3\u20137 weeks) after conception, then a sample cylinder was placed in<|endoftext|>
\end{tcolorbox}

\textbf{Example-III.} 
In this example, we use 4096 sampling steps for ADLM with remasking sampler. The generated text demonstrates strong discourse-level coherence, well-structured paragraphs, and natural transitions between topics. 
For instance, the model begins with a detailed sports commentary and transitions smoothly into a socio-political news report. 
These shifts are logically segmented with paragraph-level breaks and consistent timelines (e.g., \texttt{18:00 PST}, \texttt{30:00 PST}), giving the impression of human-authored journals.
This qualitative improvement aligns with the quantitative results reported in Table~\ref{tab:remdm-exp-owt}, where ADLM achieves the highest MAUVE score among existing DLMs. 

\begin{tcolorbox}[colback=gray!5, colframe=gray!40!black, 
  title=Example-III: Output from Anchored Diffusion Language Model using remasking sampler, 
  fonttitle=\bfseries, sharp corners=south, enhanced, breakable, boxrule=0.5pt]
\small
\ttfamily
\noindent
<|endoftext|> were pretty good in that regard. Dwight Howard chipped in 15 points, and who's to argue that defense doesn't deserve a stat of the award?\\\\ Stephen Curry, who was on the floor as soon as I got there, finished with 27 points. Wesley Matthews put in a great effort as well.\\\\ Washington (Curry 30-31, Wall 80-78, Wall 82-7)\\\\ I listened to more of this game than originally planned. The Wizards took control of the game when Otto Porter scored 22 points.\\\\ Takeaways:\\\\ The Wizards took control of the game during the first quarter, mostly courtesy of Dennis Schroder and Shaun Livingston. In the second minute, Stephen Curry hit a perfect layup that bounced to the hoop for a dunk, Durant scored two quick points to make it 15-7 with about seven minutes left, and in the third minute Jameer Nelson hit a layup in the corner from range to make it 18-9.\\\\ 18:00 PST\\\\Wall and Marcus Thornton each dribbled their way through traffic and made two threes, the first by Andre Drummond with three minutes left to record his 10th career triple-double, and the second by Kevin Durant with five minutes left. However, both shots were blocked at the basket. Kevin Durant missed most of the rest of it with an ankle injury, and Austin Rivers, getting a last-second shot off on him pretty badly, tried to knock the ball down, but couldn't as the ref just waved it back. Then Rivers shot, then Thornton drove his way in front of Durant, and Wall had to do a reverse dribble move to knock Curry's first shot free. This was probably the highlight of the game; the defense wasn't matched with the offense very well against the Thunder from here on out.\\\\ Halftime:\\ \\ Oklahoma City gave up 10 points for coming up in the final minute, and Zach LaVine came off the bench. In the fourth minute Russell Westbrook ran in a low drive to post up screen while the ball was on the floor, but with 10 seconds left in the game, Zach LaVine stepped onto the floor and slammed the ball in the basket for the bucket.\\\\ 30:00 PST\\\\ Tony Allen was brought in for Monta Ellis, and he just could not start. He tried his way to the corner for a 3-pointer, but he was shot. Nerlens Noel quickly got up and knocked it back down low, and Allen ran back into the stands of the arena, knocking the ball down high and out of bounds.\\ \\ Oklahoma City's bench never stopped suiting as the game went on. This is a team that is making a leap in the NBA, with DeMarcus Cousins and Chauncey Billups leading the way, and it's just the right thing to do in such a crazy situation.<|endoftext|>(CN) Hundreds of Seattle residents Tuesday marched to Columbia City to protest the construction of Tesla's new headquarters in one of the largest nationwide demonstrations in years, as Seattle police clashed with student protesters at a park near Washington State University.\\\\ Thousands of the student marchers gathered at around 2:30 p.m. in Seattle, capital of Western Washington as they confronted riot police carrying rocks, rocks bottles, and gas masks. Large demonstrations of this kind are nearly every weekly in the sprawling city. Tesla CEO Elon Musk said he didn't know how much the new headquarters could cost him but that it would be enough for the car company to complete its plans. However, the cost estimate is unknown. In the debating point, the city of Seattle, a state infrastructure agency, the Metropolitan Transportation Authority and the thousand-plus protesters wait for a judge's approval to assess possible damages.\\\\Tesla's headquarters is the second-tallest office building proposed in Greater Snohomish County. It is planned on the Seattle Pacific University campus. Tesla also has a 14-story supertall building on Mercer Street that would also include mixed-use office space. There is also construction of a 50,000 square foot garage that is expected to open next year. There is also a parking garage on the 8.7 acre property. No residences are listed in Tesla's property plan approved by federal agencies.\\\\Tesla and company officials are expected to have the newest office building at 240,000 square feet made for public access public, once it's slated for completion. The company is already in court for a waiver that allows it to maintain the building because it is in a park. A lawyer for the company said on April 27 that a judge in Seattle is expected to rule on the waiver, which would allow the building to be enabled. This is one of several lawsuits that have emerged in recent weeks since Tesla's new headquarters was announced in March. Across the U.S<|endoftext|>
\end{tcolorbox}

\textbf{Example-IV.}
For completeness, we also include samples generated using MDLM~\citep{mdlm}.
In contrast to ADLM, the generated text lacks coherent paragraph structure and clear discourse delimiters. The output tends to drift semantically and fails to maintain logical continuity across segments. This qualitative degradation is consistent with the lower MAUVE scores and high Gen PPL reported for MDLM in Table~\ref{tab:remdm-exp-owt}, reflecting lower human-like text generation quality.

\begin{tcolorbox}[colback=gray!5, colframe=gray!40!black, 
  title=Example-IV: Output from Masked Diffusion Language Model using locked-in sampler, 
  fonttitle=\bfseries, sharp corners=south, enhanced, breakable, boxrule=0.5pt]
\small
\ttfamily
\noindent
<|endoftext|> told local reporters. “Any interlocutor was useful for formal consulting.”UCU basketball coach, classmate of three, David Irving, is a teacher at Memorial High and is among the same players who recently won his credentials in PE Illustrated for the category. Four longtime players make such notable contacts: two career elons (not Columbus native champ Bobby Pulfres; Erie it's Duane Wright than Vin Diesel) and he’s the current fourth student at the entire campus, as far as entrance per pupil is well. According to Wright, Shaw was about to thirsty with traders celebrating with an invented flavoral beer and the extra muggers when he approached the team with a shot on Irving, something he was doing while across on Thursday afternoon on an acquired Colt rifle. He even spoke as he tried to reach out with his fists and pose to look good and enhance his visual presentation. During a photo shirt Shaw took on the stool, his head fell like the head of fungus straight for a hanging liquid. He could not believe the team were able to get through.“There were around six or seven guys and somebody takes shots,” Aggaeed says, "and once they got ready they sat beside them laughing for a bit. And everyone was super friendly. They usually get been late and then it would be 104 o’clock while we went…”It surprising people a lot for the guys here. UCU quarterback Jeffrey Zifits calls him a group of hives stirring spirits. “This group is meant for the senior players and one of the best young performers,”the team or the officials or coaches or whatever, but it's not meant for Gasta-Grader," Zifit says."You could see it first hand each time we went fishing with fans... It wasn’t just people throwing stuff.”It’s the biggest thing they did after third windows and they put it out there,” Zifits, "we didn’t let them but it brought more energy here. When we came showled them it was like a CFL game. The fun played with energy and it’s great.”It is definitely in the vein of the spirit of cabo, kind of a bar that lets people in two buckets and for Wright, if you can or can’t squeeze your lip while golfing, go bowling and play.“You expect to be at the conference in this room that vacillums with everybody else,” Wright says. “So, just remember, and our motto in a group is ‘Reason,’ when it comes to material Value, we’re a group and all our resources are all being used by the future to promote the future. It works out. They do it by laughing.“Pulkies” students really do love coming back," says Shaw, then a member of the PE group. "Jefferson will still come back and brilliant and all that. ‘We can’t let them shut them down. If it goes their way, they are not letting us down.’ Working title: Mentors, I need to get out here we’re all together with students to make it through this."PS<|endoftext|>By Colin Collins Piper for UCMP

PRINT CLOSE British university governors unveiled the cancellation of student-held protest in defiance of the declaration it government has gone into a pitch to the Liberal Democrats to sack it for several hours for its register of support for atheists.

The university’s slaughter of Scottish-encompassing religious figures and included tacit approval by both houses of parliament before the Niche government was in administration.

It is understood that the load followed the closure of the university’s reserve vice chancellor subsequently who died of Parkinson’s syndrome as his wife’s tombstone was on fire.

Catherine Baird of Parliamentary Coalition for Government (Scrabble) leader Lassie Mann joined further the condemnation of the stalls.

Ms Baird said: “Good thing the students approached the university and they were confident that you found much solace.

“It’s a sad day in British life and I think it would be tragic if we ever had this again but this is a very simple decision, to be clear.”

Scalric-chancellor Jack Major who attended the protest told a press conference: “It has been very challenging but I would like to later apologise personally in the language of protesters of what happened.

“The Department of Students and Service at the University of California, Berkeley was placed following people’s request and it is no surprise that the university is now in administration but those schools needed to be confirmed afterwards that they would face disciplinary action.

Directors Major said: “I’m a minority Instructors were very disappointed about the protest within the university and the university’<|endoftext|>
\end{tcolorbox}

\subsection{Auto-Regressive Models}
\label{sec:addn-exp-arm}

This section provides an extended discussion of our anchoring mechanism as applied to autoregressive language models. We begin by contrasting standard autoregressive training with our proposed anchored autoregressive training framework.
In \S\ref{sec:addn-exp-arm-baselines}, we outline the AR baselines used for comparison. In \S\ref{sec:addn-exp-arm-benchmark}, we describe the training and evaluation benchmarks. \S\ref{sec:addn-exp-arm-impl} provides implementation details necessary for reproducibility. In \S\ref{sec:addn-exp-arm-gen}, we demonstrate the benefits of anchoring for likelihood modeling on the OWT benchmark. Finally, \S\ref{sec:addn-exp-arm-reason} presents results on math and logical reasoning tasks, showing improved reasoning capabilities due to Anchored Chain-of-Thought (ACoT) fine-tuning.

\textbf{Discussion on standard AR and anchored AR models.}
Figure~\ref{fig:ar} illustrates the standard training setup for autoregressive large language models (LLMs), where each token is predicted based on its left context. This sequential decoding lacks structural guidance to prioritize important tokens.

In contrast, as shown in Figure~\ref{fig:a2r}, we decompose LLM training into a two-stage process. In the first stage, an anchor network is used to predict the likelihoods of important tokens (e.g., `cat' and `dog' higlighted in {\color{blue!70!cyan}blue})—referred to as \textit{anchor tokens} or \texttt{[ANT]}—conditioned on the preceding context. An anchor token is not necessarily the next token in a sequence. Its prediction is supervised using a cross-entropy loss applied only at the anchor position.

In the second stage, a \textit{lightweight} autoregressive LLM  (half the number of transformer layers used in Figure~\ref{fig:ar}) is trained using the standard next-token prediction objective, but conditioned on the output logits from the anchor network. Rather than sampling tokens and re-embedding them, we project the anchor logits through a linear layer and feed the resulting representations directly into the LLM transformer, similar to the ADLM setup discussed in \S\ref{sec:addn-exp-impl-results}. This allows gradients from the LLM's output to backpropagate through the projection layer to the anchor network, enabling joint training. Importantly, this anchoring mechanism allows the model to ``look ahead'' by leveraging important tokens in a sequence, which improves reasoning without significantly altering the decoding process.

Formally, our anchored autoregressive training loss becomes:
\begin{align*}
    \gL_{\text{A2R}}(\psi, \varphi) 
    & = -\E_{X\sim q}\left[ \sum_{l=2}^L \log p_\psi(X^{l}| Y_{\varphi}^{1:l-1}(X^{1:l-1})) \right]
    - \gamma \E_{X\sim q}\left[ \sum_{l=2}^L \log r_\varphi(Y^{l}| X^{1:l-1}) \right],
\end{align*}
where $Y = \gA(X)$ is the sequence of anchors obtained through the operator $\gA$ as in ADLM (\S\ref{sec:adlm}).

\begin{figure*}[t]
    \centering
    \begin{minipage}[t]{0.49\linewidth}
        \centering
        \includegraphics[width=\linewidth]{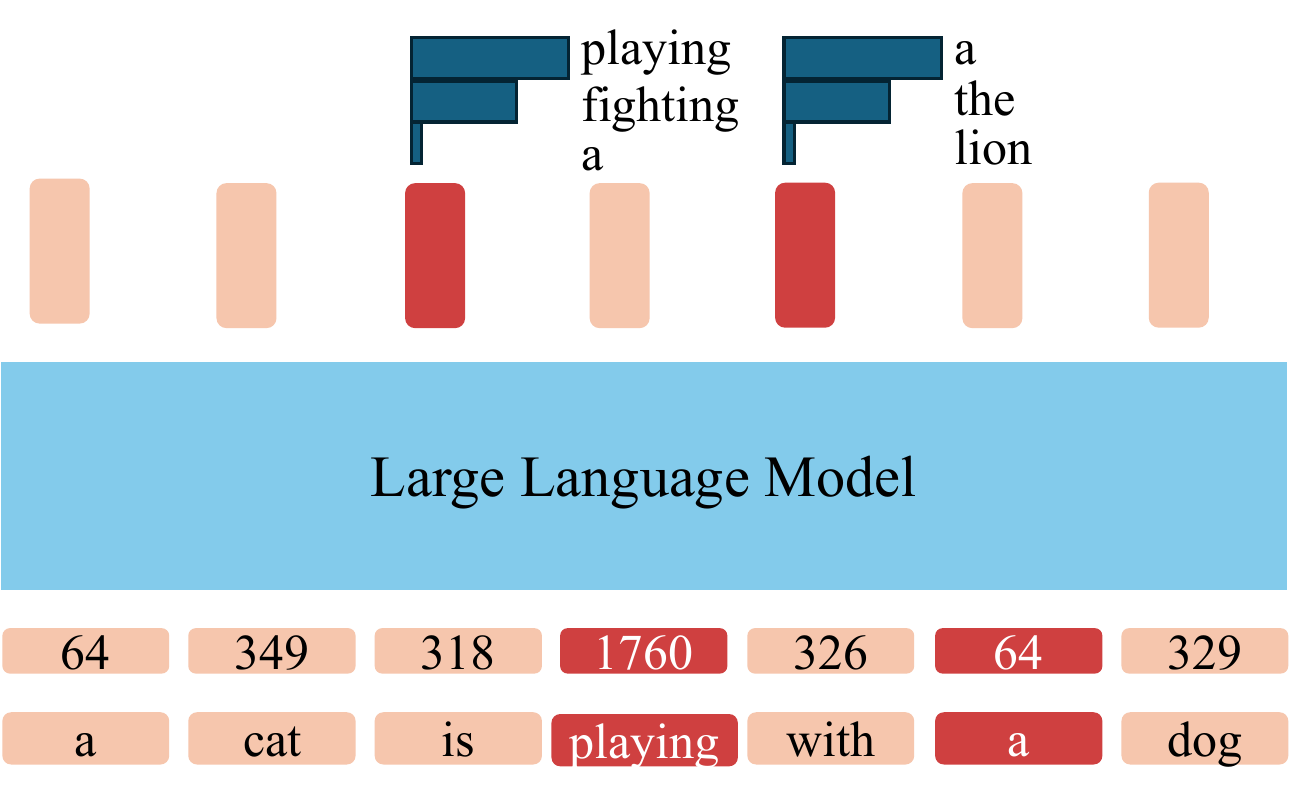}
        \caption{
        \textbf{Training of standard autoregressive (AR) models.} A neural network is trained to predict the next token using causal attention (left-to-right context). All tokens contribute equally to the training loss, and the model treats the sequence uniformly without structural guidance.
        }
        \label{fig:ar}
    \end{minipage}
    \hfill
    \begin{minipage}[t]{0.49\linewidth}
        \centering
        \includegraphics[width=\linewidth]{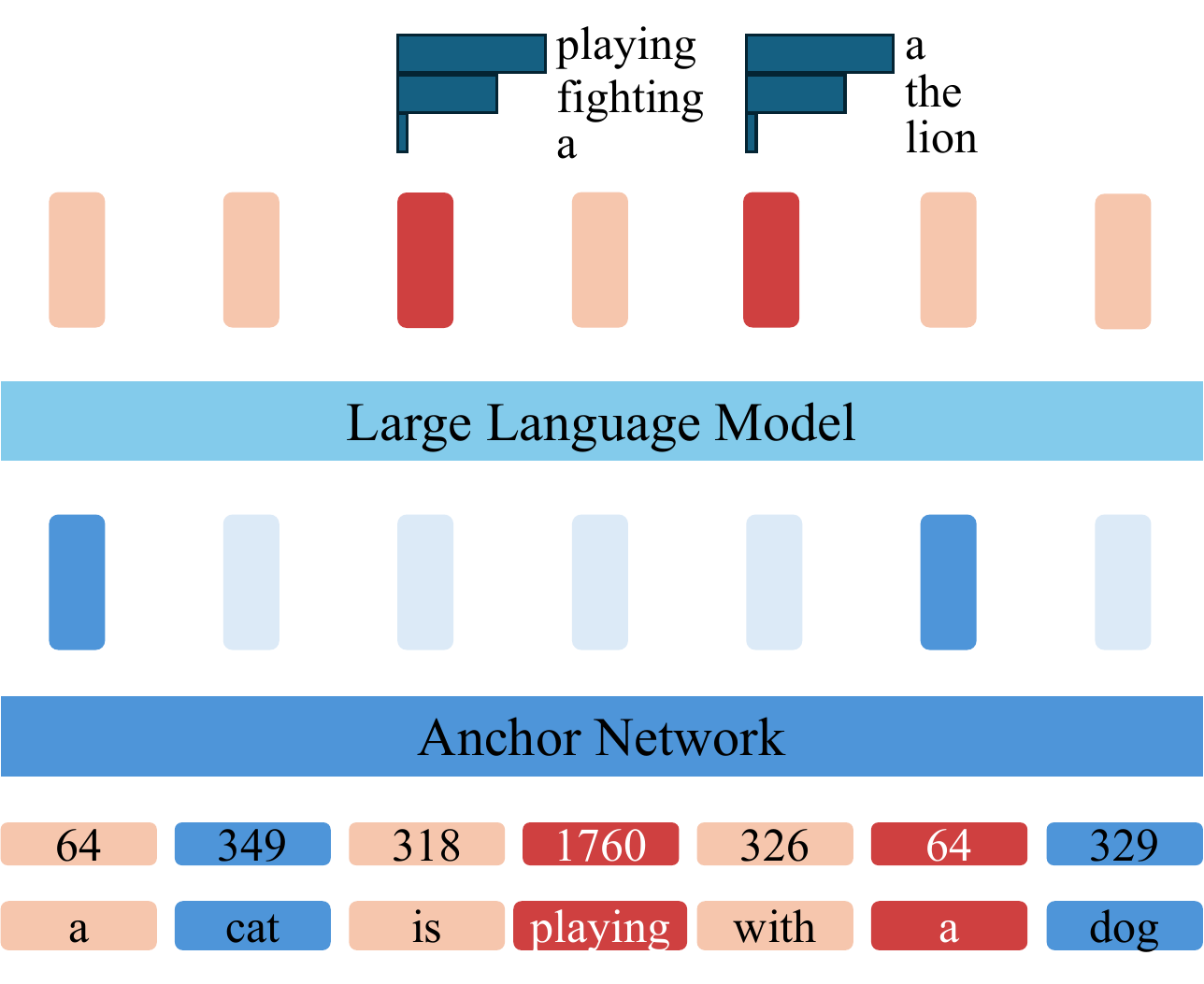}
        \caption{
        \textbf{Training of anchored autoregressive (A2R) models.} An anchor network first identifies important tokens (e.g., `cat', `dog' shown in {\color{blue!70!cyan}blue}), which are supervised via an auxiliary anchor loss. A lightweight LLM is then trained to predict the next token based on anchored predictions.
        }
        \label{fig:a2r}
    \end{minipage}
\end{figure*}

\subsubsection{Compared Baselines}
\label{sec:addn-exp-arm-baselines}
We compare our method against a diverse range of latent reasoning and chain-of-thought (CoT) approaches. We also include baselines that do not use reasoning traces during training. To ensure a fair comparison, we use the same base implementation from prior work~\citep{coconut} and integrate our anchoring mechanism into the identical multi-stage training pipeline\footnote{\url{https://github.com/facebookresearch/coconut}}.

\begin{itemize}
    \item \textbf{CoT}~\citep{cot}: The base model is fine-tuned using the question as context and the concatenated (reasoning, answer) as tokens contributing to training loss. During inference, the model first generates a reasoning trace and then the final answer.
    
    \item \textbf{No-CoT}: A standard supervised fine-tuning baseline that does not use reasoning traces. The model is trained to predict the answer directly given the question as context.
    
    \item \textbf{Pause Token}~\citep{pause}: A pause token is inserted between the question and answer without using reasoning traces. The number of pause tokens is set to match the training stages of \textsc{Coconut}, to ensure fair comparison.
    
    \item \textbf{Pause-as-Thought in \textsc{Coconut}}: This variant replaces the continuous latent thoughts in \textsc{Coconut} with pause tokens while following the same multi-stage training schedule.
    
    \item \textbf{iCoT}~\citep{icot}: Internalizes the reasoning trace into intermediate transformer layers, allowing the model to reason implicitly without generating explicit steps for reasoning.
    
    \item \textbf{\textsc{Coconut}}~\citep{coconut}: This method uses continuous latent representations (referred to as continuous thoughts) instead of discrete reasoning tokens, and inserts these continuous thoughts directly in the embedding layers before processing through the transformer block. This is motivated by the idea that reasoning can often be more intuitively encoded in a continuous latent space, especially when explanation using  words is difficult.
    
    \item \textbf{BoLT}~\citep{bolt}: Bootstraps its ``reasoning to learn'' ability using latent thoughts and self-distillation. This is useful in data-constrained environments, such as GSM8K.
\end{itemize}

One key distinction between methods that introduce new tokens between the question and (reasoning, answer)—such as Pause, \textsc{Coconut}+Pause-as-Thought, and our method (ACoT)—is that ACoT explicitly applies an \textit{anchor loss} on the inserted \texttt{[ANT]} tokens. This external supervision provides semantic guidance on which tokens are important, resulting in better performance on both language modeling and complex reasoning tasks.

\subsubsection{Training and Evaluation Benchmarks}
\label{sec:addn-exp-arm-benchmark}
\textbf{Text (OWT).} We conduct next-token prediction experiments on the OWT~\citep{owt} to evaluate likelihood modeling in the AR setting.
In this setting, the model does not have access to structured reasoning traces as in the Math and Logic benchmarks described below. 
The training and validation splits follow the same setup used in the diffusion experiments (\S\ref{sec:addn-dlm-benchmark}).

\textbf{Math (GSM8K).} The GSM8K dataset~\citep{gsm8k} contains grade-school math word problems requiring multi-step arithmetic reasoning. Each problem is presented as a natural language prompt followed by intermediate reasoning steps and a final answer. Successful modeling on GSM8K requires accurate understanding of the question through natural language and execute multi-step arithmetic reasoning to derive the final answer.

\textbf{Logic (ProntoQA, ProsQA).} For logical reasoning, we evaluate on ProntoQA~\citep{prontoqa} and ProsQA~\citep{coconut}. ProntoQA consists of deductive reasoning tasks, where the model must verify or falsify a hypothesis using a given set of symbolic rules. We use the 5-hop variant, which serves as a controlled benchmark for logical reasoning.

ProsQA~\citep{coconut} is a more challenging planning task that requires navigating through complex reasoning graphs to arrive at the correct answer. The model must identify and follow valid inference paths among distractors. We follow the multi-stage fine-tuning procedure described in~\citep{coconut} to enable direct comparison with previous baselines.

\textbf{Evaluation Metric.} 
For all reasoning benchmarks, we use accuracy as the primary evaluation metric, measuring whether the final answer produced by the model matches the ground truth.

\subsubsection{Implementation Details}
\label{sec:addn-exp-arm-impl}
\textbf{GSM8K.}
To ensure a fair comparison with prior baselines, we closely follow the training protocol used in \textsc{Coconut}~\citep{coconut}. Below, we highlight the key distinctions in our ACoT training setup and recall relevant aspects of the original training procedure for completeness.

The primary distinction in our approach is the use of \texttt{[ANT]} tokens to guide the generation of reasoning traces and final answers. Unlike \textsc{Coconut}, we do not remove intermediate reasoning steps when inserting \texttt{[ANT]} tokens. Instead, we treat anchors as auxiliary indicators that help the model focus on important context of the reasoning trace.

For instance, in Question 1 of Table~\ref{tab:acot-gsm8k}, we identify the tokens \texttt{16}, \texttt{3}, \texttt{4}, \texttt{9}, \texttt{2}, and \texttt{18} as important due to their role in the arithmetic operations leading to the final answer. When using two \texttt{[ANT]} tokens, we select the initial subset of these important tokens—in this case, \texttt{16} and \texttt{3}—to serve as anchors. 
These anchors act as planning cues that guide the model’s reasoning trajectory.

Following \textsc{Coconut}, we adopt a multi-stage training strategy shown in Figure~\ref{fig:acot-training-schedule} and train all models for a total of 25 epochs. 
We report results from the best checkpoint among these 25 epochs. 
The first stage is equivalent to standard CoT pre-training, where no \texttt{[ANT]} tokens are inserted between the question and the (reason,answer) tokens. 
In the second stage (epochs 1–3), we introduce one \texttt{[ANT]} token and continue training for 3 more epochs. 
In the third stage (epochs 4–6), we add one more \texttt{[ANT]} token and train for another 3 epochs. 
After a maximum of 2 \texttt{[ANT]} tokens have been introduced in the span of 6 epochs, we continue training for the remaining epochs (up to 25) using supervised fine-tuning.

\begin{figure}[t]
    \centering
    \includegraphics[width=\linewidth]{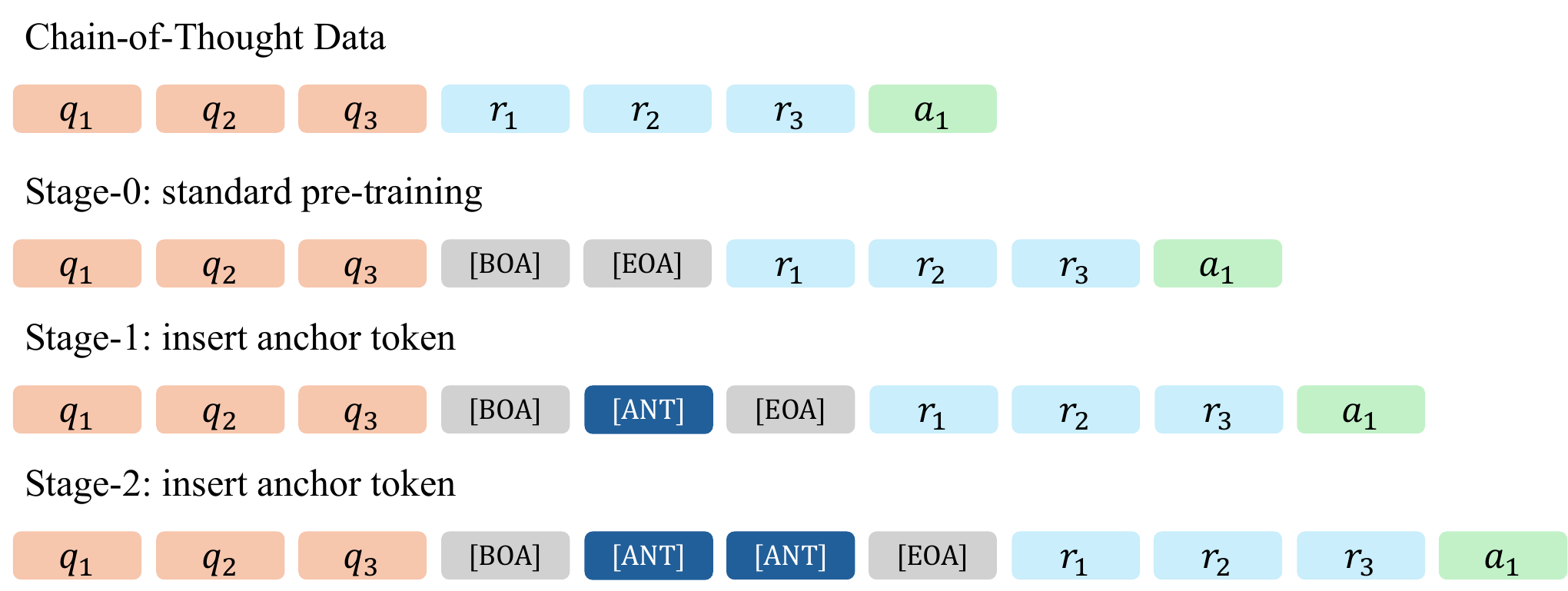}
    \caption{
    \textbf{Multi-stage training pipeline for Anchored Chain-of-Thought (ACoT).} 
    Here, \texttt{[BOA]} and \texttt{[EOA]} denote the beginning and end of anchors, respectively. 
    Many reasoning traces contain redundant information, increasing entropy and making the reasoning process harder to learn. 
    By supervising the model through a small set of important tokens extracted from the reasoning trace, ACoT encourages more structured intermediate computations, guiding the model to reason in a more targeted and interpretable way.
    To reduce the number of additional tokens produced, we drop reasoning tokens for every \texttt{[ANT]} insertion. For example, we drop $r_1$ in Stage-1 and $r_1, r_2$  in Stage-2 to demonstrate this phenomenon.
    }
    \label{fig:acot-training-schedule}
\end{figure}

\textbf{ProntoQA.}
We follow a similar multi-stage training procedure for the ProntoQA benchmark, with a total of 50 training epochs. Given the longer context and deeper reasoning chains in this dataset, we introduce \texttt{[ANT]} tokens progressively over six stages, following an initial stage of CoT fine-tuning.

On this benchmark, we consider the valid nodes in the reasoning trace as important tokens, since they anchor the generation of the ground truth reasoning steps that ultimately lead to the correct answer. 

For example, in Question 1 of Table~\ref{tab:acot-prontqa}, we treat the tokens \texttt{Alex}, \texttt{Tumpus}, \texttt{Gorpus}, \texttt{Wompus}, \texttt{Sterpus}, \texttt{Brimpus}, and \texttt{Happy} as anchor tokens. When training with six \texttt{[ANT]} tokens, we select the first six tokens from this list as supervised labels. 

The first stage consists of CoT training without any anchor tokens. In each subsequent stage, we insert one additional \texttt{[ANT]} token into the input and train for 5 epochs. After completing six such stages (i.e., 30 epochs in total), each involving an additional anchor token (up to 6), we continue standard SFT (without further changes to the number of \texttt{[ANT]} tokens) for the remaining 20 epochs.

\textbf{ProsQA.}
For the ProsQA benchmark, we follow the same multi-stage training procedure as used for ProntoQA, with one key distinction: at each stage where a new \texttt{[ANT]} token is introduced, we remove one reasoning step from the groundtruth reasoning steps used in SFT. This encourages the model to learn missing reasoning steps based on the guidance provided by \texttt{[ANT]}. As in ProntoQA, we progressively insert six \texttt{[ANT]} tokens over six stages, training each stage for 5 epochs, totalling up to 50 epochs.
In Question 1 of Table~\ref{tab:acot-prosqa}, we identify \texttt{Tom}, \texttt{Terpus}, \texttt{Brimpus}, and \texttt{Lempus} as anchoring tokens. 
When training with six \texttt{[ANT]}, we use the first six valid nodes\footnote{If the number of \texttt{[ANT]}s is larger than the available anchors, then we ignore the loss on extra \texttt{[ANT]}s.} identified in the reasoning trace as supervised labels. 

\subsubsection{Generative Modeling using Anchored Auto-Regressive Models}
\label{sec:addn-exp-arm-gen} 
\noindent\textbf{Results on OWT.}
We evaluate the impact of anchoring on autoregressive language models using the  OWT benchmark. 
Similar to our approach in diffusion language models, we decompose the standard next-token prediction task into two stages. 
In the first stage, an anchor network predicts a distribution over semantically important tokens (e.g., low-frequency or structurally informative tokens), referred to as anchor tokens or \texttt{[ANT]} for short. 
These anchor predictions help the model identify key parts of the input that provide better context for effectively predicting the next token.

As shown in Table~\ref{tab:ar-owt-ppl}, our anchored autoregressive model (A2R) outperforms standard AR models at 3 training scales. 
When trained on 262B tokens, A2R achieves a perplexity of 16.23, surpassing the corresponding AR baseline (17.53). 
At the largest scale (524B tokens), A2R further reduces perplexity to 15.86, surpassing the best AR baseline (17.26) by a margin of 1.4 PPL. 
Interestingly, while performance gains in standard AR pretraining tend to saturate around a perplexity of 17, our A2R pretraining continues to improve, reducing perplexity from 17.29 to 15.86 as the number of training tokens increases from 110B to 524B.
This consistent improvement across scales demonstrates that anchoring provides a more informative latent representation, helping the model better estimate token likelihoods. 

These results confirm that anchoring is not limited to diffusion models and is broadly applicable to autoregressive architectures. It enables improved context modeling through an anchor network without requiring many architectural changes to the base model.

\begin{table*}[t!]
  \caption{
  \textbf{Anchoring improves autoregressive modeling on OWT.} Test perplexities (PPL; $\downarrow$) for standard AR models and our anchored variant (A2R) at various training scales. $^\dagger$ Results from~\citep{mdlm}. A2R consistently improves perplexity by introducing a two-stage prediction process: anchor tokens are first predicted, then used to guide next-token prediction.
  }
  \label{tab:ar-owt-ppl}
  \centering
  \begin{tabular}{lcc}
    \toprule
    \textbf{Model} & \textbf{PPL ($\downarrow$)} & \textbf{Tokens} \\
    \midrule
    AR (retrained) & 17.94 & 110B \\
    AR$^\dagger$ & 17.54 & 262B \\
    AR (retrained) & 17.53 & 262B \\
    AR (retrained) & 17.26 & 524B \\
    \midrule
    \textbf{A2R (ours)} & 17.29 & 110B \\
    \textbf{A2R (ours)} & 16.23 & 262B \\
    \textbf{A2R (ours)} & \textbf{15.86} & 524B \\
    \bottomrule
  \end{tabular}
\end{table*}

\subsubsection{Improved Reasoning using Anchored Chain-of-Thought}
\label{sec:addn-exp-arm-reason}
Decoding order is known to affect inference quality, as established in classical structured prediction literature (e.g., via topological or lexical ordering)~\citep{jordan1999introduction,bengio2003neural,koller2009probabilistic}. However, most LLMs (especially autoregressive ones) follow a strictly left-to-right generation scheme, which often leads to shallow, sequential reasoning. This bias can hinder tasks where intermediate computations depend on later context or global structure.

The proposed anchoring mechanism addresses this issue by leveraging important tokens (such as root nodes in the underlying reasoning graph) prior to decoding. These tokens serve as intermediate supervision points, enabling the model to focus on the most critical sub-computations early in the reasoning trace. We then apply the standard next-token prediction loss conditioned on these anchor predictions, effectively guiding the model toward more structured and globally coherent solutions. Thus, anchors allow the model to \textit{look ahead} and think \textit{non-sequentially} while retaining compatibility with standard autoregressive training and decoding pipelines.

\textbf{Results on GSM8K.}
To evaluate the effectiveness of anchoring in complex reasoning tasks, we apply our method ACoT on GSM8K~\citep{gsm8k}.
Table~\ref{tab:acot-gsm8k} presents qualitative examples comparing standard CoT reasoning traces with those produced by our ACoT model. Each example includes the input question, ground truth CoT trace, the model's full output (including anchor tokens), and the extracted final answer.
Table~\ref{tab:acot-results} contains the quantitative results. 

A key insight from Question 1 is that standard CoT processes the question in a purely left-to-right manner. It computes \texttt{16 - 3 - 4 = 9}, and then \texttt{9 * 2 = 18}, \textit{following the order in which quantities appear in the question}. In contrast, ACoT introduces \texttt{[ANT]} to capture important tokens, which allows it to reason more globally. Specifically, ACoT first computes \texttt{3 + 4 = 7} to aggregate all consumption before subtracting from the total (\texttt{16 - 7 = 9}), a pattern more aligned with human intuition. This demonstrates how anchoring enables a ``look-ahead'' planning behavior, in contrast to the left-to-right decoding bias of standard CoT.

In Question 3, ACoT diverges from the gold trace and produces an incorrect final answer. This example highlights a limitation: while anchoring allows more flexible planning, it still depends on correctly identifying and utilizing important tokens.

CoT and our method ACoT share a similar experimental setup, with the only exception of anchoring via multi-strage training. While standard multi-stage training as in \textsc{Coconut} reduces the performance from 42.9\% to 34.1\%, our anchoring mechanism prvoides a substantial gain of  11.1\%  over these latent reasoning methods (such as \textsc{Coconut} and iCoT), and a notable gain of 2.3\% over CoT.

Overall, these examples demonstrate that anchoring provides a more flexible and semantically guided framework for reasoning, enabling the model to break free from strictly left-to-right token prediction. This leads to improved planning and more human-like reasoning behaviors, especially in problems requiring multi-step arithmetic reasoning as in GSM8K.

\begin{table}[t]
\caption{Examples of math word problems with reasoning traces from GSM8K~\citep{gsm8k}. Each row shows the input question, groundtruth reasoning trace (CoT), answer, our model's full output sequence, and the final extracted answer. Our model implicitly reasons through anchoring tokens (\texttt{[ANT]}) and produces reasonable traces before computing the final answer.}
\label{tab:acot-gsm8k}
\centering
\begin{tabularx}{\linewidth}{l X}
\toprule
\textbf{Question 1} & \textit{Janet’s ducks lay 16 eggs per day. She eats three for breakfast every morning and bakes muffins for her friends every day with four. She sells the remainder at the farmers' market daily for \$2 per fresh duck egg. How much in dollars does she make every day at the farmers' market?} \\
\textbf{CoT} & \texttt{\textcolor{blue}{<<16-3-4=9>> <<9*2=18>>}} \\
\textbf{Answer} & \textbf{18} \\
\cmidrule(lr){1-2}
\textbf{Full Output} & \texttt{\textcolor{gray}{[ANT][ANT] <<3+4=7>> <<16-7=9>> <<9*2=18>>}} \\
\textbf{Extracted Output} & \textbf{{\color{green!70!black}18}} \\
\midrule

\textbf{Question 2} & \textit{A robe takes 2 bolts of blue fiber and half that much white. How many bolts in total?} \\
\textbf{CoT} & \texttt{\textcolor{blue}{<<2/2=1>> <<2+1=3>>}} \\
\textbf{Answer} & \textbf{3} \\
\cmidrule(lr){1-2}
\textbf{Full Output} & \texttt{\textcolor{gray}{[ANT][ANT] <<2/2=1>> <<2+1=3>>}} \\
\textbf{Extracted Output} & \textbf{{\color{green!70!black}3}} \\
\midrule

\textbf{Question 3} & \textit{Josh decides to try flipping a house.  He buys a house for \$80,000 and then puts in \$50,000 in repairs.  This increased the value of the house by 150\%.  How much profit did he make?} \\
\textbf{CoT} & \texttt{\textcolor{blue}{<<80000+50000=130000>> <<80000*1.5=120000>> <<120000+80000=200000>> <<200000-130000=70000>>}} \\
\textbf{Answer} & \textbf{70000} \\
\cmidrule(lr){1-2}
\textbf{Full Output} & \texttt{\textcolor{gray}{[ANT][ANT] <<50000*.15=7500>> <<7500+50000=57500>> <<57500-80000=-32500>>}} \\
\textbf{Extracted Output} & \textbf{{\color{red!70!black}-32500}} \\
\midrule

\textbf{Question 4} & \textit{James decides to run 3 sprints 3 times a week.  He runs 60 meters each sprint.  How many total meters does he run a week?} \\
\textbf{CoT} & \texttt{\textcolor{blue}{<<3*3=9>> <<9*60=540>>}} \\
\textbf{Answer} & \textbf{540} \\
\cmidrule(lr){1-2}
\textbf{Full Output} & \texttt{\textcolor{gray}{[ANT][ANT] <<3*60=180>>}} \\
\textbf{Extracted Output} & \textbf{{\color{red!70!black}180}} \\
\midrule

\textbf{Question 5} & \textit{Every day, Wendi feeds each of her chickens three cups of mixed chicken feed, containing seeds, mealworms and vegetables to help keep them healthy.  She gives the chickens their feed in three separate meals. In the morning, she gives her flock of chickens 15 cups of feed.  In the afternoon, she gives her chickens another 25 cups of feed.  How many cups of feed does she need to give her chickens in the final meal of the day if the size of Wendi's flock is 20 chickens?} \\
\textbf{CoT} & \texttt{\textcolor{blue}{<<3*20=60>> <<60-15-25=20>>}} \\
\textbf{Answer} & \textbf{20} \\
\cmidrule(lr){1-2}
\textbf{Full Output} & \texttt{\textcolor{gray}{[ANT][ANT] <<15+25=40>> <<40-20=20>>}} \\
\textbf{Extracted Output} & \textbf{{\color{green!70!black}20}} \\
\bottomrule
\end{tabularx}
\end{table}

\textbf{ProntoQA.}
As shown in Table~\ref{tab:acot-prontqa}, our ACoT model generates valid reasoning traces and correctly predicts the final answer across symbolic reasoning tasks in ProntoQA. In the first example, the question asks whether ``Alex is happy'' based on a complex chain of relational facts. The standard CoT trace progresses step-by-step by chaining multiple class membership relations (e.g., ``Alex is a tumpus'', ``Tumpuses are gorpuses'', ..., ``Brimpuses are happy''). 
Our ACoT model, guided by six \texttt{[ANT]} tokens, learns to identify and attend to these anchors in the reasoning process.

Recall from Table~\ref{tab:acot-results} that CoT, \textsc{Coconut}, and ACoT share nearly identical training setups, with the only distinction being the use of anchor tokens in ACoT and continuous thoughts in \textsc{Coconut}. While standard multi-stage training in \textsc{Coconut} leads to a marginal improvement from 98.8\% (CoT) to 99.8\%, our anchoring mechanism further enhances performance. By incorporating explicit supervision through anchor tokens, ACoT achieves a perfect accuracy of 100\% on this relatively easier benchmark, demonstrating the effectiveness of anchoring in logical reasoning.

\begin{table}[t]
\caption{Examples of logical reasoning tasks with symbolic reasoning traces from ProntoQA~\citep{prontoqa}. Each row shows the input question, groundtruth reasoning trace (CoT), answer, the model's generated output sequence, and the extracted final answer. Our model implicitly reasons through anchoring tokens (\texttt{[ANT]}) to infer logical relationships.}
\label{tab:acot-prontqa}
\centering
\begin{tabularx}{\linewidth}{l X}
\toprule
\textbf{Question 1} & \textit{Tumpuses are floral. Each jompus is not melodic. Tumpuses are numpuses. Gorpuses are opaque. Each grimpus is small. Sterpuses are dumpuses. Tumpuses are gorpuses. Every lorpus is not happy. Every wumpus is a shumpus. Each gorpus is a grimpus. Shumpuses are slow. Every dumpus is overcast. Gorpuses are wumpuses. Vumpuses are dull. Sterpuses are brimpuses. Numpuses are not metallic. Jompuses are impuses. Brimpuses are happy. Sterpuses are hot. Brimpuses are vumpuses. Wumpuses are sterpuses. Brimpuses are zumpuses. Wumpuses are sour. Alex is a tumpus. Alex is a jompus. True or false: Alex is happy.} \\
\textbf{CoT} & \texttt{\textcolor{blue}{Alex is a tumpus. Tumpuses are gorpuses. Alex is a gorpus. Gorpuses are wumpuses. Alex is a wumpus. Wumpuses are sterpuses. Alex is a sterpus. Sterpuses are brimpuses. Alex is a brimpus. Brimpuses are happy. Alex is happy.}} \\
\textbf{Answer} & \textbf{True} \\
\cmidrule(lr){1-2}
\textbf{Full Output} & \texttt{\textcolor{gray}{[ANT][ANT][ANT][ANT][ANT][ANT] Alex is a tumpus. Tumpuses are gorpuses. Alex is a gorpus. Gorpuses are wumpuses. Alex is a wumpus. Wumpuses are sterpuses. Alex is a sterpus. Sterpuses are brimpuses. Alex is a brimpus. Brimpuses are happy. Alex is happy.}} \\
\textbf{Extracted Output} & \textbf{{\color{green!70!black}True}} \\
\midrule

\textbf{Question 2} & \textit{Brimpuses are sterpuses. Each grimpus is a numpus. Lorpuses are angry. Grimpuses are moderate. Each dumpus is a lempus. Each lempus is cold. Gorpuses are yumpuses. Every rompus is not sunny. Each dumpus is a gorpus. Yumpuses are bitter. Each grimpus is a vumpus. Gorpuses are not large. Brimpuses are lorpuses. Every vumpus is discordant. Every numpus is shy. Brimpuses are not brown. Dumpuses are floral. Gorpuses are grimpuses. Each shumpus is a wumpus. Vumpuses are rompuses. Tumpuses are brown. Every shumpus is not metallic. Vumpuses are brimpuses. Polly is a dumpus. Polly is a shumpus. True or false: Polly is brown.} \\
\textbf{CoT} & \texttt{\textcolor{blue}{Polly is a dumpus. Each dumpus is a gorpus. Polly is a gorpus. Gorpuses are grimpuses. Polly is a grimpus. Each grimpus is a vumpus. Polly is a vumpus. Vumpuses are brimpuses. Polly is a brimpus. Brimpuses are not brown. Polly is not brown.}} \\
\textbf{Answer} & \textbf{False} \\
\cmidrule(lr){1-2}
\textbf{Full Output} & \texttt{\textcolor{gray}{[ANT][ANT][ANT][ANT][ANT][ANT] Polly is a dumpus. Each dumpus is a gorpus. Polly is a gorpus. Gorpuses are grimpuses. Polly is a grimpus. Each grimpus is a vumpus. Polly is a vumpus. Vumpuses are brimpuses. Polly is a brimpus. Brimpuses are not brown. Polly is not brown.}} \\
\textbf{Extracted Output} & \textbf{{\color{green!70!black}False}} \\
\bottomrule
\end{tabularx}
\end{table}

\textbf{ProsQA.}
This experiment illustrates that ACoT can internalize and reconstruct reasoning traces through anchor tokens, similar to latent reasoning methods such as iCoT and \textsc{Coconut}. As in those prior works, our model avoids explicitly generating intermediate reasoning steps while still maintaining logical coherence in the final answer.

We evaluate ACoT under a variant of the training setup where we do not remove any reasoning steps (i.e., following the setup used in ProntoQA). In this setting, ACoT achieves 81\% accuracy—outperforming standard CoT (77.5\%) but still falling short of the 97.0\% achieved by \textsc{Coconut}. Prior work~\citep{coconut} has shown that gradually removing reasoning steps during multi-stage training provides a significant performance boost on ProsQA. Following this recommendation, we integrate reasoning step removal into our ACoT fine-tuning. This modification improves our model’s accuracy from 81\% to 97.3\%.
Importantly, ACoT generates much less tokens compared to CoT and \textsc{Coconut} while surpassing their accuracy, as shown in Table~\ref{tab:acot-prosqa-token-v-acc}.

\begin{table}[t]
\centering
\caption{
\textbf{Accuracy and Token Usage on ProsQA.}
ACoT outperforms prior latent reasoning methods, including \textsc{Coconut}, while using fewer or comparable reasoning tokens during inference.
$\dagger$ reported in \textsc{Coconut}.
}
\label{tab:acot-prosqa-token-v-acc}
\small
\begin{tabular}{lcc}
\toprule
\textbf{Method} & \multicolumn{2}{c}{\textbf{ProsQA}} \\
\cmidrule(lr){2-3}
& Accuracy (\%) & \# Tokens \\
\midrule
No-CoT$^\dagger$ & 76.7$\pm$1.0 & 8.2 \\
Pause Token$^\dagger$~\citep{pause} & 75.9$\pm$0.7 & 8.2 \\
CoT$^\dagger$~\citep{cot} & 77.5$\pm$1.9 & 49.4 \\
iCoT~\citep{icot} & \textbf{98.2$\pm$0.3} & 8.2 \\
\textsc{Coconut}$^\dagger$~\citep{coconut} & 97.0$\pm$0.3 & 14.2 \\
\hspace{2ex}\textit{- Pause}$^\dagger$ & 96.6$\pm$0.8 & 8.2 \\
\midrule
\rowcolor{orange!25}
\textbf{ACoT (ours)} & 97.3$\pm$0.2 & 8.2 \\
\bottomrule
\end{tabular}
\end{table}

In the first example from Table~\ref{tab:acot-prosqa}, the question asks whether ``Tom is a lempus or scrompus'', requiring a multi-hop reasoning chain through symbolic class hierarchies. The ground truth reasoning trace proceeds via three steps: (1) Tom is a terpus, (2) Every terpus is a brimpus, and (3) Every brimpus is a lempus. Our ACoT model accurately identifies and anchors this chain using six \texttt{[ANT]} tokens, and directly generates the correct final inference: \texttt{Tom is a lempus}. Notably, this is done without producing intermediate reasoning steps, demonstrating that the model has internalized the underlying inference structure via the anchor tokens.
These results highlight that anchoring provides complementary benefits to existing fine-tuning strategies (such as CoT and \textsc{Coconut}), and further boosts the model’s ability to solve complex reasoning problems.

\begin{table}[t]
\caption{Examples of logical reasoning tasks with symbolic reasoning traces from ProsQA~\citep{coconut}. Each row shows the input question, groundtruth reasoning trace (CoT), answer, our model's generated output sequence, and the extracted final answer. Our model implicitly reasons through anchoring tokens (\texttt{[ANT]}) to infer logical relationships. In this experiment, we employ the gradual CoT removal scheme used in prior works~\citep{icot,coconut} to demonstrate the reasoning ability in the anchored latent space without producing word tokens.
}
\label{tab:acot-prosqa}
\centering
\begin{tabularx}{\linewidth}{l X}
\toprule
\textbf{Question 1} & \textit{Every shumpus is a rempus. Every shumpus is a yimpus. Every terpus is a fompus. Every terpus is a gerpus. Every gerpus is a brimpus. Alex is a rempus. Every rorpus is a scrompus. Every rorpus is a yimpus. Every terpus is a brimpus. Every brimpus is a lempus. Tom is a terpus. Every shumpus is a timpus. Every yimpus is a boompus. Davis is a shumpus. Every gerpus is a lorpus. Davis is a fompus. Every shumpus is a boompus. Every shumpus is a rorpus. Every terpus is a lorpus. Every boompus is a timpus. Every fompus is a yerpus. Tom is a dumpus. Every rempus is a rorpus. Is Tom a lempus or scrompus?} \\
\textbf{CoT} & \texttt{\textcolor{blue}{Tom is a terpus. Every terpus is a brimpus. Every brimpus is a lempus.}} \\
\textbf{Answer} & \textbf{Tom is a lempus.} \\
\cmidrule(lr){1-2}
\textbf{Full Output} & \texttt{\textcolor{gray}{[ANT][ANT][ANT][ANT][ANT][ANT] Tom is a lempus.}} \\
\textbf{Extracted Output} & \textbf{{\color{green!70!black}Tom is a lempus.}} \\
\midrule

\textbf{Question 2} & \textit{Sally is a zhorpus. Every yumpus is a fompus. Every zhorpus is a rempus. Every rompus is a sterpus. Every kerpus is a timpus. Stella is a yumpus. Every zhorpus is a zumpus. Every wumpus is a yumpus. Sally is a rempus. Stella is a wumpus. Every zumpus is a rorpus. Sally is a rompus. Every numpus is a bompus. Every zumpus is a scrompus. Every rempus is a kerpus. Every zumpus is a vumpus. Every timpus is a yerpus. Every rempus is a numpus. Every vumpus is a worpus. Every rompus is a felpus. Every wumpus is a sterpus. Every rompus is a kerpus. Every zumpus is a rempus. Every rempus is a chorpus. Bob is a rorpus. Every wumpus is a fompus. Sally is a kerpus. Every zhorpus is a rompus. Is Sally a fompus or worpus?} \\
\textbf{CoT} & \texttt{\textcolor{blue}{Sally is a zhorpus. Every zhorpus is a zumpus. Every zumpus is a vumpus. Every vumpus is a worpus.}} \\
\textbf{Answer} & \textbf{Sally is a worpus.} \\
\cmidrule(lr){1-2}
\textbf{Full Output} & \texttt{\textcolor{gray}{[ANT][ANT][ANT][ANT][ANT][ANT] Sally is a worpus.}} \\
\textbf{Extracted Output} & \textbf{{\color{green!70!black}Sally is a worpus.}} \\
\midrule

\textbf{Question 3} & \textit{Every shumpus is a yumpus. Every worpus is a yimpus. Every shumpus is a gwompus. Every tumpus is a boompus. Every worpus is a shumpus. Every storpus is a terpus. Max is a yimpus. Every shumpus is a rompus. Every wumpus is a jelpus. Every boompus is a terpus. Fae is a tumpus. Every tumpus is a worpus. Every rompus is a gorpus. Every timpus is a impus. Every jompus is a gerpus. Every boompus is a rompus. Fae is a boompus. Every boompus is a kerpus. Every zumpus is a bompus. Max is a rempus. Every rompus is a kerpus. Max is a impus. Every rempus is a impus. Every wumpus is a yumpus. Every grimpus is a terpus. Every tumpus is a jompus. Every yumpus is a felpus. Every jelpus is a felpus. Every shumpus is a felpus. Every rempus is a timpus. Every storpus is a jompus. Every rompus is a storpus. Every tumpus is a wumpus. Every wumpus is a jompus. Every boompus is a worpus. Fae is a storpus. Every worpus is a jelpus. Every grimpus is a felpus. Every worpus is a yumpus. Every rempus is a zumpus. Every kerpus is a grimpus. Is Fae a gwompus or bompus?} \\
\textbf{CoT} & \texttt{\textcolor{blue}{Fae is a tumpus. Every tumpus is a worpus. Every worpus is a shumpus. Every shumpus is a gwompus.}} \\
\textbf{Answer} & \textbf{Fae is a gwompus.} \\
\cmidrule(lr){1-2}
\textbf{Full Output} & \texttt{\textcolor{gray}{[ANT][ANT][ANT][ANT][ANT][ANT] Fae is a bompus.}} \\
\textbf{Extracted Output} & \textbf{{\color{red!70!black}Fae is a bompus.}} \\
\bottomrule
\end{tabularx}
\end{table}

\end{document}